\DeclareMathOperator*{\argmin}{argmin}
\def\n{\llbracket n \rrbracket}
\newcommand{\Sn}{\mathfrak{S}_n}
\def\I{\mathcal I}
\def\C{\mathcal C}
\def\E{\mathbb E}
\def\argmax{\mathop{\rm arg\, max}}
\def\argmin{\mathop{\rm arg\, min}}
\def\blackslug{\hbox{\hskip 1pt \vrule width 4pt height 8pt depth 1.5pt
\hskip 1pt}}
\def\qed{\quad\blackslug\lower 8.5pt\null\par}
\def\C{{\cal C}}
\def\P{\mathbf P}
\newcommand{\Prob}[1]{\mathbb{P}\left\{ #1 \right\} }
\newtheorem{property}{{\bf Property}}
\newcommand{\printfnsymbol}[1]{%
	\textsuperscript{\@fnsymbol{#1}}%
}
\title{Dimensionality Reduction and (Bucket) Ranking: \\
a Mass Transportation Approach}
\author{\name{Mastane Achab\thanks{equal contribution}} \email{mastane.achab@telecom-paristech.fr }
	\AND
	\name{Anna Korba\footnotemark[1]} \email{anna.korba@telecom-paristech.fr}
	\AND
	\name{Stephan Cl\'emen\c{c}on} \email{stephan.clemencon@telecom-paristech.fr}\\
	\addr LTCI, T\'el\'ecom ParisTech, Universit\'e Paris-Saclay, France\\
}
\begin{document}

\maketitle

\begin{abstract}
Whereas most dimensionality reduction techniques (\textit{e.g.} PCA, ICA, NMF) for multivariate data essentially rely on linear algebra to a certain extent, summarizing ranking data, viewed as realizations of a random permutation $\Sigma$ on a set of items indexed by $i\in \{1,\ldots,\; n\}$,  is a great statistical challenge, due to the absence of vector space structure for the set of permutations $\mathfrak{S}_n$. It is the goal of this article to develop an original framework for possibly reducing the number of parameters required to describe the distribution of a statistical population composed of rankings/permutations, on the premise that the collection of items under study can be partitioned into subsets/buckets, such that, with high probability, items in a certain bucket are either all ranked higher or else all ranked lower than items in another bucket. In this context, $\Sigma$'s distribution can be hopefully represented in a sparse manner by a \textit{bucket distribution}, \textit{i.e.} a bucket ordering plus the ranking distributions within each bucket. More precisely, we introduce a dedicated distortion measure, based on a mass transportation metric, in order to quantify the accuracy of such representations. The performance of buckets minimizing an empirical version of the distortion is investigated through a rate bound analysis. Complexity penalization techniques are also considered to select the shape of a bucket order with minimum expected distortion. Beyond theoretical concepts and results, numerical experiments on real ranking data are displayed in order to provide empirical evidence of the relevance of the approach promoted.
\end{abstract}

\begin{keywords}
ranking aggregation, dimensionality reduction, bucket order, optimal transport.
\end{keywords}

\section{Introduction}
Recommendation systems and search engines are becoming ubiquitous in modern technological tools. Operating continuously on still more content, use of such tools generate or take as input more and more data.  %The design of machine-learning algorithms, tailored for these data, is crucial in order to optimize the performance of such systems (\textit{e.g.} rank documents by degree of relevance for a specific request in information retrieval, propose a sorted list of items/products to a prospect she/he is most liable to buy in e-commerce).
The scientific challenge relies on the nature of the data feeding or being produced by such algorithms: input or/and output information generally consists of rankings/orderings, expressing \textit{preferences}. Because the number of possible rankings explodes with the number of instances, it is of crucial importance to elaborate dedicated dimensionality reduction methods in order to represent ranking data efficiently. Whatever the type of task considered (supervised, unsupervised), machine-learning algorithms generally rest upon the computation of statistical quantities such as averages or linear combinations of the observed features, representing efficiently the data.
However, summarizing ranking variability is far from straightforward and extending simple concepts such as that of an average or median in the context of preference data raises a certain number of deep mathematical and computational problems.
For instance, whereas it is always possible to define a barycentric permutation (\textit{i.e.} a consensus ranking) given a set of rankings and a metric on the symmetric group, its computation can be very challenging, as evidenced by the increasing number of contributions devoted to the ranking aggregation problem in the machine-learning literature, see \textit{e.g.} \cite{dwork2001rank}, \cite{procaccia2016optimal}, \cite{JKSO16} or \cite{JKS16} among others. Regarding dimensionality reduction, it is far from straightforward to adapt traditional techniques such as Principal Component Analysis and its numerous variants to the ranking setup, the main barrier being the absence of a vector space structure on the set of permutations. Even if one can embed permutations into the Birkhoff polytope (which is the convex hull of the set of permutation matrices, see \cite{clemenccon2010kantorovich},\cite{LindermanMena2017reparameterizing}), the coordinates of the embeddings are highly correlated, and a low-dimensional representation of the original distribution over rankings could not be interpreted in a straightforward manner. In this paper, we develop a novel framework for representing the distribution of ranking data in a simple manner, that is shown to extend, remarkably, consensus ranking in some sense. The rationale behind the approach we promote is that, in many situations encountered in practice, the set of instances may be partitioned into subsets/buckets, such that, with high probability, objects belonging to a certain bucket are either all ranked higher or else all ranked lower than objects lying in another bucket. In such a case, the ranking distribution can be described in a sparse fashion by: 1) a gross ordering structure (related to the buckets) and 2) the marginal ranking distributions associated to each bucket. Precisely, optimal representations are defined here as those associated to a bucket order minimizing a certain distortion measure we introduce, the latter being based on a mass transportation metric on the set of ranking distributions. Noticeably, this distortion measure is shown to admit a very simple closed-form expression, based on the marginal pairwise probabilities solely, when the cost of the mass transportation metric considered is the Kendall's $\tau$ distance and can be thus straightforwardly estimated. In the Kendall's $\tau$ case, we also highlight the fact that distortion minimization over bucket orders, when buckets are singletons, reduces to Kemeny consensus ranking. We establish rate bounds describing the generalization capacity of bucket order representations obtained by minimizing an empirical version of the distortion over collections of bucket orders and address model selection issues related to the choice of the bucket order size/shape. Numerical results are also displayed, providing in particular strong empirical evidence of the relevance of the notion of sparsity considered, which the dimensionality reduction technique introduced is based on.

The article is organized as follows. In section \ref{sec:background}, a few concepts and results pertaining to (Kemeny) consensus ranking are briefly recalled and the extended framework we consider for dimensionality reduction in the ranking context is described at length. Statistical results guaranteeing that optimal representations of reduced dimension can be learnt from ranking observations are established in section \ref{sec:learning}, while numerical experiments are presented in section \ref{sec:exp} for illustration purpose. Some concluding remarks are collected in section \ref{sec:concl}. Technical details are deferred to the Supplementary Material due to space limitations.

\section{Preliminaries - Background}\label{sec:background}
It this section, we introduce the main concepts and
definitions that shall be used in the subsequent analysis. The indicator function of any event $\mathcal{E}$ is denoted by $\mathbb{I}\{\mathcal{E}  \}$, the Dirac mass at any point $a$ by $\delta_a$, the cardinality of any finite subset $A$ by $\#A$.
Here and throughout, a full ranking on a set of items indexed by $\n=\{1,\; \ldots,\; n \}$ is seen as the permutation $\sigma\in \Sn$ that maps any item $i$ to its rank $\sigma(i)$. For any non empty subset $\I\subset \n$, any ranking $\sigma$ on $\n$ naturally defines a ranking on $\I$, denoted by $\Pi_{\I}(\sigma)$ (\textit{i.e.} $\forall i\in \I$, $\Pi_{\I}(\sigma)(i)=1+\sum_{j\in \I\setminus\{i\}}\mathbb{I}\{\sigma(j)<\sigma(i)\}$). If $\Sigma$ is a random permutation on $\Sn$ with distribution $P$, the distribution of $\Pi_{\I}(\Sigma)$ will be referred to as the marginal of $P$ related to the subset $\I$. In particular, for a pair of items $(i,j) \in \n$, the quantity $p_{i,j}=\mathbb{P}\{\Sigma(i)<\Sigma(j)\}$ for $\Sigma \sim P$ is referred to as the pairwise marginal of $P$ and indicates the probability that item $i$ is preferred to (ranked lower than) item $j$ (so $p_{i,j}+p_{j,i}=1$). A bucket order $\C$ (also referred to as a partial ranking in the literature) is a strict partial order defined by an ordered partition of $\n$, \textit{i.e} a sequence $(\mathcal{C}_1, \dots, \mathcal{C}_K)$ of $K\geq 1$ pairwise disjoint non empty subsets (buckets) of $\n$ such that: (1) $\cup_{k=1}^K \mathcal{C}_k=\n$, (2) $\forall(i,j)\in \n^2$, we have: $i\prec_{\C} j$ ($i$ is ranked lower than $j$ in $\C$) iff $\exists k< l$ s.t. $(i,j)\in \C_k\times \C_l$. We write $i \sim_{\C} j$ to mean that $i$ and $j$ belong to the same bucket (and cannot be compared/ordered by means of $\mathcal{C}$). %It defines a strict partial order as follows: $\forall(i,j)\in \n^2$, $i\prec_{\C}j \Leftrightarrow \exists k< l$ s.t. $(i,j)\in \C_k\times \C_l$.
 The items in $\C_1$ have thus the lowest ranks (i.e. they are the most preferred items), whereas those in $\C_K$ have the highest ranks. For any bucket order $\C=(\C_1, \dots, \C_K)$, its number of buckets $K$ is referred to as its \textit{size}, while its \textit{shape} is the vector $\lambda=(\#\mathcal{C}_1, \dots, \#\mathcal{C}_K)$, i.e the sequence of sizes of buckets in $\C$ (verifying $\sum_{k=1}^K \#\C_k=n$). Hence, any bucket order $\C$ of size $n$ corresponds to a full ranking/permutation $\sigma \in \Sn$, whereas the set of all items $\n$ is the unique bucket order of size $1$.

\subsection{Background on Consensus Ranking}%Consensus Ranking - A Statistical Learning View

Given a collection of $N\geq 1$ rankings $\sigma_{1},\; \ldots,\; \sigma_{N}$, consensus ranking, also referred to as ranking aggregation, aims at finding a ranking $\sigma^*\in \Sn$ that best summarizes it. A popular way of tackling this problem, the metric-based consensus approach, consists in solving:
\begin{equation}
	\label{eq:ranking_aggregation}
	\min_{\sigma\in \mathfrak{S}_n}\sum_{s=1}^N d(\sigma,\sigma_{s}),
\end{equation}
where $d(.\,,\,.)$ is a certain metric on $\Sn$. As the set $\Sn$ is of finite cardinality, though not necessarily unique, such a barycentric permutation, called \textit{consensus/median ranking}, always exists.
In Kemeny ranking aggregation, the most widely documented version in the literature, one considers the number of pairwise disagreements as metric, namely the Kendall's $\tau$ distance, see \cite{Kemeny59}:
\begin{equation}\label{eq:Kendall_tau}
\forall (\sigma,\sigma')\in \mathfrak{S}_n^2,\;\; d_{\tau}(\sigma,\sigma')=\sum_{i<j}\mathbb{I}\{(\sigma(i)-\sigma(j)) (\sigma'(i)-\sigma'(j))<0\}.
\end{equation}
\begin{remark}
Many other distances are considered in the literature (see \textit{e.g.} Chapter 11 in \cite{Deza}). In particular, the following distances, originally introduced in the context of nonparametric hypothesis testing, are also widely used.
\begin{itemize}
\item[$\bullet$] {\bf The Spearman $\rho$ distance.} $\forall (\sigma,\sigma')\in \mathfrak{S}_n^2$,
$
d_2(\sigma,\sigma')=\left(\sum_{i=1}^n\left(\sigma(i)-\sigma'(i)
  \right)^2\right)^{1/2}$
\item[$\bullet$] {\bf The Spearman footrule distance.} $\forall (\sigma,\sigma')\in
  \mathfrak{S}_n^2$,
$d_1(\sigma,\sigma')=\sum_{i=1}^n\left\vert\sigma(i)-\sigma'(i)
\right\vert$
\item[$\bullet$] {\bf The Hamming distance.} $\forall (\sigma,\sigma')\in
\mathfrak{S}_n^2$,
$d_H(\sigma,\sigma')=\sum_{i=1}^n\mathbb{I}\{ \sigma(i)\ne \sigma'(i)\}$
\end{itemize}
\end{remark}
The problem \eqref{eq:ranking_aggregation} can be viewed as a $M$-estimation problem in the probabilistic framework stipulating that  the collection of rankings to be aggregated/summarized is composed of $N\geq 1$ independent copies $\Sigma_1,\; \ldots,\; \Sigma_N$ of a generic r.v. $\Sigma$, defined on a probability space $(\Omega,\; \mathcal{F},\; \mathbb{P})$ and drawn from an unknown probability distribution $P$ on $\mathfrak{S}_n$ (\textit{i.e.} $P(\sigma)=\mathbb{P}\{ \Sigma=\sigma \}$ for any $\sigma\in \mathfrak{S}_n$). Just like a median of a real valued r.v. $Z$ is any scalar closest to $Z$ in the $L_1$ sense, a (true) median of distribution $P$ w.r.t. a certain metric $d$ on $\mathfrak{S}_n$ is any solution of the minimization problem:
\begin{equation}\label{eq:median_pb}
\min_{\sigma \in \mathfrak{S}_n}L_P(\sigma),
\end{equation}
where $L_P(\sigma)=\mathbb{E}_{\Sigma \sim P}[d(\Sigma,\sigma)  ]
$ denotes the expected distance between any permutation $\sigma$ and $\Sigma$. In this framework, statistical ranking aggregation consists in recovering a solution $\sigma^*$ of this minimization problem, plus an estimate of this minimum $L^*_P=L_P(\sigma^*)$, as accurate as possible, based on the observations $\Sigma_1,\; \ldots,\; \Sigma_N$. A median permutation $\sigma^*$ can be interpreted as a central value for distribution $P$, while the quantity $L^*_P$ may be viewed as a dispersion measure.
Like problem \eqref{eq:ranking_aggregation}, the minimization problem \eqref{eq:median_pb} has always a solution but can be multimodal.
However, the functional $L_P(.)$ is unknown in practice, just like distribution $P$. Suppose that we would like to avoid rigid parametric assumptions on $P$ and only have access to the dataset $(\Sigma_1,\; \ldots,\; \Sigma_N )$ to find a reasonable approximant of a median. The Empirical Risk Minimization (ERM) paradigm, see \cite{Vapnik}, encourages us to substitute in \eqref{eq:median_pb} the quantity $L_P(\sigma)$ with its statistical version
\begin{equation}\label{eq:emp_risk}
\widehat{L}_N(\sigma)=\frac{1}{N}\sum_{s=1}^Nd(\Sigma_s,\sigma)=L_{\widehat{P}_N}(\sigma),
\end{equation}
where $\widehat{P}_N=(1/N)\sum_{s=1}^N\delta_{\Sigma_s}$ denotes the empirical measure.
The performance of empirical consensus rules, solutions $\widehat{\sigma}_N$ of
 $\min_{\sigma\in \mathfrak{S}_n}\widehat{L}_N(\sigma)$,
 has been investigated in \cite{CKS17}.  Precisely, rate bounds of order $O_{\mathbb{P}}(1/\sqrt{N})$ for the excess of risk $L_P(\widehat{\sigma}_N)-L^*_P$ in probability/expectation have been established and proved to be sharp in the minimax sense, when $d$ is the Kendall's $\tau$ distance. Whereas problem \eqref{eq:ranking_aggregation} is NP-hard in general (see \textit{e.g.} \cite{Hudry08}), in the Kendall's $\tau$ case, exact solutions, referred to as \textit{Kemeny medians}, can be explicitly derived when the pairwise probabilities $p_{i,j}=\mathbb{P}\{  \Sigma(i)<\Sigma(j)\}$, $1\leq i\neq j\leq n$, fulfill the following property, referred to as \textit{stochastic transitivity}.
 \begin{definition}\label{def:stoch_trans} Let $P$ be a probability distribution on $\mathfrak{S}_n$.\\
 %\begin{itemize}
% \item[(i)]
 \noindent $(i)$ Distribution $P$ is said to be (weakly) stochastically transitive iff
 $$
\forall (i,j,k)\in \n^3:\;\;  p_{i,j}\geq 1/2 \text{ and } p_{j,k}\geq 1/2 \; \Rightarrow\; p_{i,k}\geq 1/2.
 $$
 If, in addition, $p_{i,j}\neq 1/2$ for all $i<j$, one says that $P$ is strictly stochastically transitive.\\
 %\item[(ii)]
\noindent $(ii)$ Distribution $P$ is said to be strongly stochastically transitive iff
 $$
 \forall (i,j,k)\in \n^3:\;\;  p_{i,j}\geq 1/2 \text{ and } p_{j,k}\geq 1/2 \; \Rightarrow\; p_{i,k}\geq max(p_{i,j}, p_{j,k}).
 $$ This is equivalent to the following condition (see \cite{davidson1959experimental}):
 $$
\forall (i,j)\in \n^2:\;\;  p_{i,j}\geq 1/2 \; \Rightarrow\; p_{i,k}\geq p_{j,k} \text{ for all } k\in\n\setminus\{i,j\}.
 $$
% \end{itemize}
 \end{definition}
These conditions were firstly introduced in the psychology literature (\cite{fishburn1973binary}, \cite{davidson1959experimental}) and were used recently for the estimation of pairwise probabilities and ranking from pairwise comparisons (\cite{shah2015stochastically}, \cite{shah2015simple}). Examples of stochastically transitive distributions on $\mathfrak{S}_n$ are far from uncommon and include most popular parametric models such as Mallows or Bradley-Terry-Luce-Plackett models, see \textit{e.g.} \cite{Mallows57} or \cite{Plackett75}. When stochastic transitivity holds true, the set of Kemeny medians (see Theorem 5 in \cite{CKS17}) is the set
$\{\sigma\in \mathfrak{S}_n:\; (p_{i,j}-1/2)(\sigma(j)-\sigma(i ))>0 \text{ for all } i<j \text{ s.t. } p_{i,j}\neq 1/2  \}$, and
 the minimum is given by
 \begin{equation}\label{eq:inf}
 L^*_P=\sum_{i<j}\min\{p_{i,j},1-p_{i,j}  \}=\sum_{i<j}\{  1/2-\vert  p_{i,j}-1/2\vert\}.
 \end{equation}
 %The quantity \eqref{eq:inf}, the expected Kendall's $\tau$ distance between the random permutation $\Sigma$ and one of its Kemeny medians, provides a natural measure of the variability of distribution $P$.
  %and, for any $\sigma\in \mathfrak{S}_n$,
 If a strict version of stochastic transitivity is fulfilled, we denote by $\sigma^*_P$ the Kemeny median which is unique and given by the Copeland ranking, that assigns for each $i$ its rank as:
 \begin{equation}\label{eq:sol_SST}
 \sigma^*_P(i)=1+\sum_{j\neq i}\mathbb{I}\{p_{i,j}<1/2  \} \text{ for } 1\leq i\leq n.
 \end{equation}
Assume that the underlying distribution $P$ is strictly stochastically transitive and verifies additionally a certain low-noise condition {\bf NA}$(h)$, defined for $h>0$ by:
\begin{equation}\label{eq:hyp_margin0}
 \min_{i<j}\left\vert p_{i,j}-1/2 \right\vert \ge h.
\end{equation}
This condition is checked in many situations, including most conditional parametric models (see Remark~13 in \cite{CKS17}) under simple assumptions on their parameters. It may be considered as analogous to that introduced in \cite{KB05} in binary classification, and was used to prove fast rates also in ranking, for the estimation of the matrix of pairwise probabilities (see  \cite{shah2015stochastically}) or ranking aggregation (see \cite{CKS17}). Indeed it is shown in \cite{CKS17} that under condition \eqref{eq:hyp_margin0}, the empirical distribution $\widehat{P}_N$ is also strictly stochastically transitive with overwhelming probability, and that the expectation of the excess of risk of empirical Kemeny medians decays at an exponential rate, see Proposition 14 therein. In this case, the nearly optimal solution $\sigma^*_{\widehat{P}_N}$ can be made explicit and straightforwardly computed using Eq. \eqref{eq:sol_SST} based on the empirical pairwise probabilities:
$$\widehat{p}_{i,j}=\frac{1}{N}\sum_{s=1}^N\mathbb{I}\{ \Sigma_s(i)<\Sigma_s(j)  \}.$$
%\begin{equation}\label{eq:pair_prob}
%\widehat{p}_{i,j}=\frac{1}{N}\sum_{k=1}^N\mathbb{I}\{ \Sigma_k(i)<\Sigma_k(j)  \}, \; 1\le i\neq j\le n.
%\end{equation}
 %Otherwise, solving the NP-hard problem $\min_{\sigma \in \mathfrak{S}_n}L_{\widehat{P}_N}(\sigma)$ requires to get an empirical Kemeny median. However, as can be seen by examining the argument of Proposition 14's proof in \cite{CKS17} (which exhibits that the probability of $P$ not being transitive decreases exponentially), the exponential rate bound holds true for any candidate $\widetilde{\sigma}_N$ in $\mathfrak{S}_n$ such that  $\widetilde{\sigma}_N=\sigma^*_{\widehat{P}_N}$ when the empirical distribution is strictly stochastically transitive, and $\widetilde{\sigma}_N=\sigma$ with $\sigma \in \Sn$ arbitrarly chosen otherwise.

 As shall be shown below, the quantity $L_P(\sigma)$ can be seen as a Wasserstein distance between $P$ and the Dirac mass $\delta_{\sigma}$, so that Kemeny consensus ranking can thus be viewed as a radical dimensionality reduction procedure, summarizing $P$ by its closest Dirac measure w.r.t. the distance on the set of probability distributions on $\mathfrak{S}_n$ aforementioned. The general framework for dimensionality reduction developed in the next subsection can be viewed as an extension of consensus ranking.

\subsection{A Mass Transportation Approach to Dimensionality Reduction on $\Sn$}
We now develop a framework, that is shown to extend consensus ranking, for \textit{dimensionality reduction} fully tailored to ranking data exhibiting a specific type of \textit{sparsity}. For this purpose, we consider  the so-termed \textit{mass
transportation} approach to defining metrics on the set of probability distributions on $\mathfrak{S}_n$ as follows, see \cite{Rachev91} (incidentally, this approach is also used in \cite{CJ10} to introduce a specific relaxation of the consensus ranking problem).

\begin{definition}
Let $d:\Sn^2\rightarrow \mathbb{R}_+$ be a metric on $\Sn$ and $q\geq 1$. The $q$-th Wasserstein metric with $d$ as cost function between two probability distributions $P$ and $P'$ on $\mathfrak{S}_n$ is given by:
\begin{equation} \label{eq:metric}
W_{d,q}\left(P,P'  \right)=\inf_{\Sigma\sim P,\; \Sigma' \sim P' }\mathbb{E}\left[ d^q(\Sigma,\Sigma') \right],
\end{equation}
where the infimum is taken over all possible couplings\footnote{Recall that a coupling of two probability distributions $Q$ and $Q'$ is a pair $(U,U')$ of random variables defined on the same probability space such that the marginal distributions of $U$ and $U'$ are $Q$ and $Q'$.} $(\Sigma,\Sigma')$ of $(P,P')$.
\end{definition}
% Observe that the quantity $W_{d_1,1}$ (respectively $W_{d_2,2}$) is the classic $L_1$-Mallows distance (respectively, $L_2$-Mallows distance).
As revealed by the following result, when the cost function $d$ is equal to the Kendall's $\tau$ distance, which case the subsequent analysis focuses on, the Wasserstein metric is bounded by below by the $l_1$ distance between the pairwise probabilities.

\begin{lemma}\label{lem:Kendall} For any probability distributions $P$ and $P'$ on $\mathfrak{S}_n$:
\begin{equation}\label{eq:lower}
W_{d_{\tau},1}\left(P,P'  \right)\geq \sum_{i<j}\vert p_{i,j}-p'_{i,j} \vert.
\end{equation}
The equality holds true when the distribution $P'$ is deterministic (\textit{i.e.} when $\exists \sigma\in \Sn$ s.t. $P'=\delta_{\sigma}$).
\end{lemma}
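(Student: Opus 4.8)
The plan is to bound $\mathbb{E}[d_{\tau}(\Sigma,\Sigma')]$ from below for an \emph{arbitrary} coupling $(\Sigma,\Sigma')$ of $(P,P')$ and only then take the infimum defining $W_{d_{\tau},1}$. First I would exploit the additive structure of the Kendall's $\tau$ distance: by linearity of expectation,
$$\mathbb{E}[d_{\tau}(\Sigma,\Sigma')] = \sum_{i<j}\mathbb{P}\{(\Sigma(i)-\Sigma(j))(\Sigma'(i)-\Sigma'(j))<0\}.$$
Since $\Sigma$ and $\Sigma'$ are permutations there are no ties, so for a given pair $(i,j)$ the event inside the probability is exactly the event that precisely one of the two indicators $X_{ij}=\mathbb{I}\{\Sigma(i)<\Sigma(j)\}$ and $X'_{ij}=\mathbb{I}\{\Sigma'(i)<\Sigma'(j)\}$ equals one. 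In other words the $(i,j)$-summand equals $\mathbb{E}[|X_{ij}-X'_{ij}|]$.

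The key step is then to apply the elementary inequality $\mathbb{E}|X-Y|\ge |\mathbb{E}X-\mathbb{E}Y|$ (a consequence of Jensen's inequality for the convex map $|\cdot|$) pairwise. Crucially, whatever coupling is chosen, the marginal constraints force $\mathbb{E}[X_{ij}]=\mathbb{P}\{\Sigma(i)<\Sigma(j)\}=p_{i,j}$ and likewise $\mathbb{E}[X'_{ij}]=p'_{i,j}$, because these pairwise marginals depend only on $P$ and $P'$ and not on the joint law. Hence each summand is bounded below by $|p_{i,j}-p'_{i,j}|$, and summing over $i<j$ produces the claimed lower bound uniformly in the coupling; taking the infimum yields \eqref{eq:lower}.

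For the equality case, when $P'=\delta_{\sigma}$ the only admissible coupling sets $\Sigma'=\sigma$ almost surely, so $W_{d_{\tau},1}(P,\delta_{\sigma})=\mathbb{E}_{\Sigma\sim P}[d_{\tau}(\Sigma,\sigma)]=L_P(\sigma)$. Now each $X'_{ij}=\mathbb{I}\{\sigma(i)<\sigma(j)\}$ is a constant belonging to $\{0,1\}$, equal to $p'_{i,j}$ itself. I would then check directly that when $X'_{ij}\in\{0,1\}$ is deterministic, the Jensen step above is in fact an equality: if $p'_{i,j}=1$ the summand is $\mathbb{E}[1-X_{ij}]=1-p_{i,j}$, and if $p'_{i,j}=0$ it is $\mathbb{E}[X_{ij}]=p_{i,j}$, both equal to $|p_{i,j}-p'_{i,j}|$. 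Summing over $i<j$ recovers equality in \eqref{eq:lower}.

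I do not expect a serious obstacle here. The only two points requiring care are the observation that the pairwise marginals are coupling-independent — which is precisely what lets the per-pair Jensen bound assemble into a coupling-free lower bound — and the verification that determinism of $P'$ saturates Jensen term by term.
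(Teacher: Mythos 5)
Your proof is correct, and while it shares the paper's starting point --- decomposing $\mathbb{E}[d_{\tau}(\Sigma,\Sigma')]$ pairwise and noting that the $(i,j)$-summand equals $\mathbb{E}\left[\left|\mathbb{I}\{\Sigma(i)<\Sigma(j)\}-\mathbb{I}\{\Sigma'(i)<\Sigma'(j)\}\right|\right]$ with coupling-independent marginals $p_{i,j}$, $p'_{i,j}$ --- the key step is genuinely different. The paper introduces the conditional probabilities $\pi_{i,j}=\mathbb{P}\{\Sigma'(i)<\Sigma'(j)\mid \Sigma(i)<\Sigma(j)\}$, expresses each summand as $p_{i,j}(1-\pi_{i,j})+(1-p_{i,j})(1-\pi_{j,i})$, and \emph{explicitly minimizes} over the feasible $\pi$'s by a case analysis on the sign of $p_{i,j}-p'_{i,j}$, finding that the minimum $\vert p_{i,j}-p'_{i,j}\vert$ is attained at $\pi_{i,j}=1$ or $\pi_{j,i}=1$. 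You replace that optimization with the one-line bound $\mathbb{E}|X-Y|\ge|\mathbb{E}X-\mathbb{E}Y|$, which is shorter, cleaner, and logically tidier (you bound every coupling uniformly before taking the infimum, rather than interchanging the infimum with the sum as the paper's display implicitly does). What the paper's longer route buys is the characterization of \emph{when} the per-pair bound is saturated: the explicit optimality conditions on the $\pi_{i,j}$'s are reused immediately afterwards to show that the coupling $(\Sigma,\Sigma_{\C})$ attains equality when $P'=P_{\C}$, which is the engine of Proposition~\ref{prop:kendall_prop}. Your Jensen argument certifies the inequality but produces no achievability information beyond the deterministic case $P'=\delta_{\sigma}$, which you verify directly and correctly; for the lemma as stated this is fully sufficient, but to recover the paper's follow-up result you would need to supplement it, e.g.\ by checking term by term that the coupling $(\Sigma,\Sigma_{\C})$ makes each $|X_{ij}-X'_{ij}|$ degenerate enough for Jensen to be tight.
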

The proof of Lemma \ref{lem:Kendall} as well as discussions on alternative cost functions (the Spearman $\rho$ distance) are deferred to the Supplementary Material. As shown below, \eqref{eq:lower} is actually an equality for various distributions $P'$ built from $P$ that are of special interest regarding dimensionality reduction.\\

%\noindent{\bf Connection with Consensus Ranking.} Observe that, equipped with this notation, we have: $\forall \sigma\in \mathfrak{S}_n$, $L_{P}(\sigma)=W_{d_{\tau},1}\left(P,\delta_{\sigma}  \right)$. Hence, medians $\sigma^*$ of a probability distribution $P$ (\textit{i.e.} solutions of \eqref{eq:median_pb}) correspond to the Dirac distributions $\delta_{\sigma^*}$ closest to $P$ in the sense of the Wasserstein metric \eqref{eq:metric}. Whereas the space of probability measures on $\Sn$ is of explosive dimension $n!-1$, consensus ranking can be thus somehow viewed as a radical dimension reduction technique, where the original distribution is summarized by a median permutation $\sigma^*$.

\noindent {\bf Sparsity and Bucket Orders.} Here, we propose a way of describing a distribution $P$ on $\Sn$, originally described by $n!-1$ parameters, by finding a much simpler distribution that approximates $P$ in the sense of the Wasserstein metric introduced above under specific assumptions, extending somehow the consensus ranking concept.
Let $2 \leq K\leq n$ and $\C=(\C_1,\; \ldots,\; \C_K)$ be a \textit{bucket order} of $\n$ with $K$ buckets. In order to gain insight into the rationale behind the approach we promote, observe that a distribution $P'$ can be naturally said to be \textit{sparse} if, for all $1\leq k<l\leq K$ and all $(i,j)\in \C_k\times \C_l$ (i.e. $i \prec_{\C} j $), we have $p'_{j,i}=0$, which means that with probability one $\Sigma'(i)<\Sigma'(j)$, when $\Sigma' \sim P'$. In other words, the relative order of two items belonging to two different buckets is deterministic. Throughout the paper, such a probability distribution is referred to as a \textit{bucket distribution} associated to $\C$. Since the variability of a bucket distribution corresponds to the variability of its marginals within the buckets $\C_k$'s, the set $\P_{\C}$ of all bucket distributions associated to $\C$ is of dimension $d_{\C}=\prod_{k\leq K}\#\C_k!-1\leq n!-1$.
A best summary in $\mathbf{P}_{\C}$ of a distribution $P$ on $\Sn$, in the sense of the Wasserstein metric \eqref{eq:metric}, is then given by any solution $P^*_{\C}$ of the minimization problem
\begin{equation}\label{eq:min_transp}
\min_{P'\in \mathbf{P}_{\C}}W_{d_{\tau},1}(P,P').
\end{equation}
Set $\Lambda_{P}(\C)=\min_{P'\in \mathbf{P}_{\C}}W_{d_{\tau},1}(P,P')$ for any bucket order $\C$.\\

\begin{sloppypar}
\noindent {\bf Dimensionality Reduction.} Let $K\leq n$. We denote by $\mathbf{C}_K$ the set of all bucket orders $\C$ of $\n$ with $K$ buckets. If $P$ can be accurately approximated by a probability distribution associated to a bucket order with $K$ buckets, a natural dimensionality reduction approach consists in finding a solution $\C^{*(K)}$ of
 \begin{equation}\label{eq:best_bucket}
\min_{\C\in \mathbf{C}_K}\Lambda_{P}(\C),
\end{equation}
as well as a solution $P^*_{\C^{*(K)}}$ of
\eqref{eq:min_transp} for $\C=\C^{*(K)}$ and a coupling $(\Sigma,\Sigma_{\C^{*(K)}})$ s.t. $\mathbb{E}[d_{\tau}(\Sigma,\Sigma_{\C^{*(K)}})]=\Lambda_{P}(\C^{*(K)})$.\\
\end{sloppypar}

\noindent{\bf Connection with Consensus Ranking.}
Observe that $\cup_{\C\in \mathbf{C}_n}\mathbf{P}_{\C}$ is the set of all Dirac distributions $\delta_{\sigma}$, $\sigma\in \Sn$. Hence, in the case $K=n$, dimensionality reduction as formulated above boils down to solve Kemeny consensus ranking. Indeed, we have:
$\forall \sigma\in \mathfrak{S}_n$, $W_{d_{\tau},1}\left(P,\delta_{\sigma}  \right)=L_{P}(\sigma)$.
Hence, medians $\sigma^*$ of a probability distribution $P$ (\textit{i.e.} solutions of \eqref{eq:median_pb}) correspond to the Dirac distributions $\delta_{\sigma^*}$ closest to $P$ in the sense of the Wasserstein metric \eqref{eq:metric}: $P^*_{\C^{*(n)}}=\delta_{\sigma^*}$ and $\Sigma_{\C^{*(n)}}=\sigma^*$.
Whereas the space of probability measures on $\Sn$ is of explosive dimension $n!-1$, consensus ranking can be thus somehow viewed as a radical dimension reduction technique, where the original distribution is summarized
by a median permutation $\sigma^*$. In constrast, the other extreme case $K=1$ corresponds to no dimensionality reduction at all, \textit{i.e.} $\Sigma_{\C^{*(1)}}=\Sigma$.

\subsection{Optimal Couplings and Minimal Distortion}
Fix a bucket order $\C=(\C_1,\; \ldots,\; \C_K)$. A simple way of building a distribution in $\mathbf{P}_{\C}$ based on $P$ consists in considering the random ranking $\Sigma_{\C}$ coupled with $\Sigma$, that ranks the elements of any bucket $\C_k$ in the same order as $\Sigma$ and whose distribution $P_{\C}$ belongs to $\mathbf{P}_{\C}$:
\begin{equation}
\forall k\in\{1,\; \ldots,\; K  \},\; \forall i\in\C_k,\;\; \Sigma_{\C}(i)=1+\sum_{l<k}\#\C_l+\sum_{j\in \C_k}\mathbb{I}\{\Sigma(j)<\Sigma(i) \},
\end{equation}
which defines a permutation. Distributions $P$ and $P_{\C}$ share the same marginals within the $\C_k$'s and thus have the same intra-bucket pairwise probabilities $(p_{i,j})_{(i,j)\in \mathcal{C}_k^2}$, for all $k\in\{1,\dots,K\}$. Observe that the expected Kendall's $\tau$ distance between $\Sigma$ and $\Sigma_{\C}$ is given by:
\begin{equation}\label{eq:expect_dist_coupling}
\mathbb{E}\left[d_{\tau}\left(\Sigma,\Sigma_{\C} \right)\right]=\sum_{i\prec_{\C}j}p_{j,i}=\sum_{1\leq k<l\leq K}\sum_{(i,j)\in \C_k\times \C_l}p_{j,i},
\end{equation}
which can be interpreted as the expected number of pairs for which $\Sigma$ violates the (partial) strict order defined by the bucket order $\mathcal{C}$. The result stated below shows that $(\Sigma,\Sigma_{\C})$ is \textit{optimal} among all couplings between $P$ and distributions in $\mathbf{P}_{\C}$ in the sense where \eqref{eq:expect_dist_coupling} is equal to the minimum of~\eqref{eq:min_transp}, namely $\Lambda_{P}(\C)$.
\begin{proposition}\label{prop:kendall_prop}
Let $P$ be any distribution on $\Sn$. For any bucket order $\C=(\C_1,\; \ldots,\; \C_K)$, we have:
\begin{equation}\label{eq:mt_criterion}
\Lambda_{P}(\C)=\sum_{i\prec_{\C}j}p_{j,i}.
\end{equation}
\end{proposition}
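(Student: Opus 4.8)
The plan is to prove \eqref{eq:mt_criterion} by sandwiching the quantity $\sum_{i\prec_{\C}j}p_{j,i}$ between a matching upper and lower bound for $\Lambda_{P}(\C)=\min_{P'\in\mathbf{P}_{\C}}W_{d_{\tau},1}(P,P')$. The upper bound will come from a single explicit coupling, and the lower bound from Lemma~\ref{lem:Kendall} applied uniformly over $\mathbf{P}_{\C}$.

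For the upper bound, I would simply exhibit the coupling $(\Sigma,\Sigma_{\C})$ constructed just above the statement. Its second marginal $P_{\C}$ lies in $\mathbf{P}_{\C}$, and since $W_{d_{\tau},1}(P,P_{\C})$ is an infimum of $\mathbb{E}[d_{\tau}(\Sigma,\Sigma')]$ over all couplings, evaluating at this particular one yields
\[
\Lambda_{P}(\C)\le W_{d_{\tau},1}(P,P_{\C})\le \mathbb{E}\!\left[d_{\tau}(\Sigma,\Sigma_{\C})\right]=\sum_{i\prec_{\C}j}p_{j,i},
\]
where the final equality is exactly \eqref{eq:expect_dist_coupling}.

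For the matching lower bound, I would fix an arbitrary $P'\in\mathbf{P}_{\C}$ and invoke Lemma~\ref{lem:Kendall}, giving $W_{d_{\tau},1}(P,P')\ge\sum_{i<j}\vert p_{i,j}-p'_{i,j}\vert$. The key step is to split this sum according to whether a pair $\{i,j\}$ is inter-bucket or intra-bucket. For an inter-bucket pair the defining constraint of $\mathbf{P}_{\C}$ forces the item lying in the later bucket to be ranked above the other with probability one, so the corresponding marginal $p'$ equals $0$ or $1$; a short case check ($i\prec_{\C}j$ versus $j\prec_{\C}i$) then shows that $\vert p_{i,j}-p'_{i,j}\vert$ equals precisely the violation probability $p_{j,i}$ (resp. $p_{i,j}$) contributed to $\sum_{i\prec_{\C}j}p_{j,i}$, so the inter-bucket terms sum exactly to the target. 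The intra-bucket terms are nonnegative and can simply be dropped, yielding $W_{d_{\tau},1}(P,P')\ge\sum_{i\prec_{\C}j}p_{j,i}$ for every $P'\in\mathbf{P}_{\C}$; taking the infimum over $\mathbf{P}_{\C}$ gives $\Lambda_{P}(\C)\ge\sum_{i\prec_{\C}j}p_{j,i}$, and combining with the upper bound closes the argument.

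The main obstacle is the lower bound, and concretely the bookkeeping in the case check: one must verify that the hard constraints $p'_{j,i}=0$ on inter-bucket pairs pin each such term of $\sum_{i<j}\vert p_{i,j}-p'_{i,j}\vert$ down to exactly the quantity it contributes to $\sum_{i\prec_{\C}j}p_{j,i}$, while the only remaining freedom, namely the intra-bucket marginals, can only increase the pairwise $\ell_1$ distance. Conceptually this is what makes the explicit coupling $(\Sigma,\Sigma_{\C})$ optimal: it matches all intra-bucket marginals of $P$ (driving those terms to zero) and respects the bucket order deterministically, so it saturates the bound of Lemma~\ref{lem:Kendall}.
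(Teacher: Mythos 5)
Your proof is correct and follows essentially the same route as the paper: the lower bound comes from Lemma~\ref{lem:Kendall} together with the inter-/intra-bucket split forced by the constraints $p'_{j,i}=0$, and attainment comes from the explicit coupling $(\Sigma,\Sigma_{\C})$. The only cosmetic difference is that you obtain the matching upper bound directly from the trivial inequality $W_{d_{\tau},1}(P,P_{\C})\le \mathbb{E}[d_{\tau}(\Sigma,\Sigma_{\C})]$ and Eq.~\eqref{eq:expect_dist_coupling}, whereas the paper invokes the equality case \eqref{eq:min_dist} established in the proof of Lemma~\ref{lem:Kendall}; both close the sandwich identically.
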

The proof, given in the Supplementary Material, reveals that \eqref{eq:lower} in Lemma \ref{lem:Kendall} is actually an equality when $P'=P_{\C}$
and that $\Lambda_{P}(\C)=W_{d_{\tau},1}\left(P,P_{\C}  \right)=\mathbb{E}\left[d_{\tau}\left(\Sigma,\Sigma_{\C} \right)\right]$. Attention must be paid that it is quite remarkable that, when the Kendall's $\tau$ distance is chosen as cost function, the distortion measure introduced admits a simple closed-analytical form, depending on elementary marginals solely, the pairwise probabilities namely. Hence, the distortion of any bucket order can be straightforwardly estimated from independent copies of $\Sigma$, opening up to the design of practical dimensionality reduction techniques based on empirical distortion minimization, as investigated in the next section.
The case where the cost is the Spearman $\rho$ distance is also discussed in the Supplementary Material: it is worth noticing that, in this situation as well,  the distortion can be expressed in a simple manner, as a function of triplet-wise probabilities namely.

\begin{property}\label{prop:comp}
Let $P$ be stochastically transitive. A bucket order $\C=(\C_1,\; \ldots,\; \C_K)$ is said to agree with Kemeny consensus iff we have: $i \prec_{\C} j$ (i.e. $\exists k<l$, $(i,j)\in \C_k\times \C_l$) $\Rightarrow p_{j,i}\leq 1/2$. % $\forall 1\leq k<l\leq K$, $\forall (i,j)\in \C_k\times \C_l$, $p_{j,i}\leq 1/2$.
\end{property}
As recalled in the previous subsection, the quantity $L^*_P$ can be viewed as a natural dispersion measure of distribution $P$ and can be expressed as a function of the $p_{i,j}$'s as soon as $P$ is stochastically transitive. The remarkable result stated below shows that, in this case and for any bucket order $\C$ satisfying Property \ref{prop:comp}, $P$'s dispersion can be decomposed as the sum of the (reduced) dispersion of the simplified distribution $P_{\C}$ and the minimum distortion $\Lambda_P(\C)$.

\begin{corollary}
\label{cor:median_lambda}
Suppose that $P$ is stochastically transitive. Then, for any bucket order $\C$ that agrees with Kemeny consensus, we have:
\begin{equation}
L^*_{P}=L^*_{P_{\C}}+\Lambda_P(\C).
\end{equation}
\end{corollary}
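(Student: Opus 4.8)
The plan is to evaluate all three quantities through the closed-form expression \eqref{eq:inf}, which applies to any (weakly) stochastically transitive distribution, and then to match terms according to whether a pair of items lies inside a common bucket or across two distinct buckets. Since $P$ is stochastically transitive, \eqref{eq:inf} gives $L^*_P = \sum_{i<j}\min\{p_{i,j}, 1-p_{i,j}\}$, and I would split this sum as
$$L^*_P = \sum_{i<j,\; i\sim_{\C} j}\min\{p_{i,j}, 1-p_{i,j}\} + \sum_{i<j,\; i\not\sim_{\C} j}\min\{p_{i,j}, 1-p_{i,j}\},$$
the goal being to identify the first (intra-bucket) sum with $L^*_{P_\C}$ and the second (inter-bucket) sum with $\Lambda_P(\C)$.

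The first and main step is to compute the pairwise marginals of $P_\C$ and to verify that $P_\C$ is itself stochastically transitive, so that \eqref{eq:inf} may legitimately be applied to it. By construction $P_\C \in \mathbf{P}_\C$ shares the intra-bucket marginals of $P$, so its pairwise probabilities $p^{\C}_{i,j}$ equal $p_{i,j}$ when $i\sim_{\C} j$, while $p^{\C}_{i,j}=1$ when $i\prec_{\C} j$ and $p^{\C}_{i,j}=0$ when $j\prec_{\C} i$. To check stochastic transitivity, I would argue that the associated dominance relation $i\succeq j \Leftrightarrow p^{\C}_{i,j}\geq 1/2$ is the lexicographic combination of the total order on buckets with the within-bucket dominance relation inherited from $P$: for items in distinct buckets the comparison is deterministic and dictated by the bucket indices, while for items sharing a bucket it coincides with that of $P$, which is transitive on that bucket. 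A short case analysis on the bucket indices $b(i),b(j),b(k)$ of a triple then confirms that $P_\C$ is (weakly) stochastically transitive; this verification, although elementary, is the crux of the argument. Applying \eqref{eq:inf} to $P_\C$ and noting that every inter-bucket pair contributes $\min\{1,0\}=0$ yields $L^*_{P_\C}=\sum_{i<j,\; i\sim_{\C} j}\min\{p_{i,j}, 1-p_{i,j}\}$, which is exactly the intra-bucket sum above.

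It remains to handle the inter-bucket sum, and this is where the hypothesis that $\C$ agrees with Kemeny consensus enters. For any pair $\{i,j\}$ lying in two different buckets, exactly one of $i\prec_{\C} j$ or $j\prec_{\C} i$ holds; reindexing so that $i\prec_{\C} j$, Property \ref{prop:comp} gives $p_{j,i}\leq 1/2$, hence $\min\{p_{i,j}, 1-p_{i,j}\}=\min\{p_{i,j}, p_{j,i}\}=p_{j,i}$. Since $\min\{p_{i,j}, 1-p_{i,j}\}$ depends only on the unordered pair, summing over all inter-bucket pairs is the same as summing over the ordered pairs with $i\prec_{\C} j$, so that $\sum_{i<j,\; i\not\sim_{\C} j}\min\{p_{i,j}, 1-p_{i,j}\}=\sum_{i\prec_{\C} j}p_{j,i}=\Lambda_P(\C)$ by Proposition \ref{prop:kendall_prop}. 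Combining the two identities gives $L^*_P = L^*_{P_\C}+\Lambda_P(\C)$. The only genuine obstacle is the transitivity check for $P_\C$ in the second step; once that is in place, the remaining manipulations are bookkeeping on pairs.
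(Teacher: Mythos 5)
Your proof is correct and follows exactly the route the paper intends (the paper in fact states Corollary \ref{cor:median_lambda} without an explicit proof, treating it as a direct consequence): split the closed form \eqref{eq:inf} for $L^*_P$ into intra- and inter-bucket pairs, identify the intra-bucket sum with $L^*_{P_{\C}}$ using the fact that $P_{\C}$ has the same intra-bucket marginals as $P$ and deterministic inter-bucket marginals, and identify the inter-bucket sum with $\Lambda_P(\C)$ via Property \ref{prop:comp} and Proposition \ref{prop:kendall_prop}. Your explicit case-analysis verification that $P_{\C}$ is itself (weakly) stochastically transitive --- the prerequisite for legitimately applying \eqref{eq:inf} to $P_{\C}$ --- is precisely the step the paper leaves implicit, and you carry it out correctly.
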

In the case where $P$ is strictly stochastically transitive, the Kemeny median $\sigma^*_{P}$ of $P$ is unique (see \cite{CKS17}). If $\C$ fulfills Property \ref{prop:comp}, it is also obviously the Kemeny median of the bucket distribution $P_{\C}$.
As shall be seen in the next section, when $P$ fulfills a strong version of the stochastic transitivity property, optimal bucket orders $\C^{*(K)}$ necessarily agree with the Kemeny consensus, which may greatly facilitates their statistical recovery.

\subsection{Related Work}

The dimensionality reduction approach developed in this paper is connected with
the \textit{optimal bucket order} (OBO) problem considered in the literature, see \textit{e.g.} \cite{aledo2017utopia}, \cite{aledo2018approaching}, \cite{feng2008discovering}, \cite{gionis2006algorithms}, \cite{ukkonen2009randomized}. Given the pairwise probabilities ($p_{i,j})_{1 \le i\ne j\le n}$ of a distribution $P$ over $\Sn$, solving the OBO problem consists in finding a bucket order $\C=(\C_1,\; \ldots,\; \C_K)$ that minimizes the following cost:
\begin{equation}\label{eq:OBO2}
\widetilde{\Lambda}_P(\C) = \sum_{i \ne j} |p_{i,j} - \widetilde{p}_{i,j}|,
\end{equation}
%where $\widetilde{p}_{i,j}$ are the pairwise marginals of $\widetilde{P}_{\C}$ and verify:
where $\widetilde{p}_{i,j}=1$ if $i \prec_{\C} j$, $\widetilde{p}_{i,j}=0$ if $j \prec_{\C} i$ and $\widetilde{p}_{i,j}=1/2$ if $i \sim_{\C}j$.
%This implies that the intra-bucket pairwise probabilities $(p_{i,j})_{(i,j) \in \C_k}$ for $1 \le k\le K$ are set to 1/2, when $\widetilde{\Sigma}\sim\widetilde{P}_{\C}$
In other words, the $\widetilde{p}_{i,j}$'s are the pairwise marginals of the bucket distribution $\widetilde{P}_{\C}$ related to $\C$
with independent and uniformly distributed partial rankings $\Pi_{\C_k}(\widetilde{\Sigma}_{\C})$'s for $\widetilde{\Sigma}_{\C}\sim\widetilde{P}_{\C}$.
%One may easily check that the quantity~\eqref{eq:OBO2} is actually equal to twice the distortion $W_{d_{\tau},1}(P,\widetilde{P}_{\C})$, and that:
Moreover, this cost verifies:
\begin{equation}\label{eq:OBO}
\widetilde{\Lambda}_P(\C) =  2\Lambda_P(\C) + \sum_{k=1}^K\sum_{(i,j)\in \C_k^2}\vert p_{i,j}-1/2\vert.
\end{equation}
Observe that solving the OBO problem is much more restrictive than the framework we developed, insofar as no constraint is set about the intra-bucket marginals of the summary distributions solutions of \eqref{eq:best_bucket}.
Another related work is documented in \cite{shah2016feeling, pananjady2017worst} and develops the concept of \textit{indifference sets}. Formally, a family of pairwise probabilities $(\widetilde{p}_{i,j})$ is said to satisfy the indifference set partition (or bucket order) $\C$ when:
\begin{equation}\label{eq:indifference_set_cdt}
\widetilde{p}_{i,j}=\widetilde{p}_{i',j'} \text{ for all quadruples } (i,j,i',j') \text{ such that } i \sim_{\C} i' \text{ and } j \sim_{\C} j',
\end{equation}
which condition also implies that the intra-bucket marginals are s.t. $\widetilde{p}_{i,j}=1/2$ for $i \sim_{\C} j$ (take $i'=j$ and $j'=i$ in~\eqref{eq:indifference_set_cdt}).
Though related, our approach significantly differs from these works, since it avoids stipulating arbitrary distributional assumptions. For instance, it permits in contrast to test \textit{a posteriori}, once the best bucket order $\C^{*(K)}$ is determined for a fixed $K$, statistical hypotheses such as the independence of the bucket marginal components (\textit{i.e.} $\Pi_{\C_k^{*(K)}}(\Sigma)$'s ) or the uniformity of certain bucket marginal distributions. % According to our experience of ranking data processing,
A summary distribution, often very informative and of small dimension both at the same time, is the marginal of the first bucket $\C_1^{*(K)}$ (the top-$m$ rankings where $m=|\C_1^{*(K)}|$).

\section{Empirical Distortion Minimization - Rate Bounds and Model Selection}\label{sec:learning}
In order to recover optimal bucket orders, based on the observation of a training sample $\Sigma_1,\; \ldots,\; \Sigma_N$ of independent copies of $\Sigma$,
Empirical Risk Minimization, the major paradigm of statistical learning, naturally suggests to consider bucket orders $\C=(\C_1,\; \ldots,\; \C_K)$ minimizing the empirical version of the distortion \eqref{eq:mt_criterion}
\begin{equation}\label{eq:emp_distort}
\widehat{\Lambda}_N(\C)=\sum_{i\prec_{\C}j}\widehat{p}_{j,i}=\Lambda_{\widehat{P}_N}(\C),
\end{equation}
where the $\widehat{p}_{i,j}$'s are the pairwise probabilities of the empirical distribution.
For a given shape $\lambda$, we define the Rademacher average
$$
\mathcal{R}_N(\lambda)=\mathbb{E}_{\epsilon_1,\dots,\epsilon_N}\left[\max_{\C\in\mathbf{C}_{K,\lambda}}\frac{1}{N}\left\vert \sum_{s=1}^N\epsilon_s \sum_{i\prec_{\C} j}\mathbb{I}\{\Sigma_s(j)<\Sigma_s(i)\} \right\vert\right],
$$
where $\epsilon_1,\; \ldots,\; \epsilon_N$ are i.i.d. Rademacher r.v.'s (\textit{i.e.} symmetric sign random variables), independent from the $\Sigma_s$'s.
Fix the number of buckets $K\in\{1,\; \ldots,\; n\}$, as well as the bucket order shape $\lambda=(\lambda_1,\; \ldots,\; \lambda_K)\in \mathbb{N}^{*K}$ such that $\sum_{k=1}^K\lambda_k=n$. We recall that $\mathbf{C}_K=\cup_{\lambda'=(\lambda'_1,\dots,\lambda'_K) \in \mathbb{N}^{*K} \text{ s.t. } \sum_{k=1}^K\lambda'_k=n} \mathbf{C}_{K,\lambda'}$. The result stated below describes the generalization capacity of solutions of the minimization problem
\begin{equation}\label{eq:EDM}
\min_{\C\in \mathbf{C}_{K,\lambda}}\widehat{\Lambda}_N(\C),
\end{equation}
over the class $\mathbf{C}_{K,\lambda}$ of bucket orders $\C= (\mathcal{C}_1, \dots, \mathcal{C}_K)$ of shape $\lambda$ (\textit{i.e.} s.t. $\lambda=(\#\mathcal{C}_1, \dots, \#\mathcal{C}_K)$), through a rate bound for their excess of distortion. Its proof is given in the Supplementary material.
\begin{theorem}\label{thm:EDM}
Let $\widehat{C}_{K,\lambda}$ be any empirical distortion minimizer over $\mathbf{C}_{K,\lambda}$, i.e solution of~\eqref{eq:EDM}. Then, for all $\delta\in (0,1)$, we have with probability at least $1-\delta$:
%$$
%\Lambda_P(\widehat{C}_{K,\lambda})-\inf_{\C\in \mathbf{C}_K}\Lambda_P(\C)\leq 2\kappa(\lambda)\sqrt{\frac{\log\left(2\binom{n}{\lambda}\kappa(\lambda)/\delta\right)}{2N}}+\left\{ \inf_{\C\in \mathbf{C}_{K,\lambda}}\Lambda_P(\C)-\inf_{\C\in \mathbf{C}_{K}}\Lambda_P(\C)\right\},
%$$
%where $\kappa(\lambda)=\sum_{k=1}^{K-1}\lambda_k\times (n-\lambda_1-\ldots-\lambda_k)$, $\binom{n}{\lambda}=n!/(\# \C_1 ! \times \cdots \times \# \C_K !) = \# \mathbf{C}_{K,\lambda}$.
$$
\Lambda_P(\widehat{C}_{K,\lambda})-\inf_{\C\in \mathbf{C}_K}\Lambda_P(\C) \le 4\mathbb{E}\left[\mathcal{R}_N(\lambda)\right] + \kappa(\lambda)\sqrt{\frac{2\log(\frac{1}{\delta})}{N}} +\left\{ \inf_{\C\in \mathbf{C}_{K,\lambda}}\Lambda_P(\C)-\inf_{\C\in \mathbf{C}_{K}}\Lambda_P(\C)\right\},
$$
where $\kappa(\lambda)=\sum_{k=1}^{K-1}\lambda_k\times (n-\lambda_1-\ldots-\lambda_k)$.
\end{theorem}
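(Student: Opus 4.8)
The plan is to run the standard empirical risk minimization analysis, treating $\C \mapsto \Lambda_P(\C)$ as a risk functional over the finite class $\mathbf{C}_{K,\lambda}$. First I would rewrite the distortion as the expectation of a bounded loss: for a bucket order $\C$ set $f_{\C}(\sigma) = \sum_{i \prec_{\C} j} \mathbb{I}\{\sigma(j) < \sigma(i)\}$, the number of inter-bucket pairs that $\sigma$ inverts relative to $\C$. Proposition~\ref{prop:kendall_prop} then gives $\Lambda_P(\C) = \mathbb{E}[f_{\C}(\Sigma)]$ and $\widehat{\Lambda}_N(\C) = (1/N)\sum_{s=1}^N f_{\C}(\Sigma_s)$. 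The key observation, specific to this problem, is that for every $\C \in \mathbf{C}_{K,\lambda}$ the loss satisfies $0 \le f_{\C}(\sigma) \le \kappa(\lambda)$, because $\kappa(\lambda) = \sum_{k=1}^{K-1}\lambda_k(n - \lambda_1 - \cdots - \lambda_k) = \sum_{1 \le k < l \le K}\lambda_k\lambda_l$ is exactly the number of pairs $(i,j)$ with $i \prec_{\C} j$, i.e. the number of inter-bucket pairs common to every shape-$\lambda$ bucket order.

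Next I would split the excess distortion into an estimation term and an approximation term,
\begin{equation*}
\Lambda_P(\widehat{C}_{K,\lambda}) - \inf_{\C \in \mathbf{C}_K}\Lambda_P(\C) = \Big(\Lambda_P(\widehat{C}_{K,\lambda}) - \inf_{\C \in \mathbf{C}_{K,\lambda}}\Lambda_P(\C)\Big) + \Big(\inf_{\C \in \mathbf{C}_{K,\lambda}}\Lambda_P(\C) - \inf_{\C \in \mathbf{C}_K}\Lambda_P(\C)\Big),
\end{equation*}
the second summand being exactly the third term in the stated bound, so nothing remains to be done for it. For the estimation term I would use the classical argument: since $\widehat{C}_{K,\lambda}$ minimizes $\widehat{\Lambda}_N$ over $\mathbf{C}_{K,\lambda}$, comparing it to a minimizer $\C^{*}_{\lambda}$ of $\Lambda_P$ over the same class and inserting $\pm\,\widehat{\Lambda}_N$ yields $\Lambda_P(\widehat{C}_{K,\lambda}) - \inf_{\C \in \mathbf{C}_{K,\lambda}}\Lambda_P(\C) \le 2\sup_{\C \in \mathbf{C}_{K,\lambda}}\lvert\widehat{\Lambda}_N(\C) - \Lambda_P(\C)\rvert$.

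It then remains to control the uniform deviation $Z = \sup_{\C \in \mathbf{C}_{K,\lambda}}\lvert\widehat{\Lambda}_N(\C) - \Lambda_P(\C)\rvert$. Because each loss lies in $[0, \kappa(\lambda)]$, replacing a single observation $\Sigma_s$ changes $Z$ by at most $\kappa(\lambda)/N$, so the bounded differences (McDiarmid) inequality gives, with probability at least $1 - \delta$, $Z \le \mathbb{E}[Z] + \kappa(\lambda)\sqrt{\log(1/\delta)/(2N)}$. A standard symmetrization step bounds $\mathbb{E}[Z] \le 2\,\mathbb{E}[\mathcal{R}_N(\lambda)]$, where the Rademacher average is precisely the one defined just before the theorem (the $\max$ over the finite class $\mathbf{C}_{K,\lambda}$ being the supremum).

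Combining these facts and multiplying through by the factor $2$ from the ERM comparison gives, on the same event, $\Lambda_P(\widehat{C}_{K,\lambda}) - \inf_{\C \in \mathbf{C}_{K,\lambda}}\Lambda_P(\C) \le 4\,\mathbb{E}[\mathcal{R}_N(\lambda)] + 2\kappa(\lambda)\sqrt{\log(1/\delta)/(2N)}$, and since $2\sqrt{\log(1/\delta)/(2N)} = \sqrt{2\log(1/\delta)/N}$ this reproduces the first two terms of the bound; adding back the approximation term finishes the argument. The only genuinely problem-specific step is the identification of $\kappa(\lambda)$ as the uniform bound on $f_{\C}$, which fixes the McDiarmid increment; once that range is pinned down the rest is the routine symmetrization-plus-concentration template, and the main thing to watch is the bookkeeping of constants so that the factor $2$ lands inside the square root exactly as displayed.
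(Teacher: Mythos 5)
Your proof is correct and follows essentially the same route as the paper's: the same split into estimation and approximation terms, the ERM comparison yielding $2\max_{\C\in\mathbf{C}_{K,\lambda}}\lvert\widehat{\Lambda}_N(\C)-\Lambda_P(\C)\rvert$, symmetrization to $2\,\mathbb{E}[\mathcal{R}_N(\lambda)]$, and McDiarmid with increment $\kappa(\lambda)/N$, with the constants recombining exactly as you computed. Your explicit identification of $\kappa(\lambda)=\sum_{k<l}\lambda_k\lambda_l$ as the uniform range of the loss $f_{\C}$ is a detail the paper leaves implicit, but it is the same argument.
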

\begin{sloppypar}
We point out that the Rademacher average is of order $O(1/\sqrt{N})$:
$\mathcal{R}_N(\lambda)\le \kappa(\lambda)\sqrt{2\log\left(\binom{n}{\lambda}\right)/N}$ with $\binom{n}{\lambda}=n!/(\# \C_1 ! \times \cdots \times \# \C_K !) = \# \mathbf{C}_{K,\lambda}$, where $\kappa(\lambda)$ is the number of terms involved in~\eqref{eq:mt_criterion}-\eqref{eq:emp_distort} and $\binom{n}{\lambda}$ is the multinomial coefficient, \textit{i.e.} the number of bucket orders of shape $\lambda$.
Putting aside the approximation error, the rate of decay of the distortion excess is classically of order $O_{\mathbb{P}}(1/\sqrt{N})$.
\end{sloppypar}

\begin{sloppypar}
\begin{remark}{\sc (Empirical Distortion Minimization over $\mathbf{C}_{K}$)} We point out that rate bounds describing the generalization ability of minimizers of \eqref{eq:emp_distort}
over the whole class $\mathbf{C}_{K}$ can be obtained using a similar argument. A slight modification of Theorem \ref{thm:EDM}'s proof shows that, with probability larger than $1-\delta$, their excess of distortion is less than $n^2(K-~1)/K\sqrt{\log(n^2(K-1)\#\mathbf{C}_{K}/(K\delta))/(2N)}$.
Indeed, denoting by $\lambda_{\C}$ the shape of any bucket order $\C$ in $\mathbf{C}_K$, $\max_{\C\in \mathbf{C}_K}\kappa(\lambda_{\C})\le n^2(K-1)/(2K)$, the upper bound being attained when $K$ divides $n$ for $\lambda_1=\dots=\lambda_K=n/K$.
In addition, we have: $\#\mathbf{C}_{K}=\sum_{k=0}^K(-1)^{K-k}\binom{K}{k}k^n$.
\end{remark}
\end{sloppypar}
\begin{sloppypar}
\begin{remark}\label{rk:alt_setup}{\sc (Alternative statistical framework)} Since the distortion \eqref{eq:mt_criterion}  involves pairwise comparisons solely, an empirical version could be computed in a statistical framework stipulating that the observations are of pairwise nature, $(\mathbb{I}\{\Sigma_1(i_1)<\Sigma_1(j_1)\},\; \ldots,\; \mathbb{I}\{\Sigma_N(i_N)<\Sigma_N(j_N)\})$, where $\{(i_s,\; j_s),\;  s=1,\; \ldots,\; N\}$, are i.i.d. pairs, independent from the $\Sigma_s$'s, drawn from an unknown distribution $\nu$ on the set $\{(i,j):\; 1\leq i<j\leq n\}$ such that $\nu(\{(i,j)\})>0$ for all $i<j$. Based on these observations, more easily available in most practical applications (see e.g. \cite{chen2013pairwise}, \cite{park2015preference}), the pairwise probability $p_{i,j}$, $i<j$, can be estimated by:
\begin{equation*}
\frac{1}{N_{i,j}}\sum_{s=1}^N\mathbb{I}\{ (i_s,j_s)=(i,j),\;  \Sigma_s(i_s)<\Sigma_s(j_s) \},
\end{equation*}
with $N_{i,j}=\sum_{s=1}^N\mathbb{I}\{ (i_s,j_s)=(i,j)\}$ and the convention $0/0=0$.
\end{remark}
\end{sloppypar}

\begin{remark}{\sc (Low-dimensional representations)}
For any ranking agent described by its intrinsic preferences $\Sigma\sim P$, the challenge of dimensionality reduction consists in avoiding fully observing $\Sigma$.
Given a solution $\widehat{C}_{K,\lambda}$ of \eqref{eq:EDM}, by only asking to the ranking agent to order items inside each bucket $\widehat{C}_{K,\lambda,k}$ for $k\in\{1,\dots,K\}$,
one can reconstruct the associated optimal ranking $\Sigma_{\widehat{C}_{K,\lambda}}$ coupled with $\Sigma$ and verifying (see Eq. \eqref{eq:expect_dist_coupling}):
$$
\mathbb{E}_{\Sigma\sim P}\left[d_{\tau}\left(\Sigma,\Sigma_{\widehat{C}_{K,\lambda}} \right) \Big| \widehat{C}_{K,\lambda} \right] = \Lambda_P(\widehat{C}_{K,\lambda}).
$$
In other words, the expected approximation error (in terms of Kendall's $\tau$ distance) for observing $\Sigma_{\widehat{C}_{K,\lambda}}$ instead of $\Sigma$ is $\Lambda_P(\widehat{C}_{K,\lambda})$,
which is controlled by the generalization bound given in Theorem \ref{thm:EDM}.
This approach actually corresponds to sampling w.r.t. $P_{\widehat{C}_{K,\lambda}}$ instead of $P$, their Wasserstein distance being $W_{d_{\tau},1}\left(P,P_{\widehat{C}_{K,\lambda}}  \right) = \Lambda_P(\widehat{C}_{K,\lambda})$.
\end{remark}

\noindent {\bf Selecting the shape of the bucket order.} A crucial issue in dimensionality reduction is to determine the dimension of the simpler representation of the distribution of interest.  Here we consider a complexity regularization method to select the bucket order shape $\lambda$ that uses a data-driven penalty based on Rademacher averages. %avoids to have recourse to resampling techniques and
Suppose that a sequence $\{(K_m,\lambda_m)\}_{1\le m\le M}$ of bucket order sizes/shapes is given (observe that $M\le \sum_{K=1}^n \binom{n-1}{K-1}=2^{n-1}$). In order to avoid overfitting, consider the complexity penalty given by
\begin{equation}\label{eq:penalty}
%{\sc pen}(n,\lambda)=\frac{\kappa(\lambda)}{\sqrt{N}}\sqrt{2\log\left(2\binom{n}{\lambda}\right)}
%{\sc pen}(n,m)=\frac{\kappa(\lambda_m)}{\sqrt{N}}\sqrt{2\log\left(2\binom{n}{\lambda_m}\right)} + \sqrt{\frac{\log(m)}{N}}
{\sc pen}(\lambda_m,N)=2\mathcal{R}_N(\lambda_m)
\end{equation}
and the minimizer $\widehat{\C}_{K_{\widehat{m}}, \lambda_{\widehat{m}}}$ of the penalized empirical distortion, with
\begin{equation}\label{eq:EDM_penalty}
\widehat{m}=\argmin_{1\le m\le M}\left\{\widehat{\Lambda}_N(\widehat{\C}_{K_m,\lambda_m})+ {\sc pen}(\lambda_m,N)\right\} \text{ and } \widehat{\Lambda}_N(\widehat{\C}_{K,\lambda})=\min_{\C\in \mathbf{C}_{K,\lambda}} \widehat{\Lambda}_N(\C).
\end{equation}
The next result shows that the bucket order thus selected nearly achieves the performance that would be obtained with the help of an oracle, revealing the value of the index $m$ ruling the bucket order size/shape that minimizes $\mathbb{E}[\Lambda_P(\widehat{\C}_{K_m,\lambda_m})]$.
%The proof essentially relies on a distribution-free upper bound for $\mathbb{E}[\max_{\C\in \mathbf{C}_{K,\lambda}}\vert\widehat{\Lambda}_N(\C)-\Lambda_P(\C)\vert]$ and is given in the Supplementary Material.

\begin{theorem} \label{thm:select}{\sc (An oracle inequality)} Let $\widehat{\C}_{K_{\widehat{m}}, \lambda_{\widehat{m}}}$ be any penalized empirical distortion minimizer over $\mathbf{C}_{K_{\widehat{m}},\lambda_{\widehat{m}}}$, i.e solution of~\eqref{eq:EDM_penalty}. Then we have:
\begin{equation*}
%\mathbb{E}\left[\Lambda_P(\widehat{\C}_{K_{\widehat{m}},\lambda_{\widehat{m}}})\right]\leq \min_{m\geq 1}\left\{\min_{\C\in \mathbf{C}_{K_m,\lambda_m}}\Lambda_P(\C)+\frac{\kappa(\lambda_m)}{\sqrt{N}}\sqrt{2\log\left(2\binom{n}{\lambda_m}\right)}   +\sqrt{\frac{\log(m)}{N}}  \right\}+\sqrt{\frac{1}{2N}}.
\mathbb{E}\left[\Lambda_P(\widehat{\C}_{K_{\widehat{m}},\lambda_{\widehat{m}}})\right]\leq \min_{1\le m\le M}\left\{\mathbb{E}\left[\Lambda_P(\widehat{\C}_{K_m,\lambda_m})\right]+ 2\mathbb{E}\left[\mathcal{R}_N(\lambda_m)\right] \right\}+ 5M \binom{n}{2} \sqrt{\frac{\pi}{2N}}.
\end{equation*}
\end{theorem}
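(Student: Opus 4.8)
The plan is to run the classical penalized‑ERM analysis, the two non‑routine features being that the selected index $\widehat m$ and the penalty $\mathrm{pen}(\lambda_m,N)=2\mathcal R_N(\lambda_m)$ are \emph{both} data‑dependent. Throughout write $\widehat\C_m:=\widehat\C_{K_m,\lambda_m}$ for the empirical minimizer over $\mathbf C_{K_m,\lambda_m}$ and introduce the one‑sided uniform deviation $V_m:=\sup_{\C\in\mathbf C_{K_m,\lambda_m}}\bigl(\Lambda_P(\C)-\widehat\Lambda_N(\C)\bigr)$. First I would write, for an arbitrary fixed comparison index $m_0$,
$$\Lambda_P(\widehat\C_{\widehat m})=\bigl(\Lambda_P(\widehat\C_{\widehat m})-\widehat\Lambda_N(\widehat\C_{\widehat m})\bigr)+\widehat\Lambda_N(\widehat\C_{\widehat m})\le V_{\widehat m}+\widehat\Lambda_N(\widehat\C_{\widehat m}),$$
then add and subtract $\mathrm{pen}(\lambda_{\widehat m},N)$ and invoke the very definition \eqref{eq:EDM_penalty} of $\widehat m$ (which gives $\widehat\Lambda_N(\widehat\C_{\widehat m})+\mathrm{pen}(\lambda_{\widehat m},N)\le\widehat\Lambda_N(\widehat\C_{m_0})+\mathrm{pen}(\lambda_{m_0},N)$) to obtain
$$\Lambda_P(\widehat\C_{\widehat m})\le \max_{1\le m\le M}\bigl(V_m-\mathrm{pen}(\lambda_m,N)\bigr)+\widehat\Lambda_N(\widehat\C_{m_0})+\mathrm{pen}(\lambda_{m_0},N).$$

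Next I would dispose of the last two (\emph{oracle}) terms in expectation. For the penalty, $\mathbb E[\mathrm{pen}(\lambda_{m_0},N)]=2\mathbb E[\mathcal R_N(\lambda_{m_0})]$ by definition. For the empirical distortion I claim $\mathbb E[\widehat\Lambda_N(\widehat\C_{m_0})]\le\mathbb E[\Lambda_P(\widehat\C_{m_0})]$: letting $\widetilde\C_{m_0}$ be a (deterministic) minimizer of $\Lambda_P$ over $\mathbf C_{K_{m_0},\lambda_{m_0}}$, optimality of the ERM gives $\widehat\Lambda_N(\widehat\C_{m_0})\le\widehat\Lambda_N(\widetilde\C_{m_0})$ pointwise, while $\mathbb E[\widehat\Lambda_N(\widetilde\C_{m_0})]=\Lambda_P(\widetilde\C_{m_0})=\min_{\C\in\mathbf C_{K_{m_0},\lambda_{m_0}}}\Lambda_P(\C)\le\mathbb E[\Lambda_P(\widehat\C_{m_0})]$, using unbiasedness of the $\widehat p_{i,j}$'s (so that $\mathbb E[\widehat\Lambda_N(\C)]=\Lambda_P(\C)$ for fixed $\C$) and $\Lambda_P(\widehat\C_{m_0})\ge\min_\C\Lambda_P(\C)$. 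Taking expectations and then the minimum over $m_0$ turns these into $\min_{1\le m\le M}\{\mathbb E[\Lambda_P(\widehat\C_m)]+2\mathbb E[\mathcal R_N(\lambda_m)]\}$, matching the oracle part of the statement; it remains to control $\mathbb E[\max_m(V_m-2\mathcal R_N(\lambda_m))]$.

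For the remainder I would pass from the maximum to a sum of positive parts, $\max_m(V_m-2\mathcal R_N(\lambda_m))\le\sum_{m=1}^M (V_m-2\mathcal R_N(\lambda_m))_+$, and treat each summand via concentration. The key is the standard symmetrization inequality $\mathbb E[V_m]\le 2\mathbb E[\mathcal R_N(\lambda_m)]$, which makes the deterministic part of $V_m-2\mathcal R_N(\lambda_m)$ nonpositive and leaves $(V_m-2\mathcal R_N(\lambda_m))_+\le (V_m-\mathbb E V_m)_+ + 2(\mathbb E\mathcal R_N(\lambda_m)-\mathcal R_N(\lambda_m))_+$. Both $V_m$ and $\mathcal R_N(\lambda_m)$ are functions of the i.i.d.\ sample $\Sigma_1,\dots,\Sigma_N$ with bounded differences at most $\kappa(\lambda_m)/N\le\binom n2/N$ (changing a single $\Sigma_s$ shifts every $\widehat\Lambda_N(\C)$ by at most $\kappa(\lambda_m)/N$, where $\kappa(\lambda_m)$ is the number of pairs entering \eqref{eq:emp_distort}, and hence so do their supremum and its symmetrized version). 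McDiarmid's inequality then yields sub‑Gaussian tails which integrate to $\mathbb E[(Z-\mathbb E Z)_+]\le\frac12\binom n2\sqrt{\pi/(2N)}$ for each deviation, and likewise for the lower tail of $\mathcal R_N(\lambda_m)$. Summing the three resulting pieces over the $M$ models gives $\mathbb E[\max_m(V_m-2\mathcal R_N(\lambda_m))]\le \tfrac32 M\binom n2\sqrt{\pi/(2N)}$, which is bounded by the announced $5M\binom n2\sqrt{\pi/(2N)}$ and closes the argument.

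The main obstacle is precisely the simultaneous randomness of $\widehat m$ and of the penalty: a one‑shot symmetrization does not suffice, and one must instead absorb the fluctuations of the data‑dependent Rademacher penalty through its own two‑sided concentration, paying the union‑over‑models factor $M$ in the remainder. Everything else (the symmetrization inequality, the bounded‑difference constant $\binom n2$, and the integrated tail bound) is routine.
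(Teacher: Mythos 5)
Your proof is correct and takes essentially the same route as the paper's, which follows Theorem 8.1 of Boucheron et al.\ (2005): the identical penalized-ERM decomposition, symmetrization to compare the uniform deviation with $2\mathbb{E}[\mathcal{R}_N(\lambda_m)]$, McDiarmid-type concentration applied both to the supremum and to the data-dependent Rademacher penalty, the bound $\kappa(\lambda_m)\le\binom{n}{2}$, and integration of sub-Gaussian tails. The only difference is organizational: you absorb the penalty's fluctuations through an expectation-level positive-part decomposition, whereas the paper splits the deviation at the tail-probability level ($\delta/5$ for the supremum, $2\delta/5$ for $\mathcal{R}_N$), which is why you legitimately obtain the slightly sharper constant $3/2$ in place of the paper's $5$.
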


\noindent {\bf The Strong Stochastic Transitive Case.}
The theorem below shows that, when strong/strict stochastic transitivity properties hold for the considered distribution $P$,  optimal buckets are those which agree with the Kemeny median.
\begin{theorem}
\label{thm:bucket_median}
Suppose that $P$ is strongly/strictly stochastically transitive. Let $K\in\{1,\; \ldots,\; n  \}$ and $\lambda=(\lambda_1,\; \ldots,\; \lambda_K)$ be a given bucket size and shape. Then, the minimizer of the distortion $\Lambda_{P}(\C)$ over $\mathbf{C}_{K,\lambda}$ is unique and given by  $\C^{*(K,\lambda)}=(\C^{*(K,\lambda)}_1,\; \ldots,\; \C^{*(K,\lambda)}_K)$, where
\begin{equation}\label{eq:opt_bucket}
\C^{*(K,\lambda)}_{k}=\left\{i\in \n: \;\; \sum_{l<k} \lambda_l< \sigma^*_P(i)\leq  \sum_{l\leq k} \lambda_l \right\} \text{ for } k\in\{1,\; \ldots,\; K\}.
\end{equation}
In addition, for any $\C\in \mathbf{C}_{K,\lambda}$, we have:
\begin{equation}\label{eq:excess_distort}
\Lambda_{P}(\C)-\Lambda_P(\C^{*(K,\lambda)})\geq 2\sum_{j\prec_{\C}i}( 1/2-p_{i,j}) \cdot \mathbb{I}\{ p_{i,j}<1/2 \}.
\end{equation}
%Let $\C=(\C_1,\; \ldots,\; \C_K)$ be a bucket order of $\n$ with $K$ buckets that does not satisfy Property \ref{prop:comp}. Then,
%$$
%\Lambda_{P}(\C^{*})<\Lambda_{P}(\C),
%$$
%where $\C^{*(K,\lambda)}=(\C^{*(K,\lambda)}_1,\; \ldots,\; \C^{*(K,\lambda)}_K)$ is the bucket order defined by
%$$
%\C^{*(K,\lambda)}_{k}=\left\{i\in \n: \;\; \sum_{l<k}\# \C_l< \sigma^*_P(i)\leq  \sum_{l\leq k}\# \C_l \right\} \text{ for } k\in\{1,\; \ldots,\; K\}.
%$$
\end{theorem}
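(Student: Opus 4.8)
The plan is to reduce everything to the closed form $\Lambda_P(\C)=\sum_{i\prec_{\C}j}p_{j,i}$ supplied by Proposition~\ref{prop:kendall_prop}, and to exploit that, under strict stochastic transitivity, the Kemeny median $\sigma^*_P$ is the unique Copeland ranking~\eqref{eq:sol_SST}, so that $\C^{*(K,\lambda)}$ of~\eqref{eq:opt_bucket} is exactly the partition of $\n$ into the consecutive rank-intervals $(\sum_{l<k}\lambda_l,\sum_{l\le k}\lambda_l]$ read off from $\sigma^*_P$. To lighten notation I would relabel the items so that $\sigma^*_P(i)=i$; then $p_{i,j}>1/2\iff i<j$, and for a pair $i<j$ I write $m_{ij}=\min(p_{i,j},p_{j,i})=p_{j,i}<1/2$. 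Both assertions—uniqueness of the minimizer and the excess bound—will follow from a single \emph{exchange} operation that moves one pair of items between two buckets.

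First I would prove optimality and uniqueness. If $\C\in\mathbf{C}_{K,\lambda}$ differs from $\C^{*(K,\lambda)}$, then since both have shape $\lambda$ the sequence of bucket indices, read in the order $1,2,\dots,n$ of $\sigma^*_P$, cannot be nondecreasing (a nondecreasing such sequence with block counts $\lambda_1,\dots,\lambda_K$ is forced to be $\C^{*(K,\lambda)}$). Hence there is an inversion: a pair $i<j$ with $i$ lying in a strictly later bucket than $j$. Forming $\C'$ by exchanging $i$ and $j$ preserves the shape, and I would compute $\Lambda_P(\C')-\Lambda_P(\C)$ pairwise: only $\{i,j\}$ and the pairs $\{i,m\},\{j,m\}$ for $m$ sitting in the buckets of $i$, of $j$, or strictly in between can change status. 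The $\{i,j\}$ contribution changes by $m_{ij}-(1-m_{ij})=2m_{ij}-1<0$, while for each relevant $m$ the combined change of $\{i,m\}$ and $\{j,m\}$ is $\le 0$ by strong stochastic transitivity in the form $p_{i,j}\ge 1/2\Rightarrow p_{i,m}\ge p_{j,m}$ for all $m$. Thus $\Lambda_P(\C')<\Lambda_P(\C)$ (strict, using $m_{ij}<1/2$ from strict transitivity), so no $\C\neq\C^{*(K,\lambda)}$ minimizes $\Lambda_P$, which yields the stated form together with uniqueness.

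For the quantitative bound I would start from the decomposition, valid for every $\C$ of shape $\lambda$,
\[
\Lambda_P(\C)=\sum_{\{i,j\}\ \text{split}}m_{ij}+2\sum_{i<j,\ j\prec_{\C}i}\left(p_{i,j}-\tfrac12\right),
\]
where ``split'' means $i$ and $j$ lie in different buckets and the second sum runs over the $\C$-misordered pairs (the preferred item in the later bucket). Since $\C^{*(K,\lambda)}$ has no misordered pairs, its second sum vanishes, so $\Lambda_P(\C)-\Lambda_P(\C^{*(K,\lambda)})$ equals this misordering term plus the difference of the two split-sums. Because the families of split pairs of $\C$ and of $\C^{*(K,\lambda)}$ genuinely differ, I would not compare the split-sums globally but would instead sort $\C$ into $\C^{*(K,\lambda)}$ through a sequence of the exchanges above: each exchange of an inversion $i<j$ lowers the distortion by \emph{at least} its direct contribution $2(p_{i,j}-1/2)$, the collateral contributions being nonnegative by strong stochastic transitivity. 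Summing these guaranteed decreases along a sorting path produces a lower bound on the excess distortion of the type asserted in~\eqref{eq:excess_distort}.

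The main obstacle is the bookkeeping in this second step: every collateral term produced when two items are transported across several intervening buckets must be shown to carry the favorable sign, and this is exactly where \emph{strong} (not merely weak) stochastic transitivity is indispensable—weak transitivity pins down the sign of $p_{i,j}-1/2$ but not the monotonicity $p_{i,m}\ge p_{j,m}$ needed to dominate the collateral exchanges. A secondary subtlety is to organize the sorting path—always exchanging an inversion between \emph{adjacent} buckets, where the collateral is minimal—so that the accumulated direct decreases reproduce precisely the sum appearing in~\eqref{eq:excess_distort} rather than a strictly smaller quantity; this is the delicate accounting I would carry out last.
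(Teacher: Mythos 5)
Your treatment of the first assertion is correct and coincides with the paper's own proof: a single exchange of a misordered pair $\{i,j\}$ changes the pair's own contribution by $2p_{i,j}-1<0$, and every collateral term involving a third item $m$ (in the bucket of $i$, of $j$, or in between) is nonpositive precisely because strong stochastic transitivity gives $p_{i,j}\ge 1/2\Rightarrow p_{i,m}\ge p_{j,m}$; weak transitivity would indeed not suffice, as you note. Your split/misorder decomposition $\Lambda_P(\mathcal{C})=\sum_{\mathrm{split}}m_{ij}+2\sum_{\mathrm{misordered}}(p_{i,j}-1/2)$ is also correct, and your reading of \eqref{eq:excess_distort} as twice the sum of margins over the pairs on which $\mathcal{C}$ contradicts $\sigma^*_P$ is the intended one (the indexing in the displayed formula is a sign/index slip carried over from the paper).

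The ``delicate accounting'' you postpone to the end is, however, not a technicality: it cannot be carried out, because the target inequality itself fails. A single swap can repair several misordered pairs at once while the exchange lemma only credits the direct margin of the swapped pair; equivalently, in your decomposition the split-sum of $\mathcal{C}^{*(K,\lambda)}$ can \emph{exceed} that of $\mathcal{C}$, since a pair may be intra-bucket in $\mathcal{C}$ yet split in $\mathcal{C}^{*(K,\lambda)}$. Concretely, take $n=3$ with $p_{1,2}=0.6$, $p_{2,3}=0.7$, $p_{1,3}=0.8$ (strongly and strictly stochastically transitive; realized e.g.\ by putting masses $0.5,0.1,0.2,0.2$ on the orderings $1\!-\!2\!-\!3$, $1\!-\!3\!-\!2$, $2\!-\!1\!-\!3$, $3\!-\!2\!-\!1$, best item first), $\lambda=(1,2)$ and $\mathcal{C}=(\{3\},\{1,2\})$. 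Then $\Lambda_P(\mathcal{C})=p_{1,3}+p_{2,3}=1.5$, $\mathcal{C}^{*(K,\lambda)}=(\{1\},\{2,3\})$ with $\Lambda_P(\mathcal{C}^{*(K,\lambda)})=p_{2,1}+p_{3,1}=0.6$, so the excess is $0.9$, whereas twice the sum of margins over the two misordered pairs $\{3,1\}$ and $\{3,2\}$ is $2(0.3+0.2)=1.0>0.9$. Accordingly, no sorting path reproduces that sum: swapping $\{3,1\}$ reaches $\mathcal{C}^{*(K,\lambda)}$ in one step and credits only $0.6$, while the path through $(\{2\},\{1,3\})$ credits $2(0.2+0.1)=0.6$; the deficit is exactly your ``culprit'' pair $\{1,2\}$, intra-bucket in $\mathcal{C}$ but split in $\mathcal{C}^{*(K,\lambda)}$ at cost $m_{12}=0.4$. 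You should know that the paper's own proof hides the identical lacuna behind the words ``it follows by induction'' after the one-swap computation, so your proposal faithfully reproduces the published argument up to and including its unjustified step; what the exchange lemma actually yields is the accumulated bound over the pairs swapped along a chosen path (in particular $2\max$ of a single disagreement margin), not the full sum in \eqref{eq:excess_distort} — a weakening that also affects the variance-control step in the proof of Theorem \ref{thm:fast}, which invokes \eqref{eq:excess_distort} in the failing form.
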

%This result can be explained as follows. We recall that under strict stochastic transitivity, the Kemeny median $\sigma^*_P$ is unique and given by the Copeland ranking (see section~\ref{sec:background}). Then, the strongly stochastic transitivity is required to prove that the criterion $\Lambda_{P}(\C)$ is minimized by the unique bucket order in $\mathbf{C}_{K,\lambda}$ satisfying Property~\ref{prop:comp}, refer to the Supplementary Material for the detailed proof.
%If the hypotheses of Theorem \ref{thm:bucket_median} and condition \eqref{eq:hyp_margin0} are fulfilled, when $\widehat{P}_N$ is strictly stochastically transitive (which then happens with overwhelming probability, see Proposition 14 in \cite{CKS17}), the computation of $\sigma^*_{\widehat{P}_N}$ is immediate, as well as that of $\widehat{C}_{K,\lambda}$, plugging the empirical Kemeny median into \eqref{eq:opt_bucket}.
In other words, $\C^{*(K,\lambda)}$ is the unique bucket in $\mathbf{C}_{K,\lambda}$ that agrees with $\sigma^*_P$ (\textit{cf} Property \ref{prop:comp}).
Hence, still under the hypotheses of Theorem \ref{thm:bucket_median}, the minimizer $\C^{*(K)}$ of \eqref{eq:best_bucket} also agrees with $\sigma^*_P$
and corresponds to one of the $\binom{n-1}{K-1}$ possible segmentations of the ordered list $(\sigma_P^{*-1}(1),\dots,\sigma_P^{*-1}(n))$ into $K$ segments. This property paves the way
to design efficient procedures, such as the \textsc{BuMeRank} algorithm described in Appendix B, for recovering bucket order representations with a fixed distortion rate of minimal dimension, avoiding to specify the size/shape in advance. If, in addition, condition \eqref{eq:hyp_margin0} is fulfilled, when $\widehat{P}_N$ is strictly stochastically transitive (which then happens with overwhelming probability, see Proposition 14 in \cite{CKS17}), the computation of the empirical Kemeny median $\sigma^*_{\widehat{P}_N}$ is immediate from formula \eqref{eq:sol_SST} (replacing $P$ by $\widehat{P}_N$), as well as an estimate of $\C^{*(K,\lambda)}$, plugging $\sigma^*_{\widehat{P}_N}$ into \eqref{eq:opt_bucket} as implemented in the experiments below. When the empirical distribution $\widehat{P}_N$ is not stochastically transitive, which happens with negligible probability, the empirical median can be classically replaced by any permutation obtained from the Copeland score by breaking ties at random.
The following result shows that, in the strict/strong stochastic transitive case, when the low-noise condition $\textbf{NA}(h)$ is fulfilled, the excess of distortion of the empirical minimizers is actually of order $O_{\mathbb{P}}(1/N)$.

\begin{theorem}\label{thm:fast}{(\sc Fast rates)}
Let $\lambda$ be a given bucket order shape and $\widehat{C}_{K,\lambda}$ any empirical distortion minimizer over $\mathbf{C}_{K,\lambda}$. Suppose that $P$ is strictly/strongly stochastically transitive and fulfills condition \eqref{eq:hyp_margin0}. Then, for any $\delta>0$, we have with probability $1-\delta$:
$$
\Lambda_P(\widehat{\C}_{K,\lambda})-\Lambda_P(\C^{*(K,\lambda)})\leq \left(\frac{2^{\binom{n}{2}+1}n^2}{h}\right) \times \frac{\log\left( \binom{n}{\lambda}/\delta \right)}{N}.
$$
\end{theorem}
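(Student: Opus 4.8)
The plan is to upgrade the slow $O(1/\sqrt N)$ control of Theorem~\ref{thm:EDM} to the fast $O(1/N)$ rate by combining the deterministic excess-distortion lower bound of Theorem~\ref{thm:bucket_median} with a Bernstein-type deviation argument, the low-noise condition~\eqref{eq:hyp_margin0} supplying the variance control that makes this possible. Write $\C^*:=\C^{*(K,\lambda)}$ for the unique minimizer of $\Lambda_P$ over $\mathbf{C}_{K,\lambda}$, which exists and is explicit by Theorem~\ref{thm:bucket_median}. For each $\C\in\mathbf{C}_{K,\lambda}$ I would introduce the per-observation contrast $W_\C(\sigma)=\sum_{i\prec_\C j}\mathbb{I}\{\sigma(j)<\sigma(i)\}-\sum_{i\prec_{\C^*}j}\mathbb{I}\{\sigma(j)<\sigma(i)\}$, so that the excess distortion is $\Delta(\C):=\Lambda_P(\C)-\Lambda_P(\C^*)=\mathbb{E}[W_\C(\Sigma)]$ and its empirical version is $\widehat{\Delta}(\C)=\tfrac1N\sum_{s=1}^N W_\C(\Sigma_s)$. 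Since $\widehat{\C}_{K,\lambda}$ minimizes $\widehat{\Lambda}_N$, one has $\widehat{\Delta}(\widehat{\C}_{K,\lambda})\le 0$, hence for any threshold $u>0$ the event $\{\Delta(\widehat{\C}_{K,\lambda})>u\}$ is contained in $\bigcup_{\C:\,\Delta(\C)>u}\{\Delta(\C)-\widehat{\Delta}(\C)\ge\Delta(\C)\}$; everything then reduces to bounding, for each fixed $\C$, the lower deviation of $\widehat{\Delta}(\C)$ below its mean.

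The first ingredient is the \emph{margin}. By the lower bound~\eqref{eq:excess_distort} combined with~\eqref{eq:hyp_margin0}, every surviving summand obeys $1/2-p_{i,j}\ge h$, so that $\Delta(\C)\ge 2h\,n_\C$, where $n_\C$ is the number of pairs on which $\C$ disagrees with $\C^*$; in particular $\Delta(\C)\ge 2h>0$ for all $\C\neq\C^*$, a uniform gap separating $\C^*$ from its competitors. The second, decisive, ingredient is a Bernstein (variance-to-mean) condition. Since $W_\C$ is a signed sum over disagreement pairs only, its variance is controlled by the number of such pairs; coupling this with $\Delta(\C)\ge 2h\,n_\C$ should yield $\mathrm{Var}(W_\C(\Sigma))\le (c_n/h)\,\Delta(\C)$ for a purely combinatorial constant $c_n$. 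Establishing this variance bound with the correct constant is the main obstacle: one must bound $\mathbb{E}[W_\C(\Sigma)^2]$ by a multiple of $\mathbb{E}[W_\C(\Sigma)]$ despite the sign cancellations inside $W_\C$, and it is exactly the crude control of these cancellations — passing through all $2^{\binom n2}$ possible pairwise comparison patterns of $\Sigma$ — that produces the factor $2^{\binom n2+1}$ in the final constant, while the factor $n^2$ tracks the range $\|W_\C\|_\infty\le 2\kappa(\lambda)\le 2\binom n2$.

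With the gap and the Bernstein condition in hand, the remainder is routine. Fixing $\C$ with $\Delta(\C)>0$ and applying Bernstein's inequality to the i.i.d.\ centred variables $W_\C(\Sigma_s)-\Delta(\C)$, of range $R=O(n^2)$ and variance at most $(c_n/h)\Delta(\C)$, gives $\mathbb{P}(\widehat{\Delta}(\C)\le 0)\le \exp(-N\Delta(\C)^2/(2(c_n/h)\Delta(\C)+\tfrac23 R\,\Delta(\C)))=\exp(-N\Delta(\C)/(2c_n/h+\tfrac23 R))$, where one power of $\Delta(\C)$ cancels — precisely the mechanism that turns $1/\sqrt N$ into $1/N$. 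A union bound over the $\binom n\lambda=\#\mathbf{C}_{K,\lambda}$ bucket orders of shape $\lambda$ contributes the $\log\binom n\lambda$ term, and setting the resulting bound equal to $\delta$ and solving for the largest admissible $u$ yields $\Delta(\widehat{\C}_{K,\lambda})\le (2c_n/h+\tfrac23 R)\,N^{-1}\log(\binom n\lambda/\delta)$, which after inserting the explicit values of $c_n$ and $R$ is the announced bound. Finally, the uniform gap $\Delta(\C)\ge 2h$ makes the statement self-consistent: once $N$ is large enough that the right-hand side falls below $2h$, the bound forces $\Delta(\widehat{\C}_{K,\lambda})=0$, i.e.\ exact recovery $\widehat{\C}_{K,\lambda}=\C^*$ with probability at least $1-\delta$, whereas for smaller $N$ it holds trivially since the excess distortion never exceeds $\kappa(\lambda)$.
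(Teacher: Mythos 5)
Your skeleton coincides with the paper's proof: your $W_{\C}$ is exactly the paper's statistic $D(\C)$, you use $\widehat{\Delta}(\widehat{\C}_{K,\lambda})\le 0$ in the same way, then Bernstein's inequality, a union bound over the $\binom{n}{\lambda}$ bucket orders of shape $\lambda$, and solving the resulting inequality in the excess distortion. However, there is a genuine gap at precisely the point you yourself flag as ``the main obstacle'': the variance-to-mean condition $\mathrm{Var}(W_{\C}(\Sigma))\le (c_n/h)\,\Delta(\C)$ is asserted (``should yield''), not proved, and the heuristic you offer for it does not deliver it. Saying that $W_{\C}$ is a signed sum over the disagreement pairs only gives $|W_{\C}|\le d_{\C}$ and hence $\mathrm{Var}(W_{\C})\le d_{\C}^2$ --- \emph{quadratic}, not linear, in the disagreement count; the linear dependence is the entire content of the lemma and requires an actual argument. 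The paper (Lemma \ref{lem:var_control}) obtains it by recycling the swap construction from the proof of Theorem \ref{thm:bucket_median}: $\C$ is transformed into $\C^{*(K,\lambda)}$ by successively swapping misordered pairs between their buckets; each single swap changes $D$ by a sum of at most order $2n$ indicator differences, hence has variance at most $4n^2$; a crude induction via $\mathrm{var}(X+Y)\le 2\,\mathrm{var}(X)+2\,\mathrm{var}(Y)$ over the at most $\binom{n}{2}$ swaps yields $\mathrm{var}(D(\C))\le 2^{\binom{n}{2}-1}\,(4n^2)\, m_{\C}$ with $m_{\C}=\#\{(i,j):\,i\prec_{\C}j \text{ and } p_{j,i}>1/2\}$; and \eqref{eq:excess_distort} combined with \eqref{eq:hyp_margin0} converts $m_{\C}$ into $\mathbb{E}[D(\C)]/(2h)$. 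In particular, your explanation of the constant --- ``passing through all $2^{\binom{n}{2}}$ possible pairwise comparison patterns of $\Sigma$'' --- misidentifies its origin: the factor $2^{\binom{n}{2}}$ comes from the repeated doubling in this variance-of-a-sum induction, not from enumerating comparison profiles.

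A second, smaller, inaccuracy: your margin inequality $\Delta(\C)\ge 2h\,n_{\C}$ with $n_{\C}$ ``the number of pairs on which $\C$ disagrees with $\C^*$'' overclaims what \eqref{eq:excess_distort} gives. The sum there runs only over pairs \emph{strictly misordered} by $\C$ relative to the Kemeny order (pairs with $j\prec_{\C}i$ and $p_{i,j}<1/2$); a pair lying in one bucket of $\C$ but split by $\C^{*(K,\lambda)}$ (or vice versa) is a disagreement that contributes nothing to the lower bound. The correct statement is $\Delta(\C)\ge 2h\,m_{\C}$ with $m_{\C}\le n_{\C}$ the misordered-pair count, and this weaker version is what both the gap $\Delta(\C)\ge 2h$ for $\C\neq\C^{*(K,\lambda)}$ and the variance lemma actually use. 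Finally, note that the uniform gap plays no role in the paper's derivation --- the cancellation $\Delta(\C)^2/\Delta(\C)$ inside Bernstein's exponent already produces the $1/N$ rate uniformly over $\C$ --- so your closing exact-recovery remark is a correct corollary but not a needed ingredient. Everything downstream of the missing lemma in your write-up matches the paper.
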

The proof is given in the Appendix section.

\section{Numerical Experiments on Real-world Datasets}\label{sec:exp}

%\subsection{Real-world datasets}

\begin{figure}[htp]

\centering
\includegraphics[width=.325\textwidth]{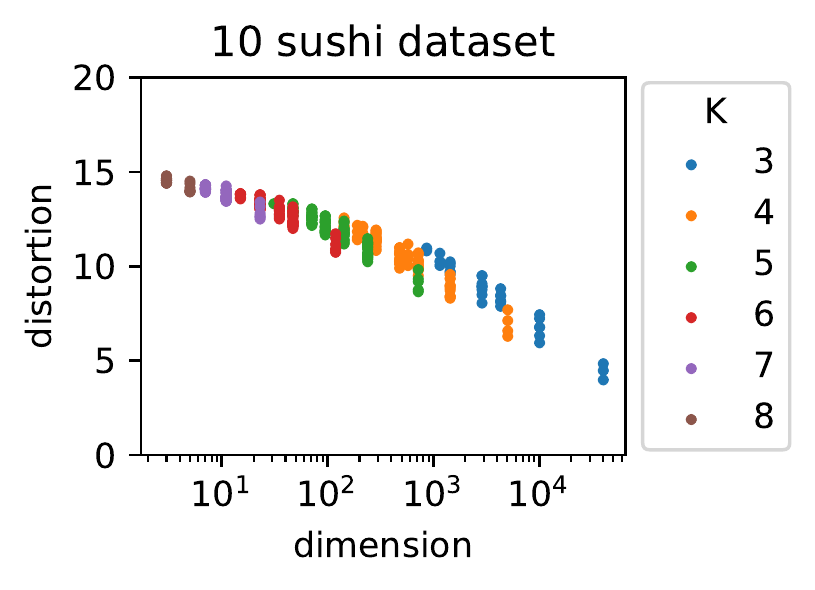}
\includegraphics[width=.325\textwidth]{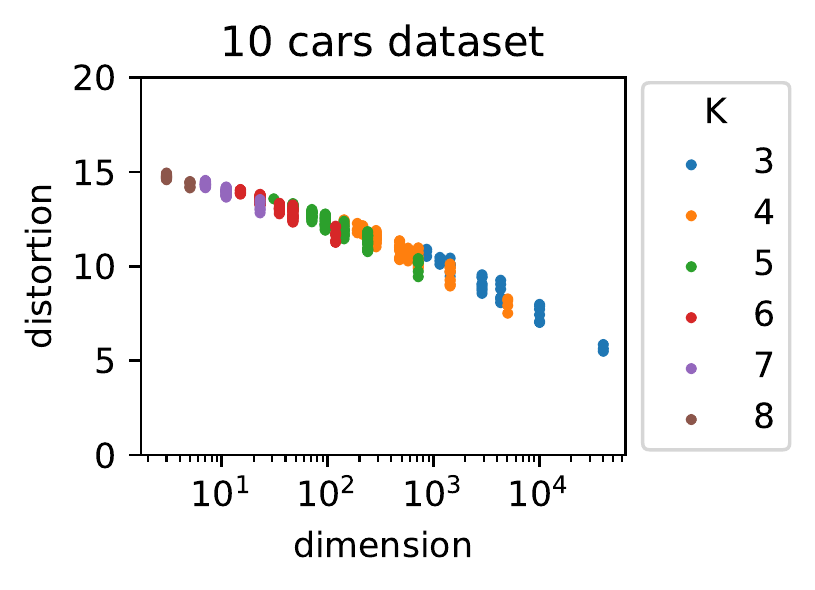}\hfill
\includegraphics[width=.335\textwidth]{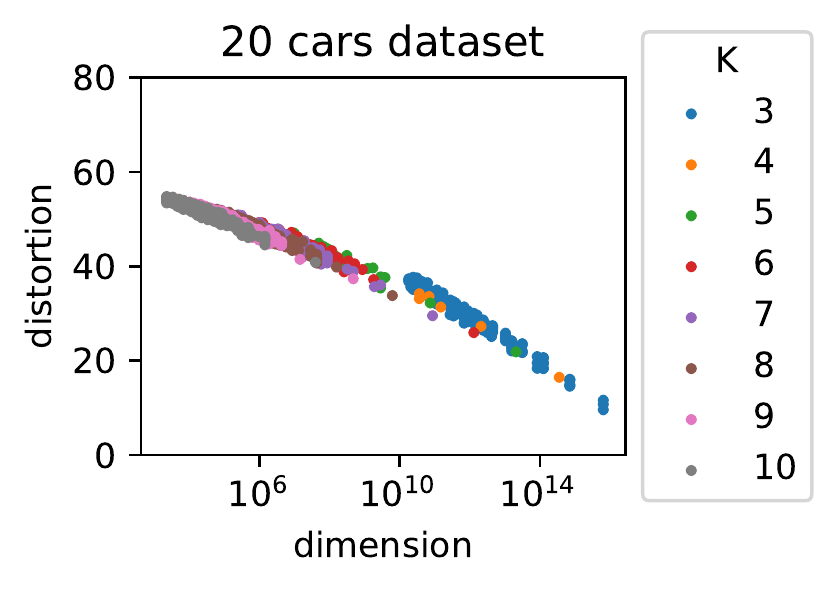}\hfill
\caption{Dimension-Distortion plot for different bucket sizes on real-world preference datasets.}
\label{fig:real_data}
\end{figure}
\noindent In this section we illustrate the relevance of our approach through real-world ranking datasets, which exhibit the type of sparsity considered in the present article.
The first one is the well-known Sushi dataset (see~\cite{kamishima2003nantonac}), which consists of full rankings describing the preferences of $N = 5000$ individuals over $n=10$ sushi dishes.
We also considered the two \textit{Cars preference datasets}\footnote{\url{http://users.cecs.anu.edu.au/~u4940058/CarPreferences.html}, First experiment.} (see \cite{Abbasnejad2013}).
It consists of pairwise comparisons of users between $n$ different cars. In the first dataset, $60$ users are asked to make all the possible $45$ pairwise comparisons between $10$ cars (around $3000$ samples).
In the second one, $60$ users are asked to make (randomly selected) $38$ comparisons between $20$ cars (around $2500$ samples).
For each dataset, the empirical ranking $\sigma^*_{\widehat{P}_N}$ is computed based on the empirical pairwise probabilities. % and estimators of $\C^{*(K,\lambda)}$ are obtained by splitting the ordered list $(\sigma_{\widehat{P}_N}^{*-1}(1),\dots,\sigma_{\widehat{P}_N}^{*-1}(n))$ into $K$ buckets respecting shape $\lambda$.
In Figure \ref{fig:real_data}, the dimension $d_{\C}$ (in logarithmic scale) \textit{vs} distortion $\widehat{\Lambda}_N(\C)$ diagram is plotted for each dataset, for several bucket sizes ($K$) and shapes ($\lambda$). These buckets are obtained by segmenting $\sigma^*_{\widehat{P}_N}$ with respect to $\lambda$ as explained at the end of the previous section.
Each color on a plot corresponds to a specific size $K$, and each point in a given color thus represents a bucket order of size $K$. As expected, on each plot the lowest distortion is attained for high-dimensional buckets (i.e., of smaller size $K$).
These numerical results shed light on the sparse character of these empirical ranking distributions. Indeed, the dimension $d_{\C}$ can be drastically reduced, by choosing the size $K$ and shape $\lambda$ in an appropriate manner, while keeping a low distortion for the representation.
The reader may refer to the Appendix for additional dimension/distortion plots for different distributions which underline the sparsity observed here: specifically, these empirical distributions show intermediate behaviors between a true bucket distribution and a uniform distribution (i.e., without exhibiting bucket sparsity). The code to reproduce our results is available: \url{https://github.com/akorba/Dimensionality_Reduction_Ranking/}.

\section{Conclusion}\label{sec:concl}

In this paper, we have developed novel theoretical concepts to represent efficiently \textit{sparse} ranking data distributions. We have introduced a distortion measure, based on a mass transportation metric on the set of probability distributions on the set of rankings (with Kendall's $\tau$ as transportation cost) in order to evaluate the accuracy of (bucket) distribution representations and investigated the performance of empirical distortion minimizers. We have also provided empirical evidence that the notion of sparsity, which the dimensionality reduction method proposed relies on, is encountered in various real-world situations. In a future work, we intend to investigate at length how to exploit such sparse representations for improving the completion of certain statistical learning tasks based on ranking data (e.g. clustering, ranking prediction), by circumventing this way the curse of dimensionality.

\acks{This work was supported by the industrial chair \textit{Machine Learning for Big Data} from T\'el\'ecom ParisTech
and by a public grant (\textit{Investissement d'avenir} project, reference ANR-11-LABX-0056-LMH, LabEx LMH).}

\bibliography{biblio}

\newpage

\appendix

\section*{Appendix A - Technical Proofs}

\subsection*{Proof of Lemma \ref{lem:Kendall}}

Consider two probability distributions $P$ and $P'$ on $\mathfrak{S}_n$. Fix $i\neq j$ and let $(\Sigma,\Sigma')$ be a pair of random variables defined on a same probability space, valued in $\mathfrak{S}_n$ and such that $p_{i,j}=~\mathbb{P}_{\Sigma\sim P}\{\Sigma(i)<~\Sigma(j)\}$ and $p'_{i,j}=\mathbb{P}_{\Sigma'\sim P'}\{\Sigma'(i)<\Sigma'(j)\}$.
Set
$$
\pi_{i,j}=\mathbb{P}\left\{ \Sigma'(i)<\Sigma'(j)\mid  \Sigma(i)<\Sigma(j) \right\}.
$$
Equipped with this notation, by the law of total probability, we have:
\begin{equation}\label{eq:margin}
p'_{i,j}=p_{i,j}\pi_{i,j}+(1-p_{i,j})(1-\pi_{j,i}).
\end{equation}
In addition, we may write
\begin{multline*}
\mathbb{E}\left[ d_{\tau}(\Sigma,\Sigma')\right]=\sum_{i<j}\mathbb{E}\left[\mathbb{I}\{ (\Sigma(i)-\Sigma(j))(\Sigma'(i)-\Sigma'(j))<0 \}\right]\\
=\sum_{i<j}\mathbb{E}\left[\mathbb{I}\{\Sigma(i)<\Sigma(j) \}\mathbb{I}\{\Sigma'(i)>\Sigma'(j) \} + \mathbb{I}\{\Sigma(i)>\Sigma(j) \}\mathbb{I}\{\Sigma'(i)<\Sigma'(j) \}\right]\\
=\sum_{i<j} p_{i,j}(1-\pi_{i,j})+(1-p_{i,j})(1-\pi_{j,i}) .
\end{multline*}
Suppose that $p_{i,j}<p'_{i,j}$. Using \eqref{eq:margin}, we have $p_{i,j}(1-\pi_{i,j})+(1-p_{i,j})(1-\pi_{j,i})=p'_{i,j}+(1-2\pi_{i,j})p_{i,j}$, which quantity is minimum when $\pi_{i,j}=1$ (and in this case $\pi_{j,i}=(1-p'_{i,j})/(1-p_{i,j})$), and then equal to $\vert p_{i,j}-p'_{i,j}\vert$. We recall that we can only set $\pi_{i,j}=1$ if the initial assumption $p_{i,j}<p'_{i,j}$ holds. In a similar fashion, if $p_{i,j}>p'_{i,j}$, we have $p_{i,j}(1-\pi_{i,j})+(1-p_{i,j})(1-\pi_{j,i})=2(1-p_{i,j})(1-\pi_{j,i})+p_{i,j}-p'_{i,j}$, which is minimum for $\pi_{j,i}=1$ (we have incidentally $\pi_{i,j}=p'_{i,j}/p_{i,j}$ in this case) and then equal to $\vert p_{i,j}-p'_{i,j}\vert$. Since we clearly have
\begin{multline*}
W_{d_{\tau},1}\left(P,P'  \right) \ge  \\\sum_{i<j} \inf_{(\Sigma,\Sigma') \text { s.t. } \mathbb{P}\{\Sigma(i)<\Sigma(j)\}=p_{i,j} \text{ and } \mathbb{P}\{\Sigma'(i)<\Sigma'(j)\}=p'_{i,j} }  \mathbb{P}\left[ (\Sigma(i)-\Sigma(j))(\Sigma'(i)-\Sigma'(j))<0 \right] ,
\end{multline*}
this proves that
$$
W_{d_{\tau},1}\left(P,P'  \right) \ge \sum_{i<j}\vert p'_{i,j}-p_{i,j} \vert.
$$
As a remark, given a distribution $P$ on $\mathfrak{S}_n$, when $P'=P_{\C}$ with $\C$ a bucket order of $\n$ with $K$ buckets, the optimality conditions on the $\pi_{i,j}$'s are fulfilled by the coupling $(\Sigma,\Sigma_{\C})$,
which implies that:
\begin{equation}\label{eq:min_dist}
W_{d_{\tau},1}\left(P,P_{\C} \right) = \sum_{i<j}\vert p'_{i,j}-p_{i,j} \vert = \sum_{1\leq k<l\leq K}\sum_{(i,j)\in \C_k\times \C_l}p_{j,i},
\end{equation}
where $p'_{i,j} = \mathbb{P}_{\Sigma_{\C}\sim P_{\C}}\left[  \Sigma_{\C}(i)<\Sigma_{\C}(j)\right] = p_{i,j}\mathbb{I}\{k=l\} + \mathbb{I}\{k<l\}$,
with $(k,l)\in\{1,\dots,K\}^2$ such that $(i,j)\in\C_k\times\C_l$.

\subsection*{Proof of Proposition \ref{prop:kendall_prop}}
Let $\C$ be a bucket order of $\n$ with $K$ buckets. Then, for $P'\in \mathbf{P}_{\C}$, Lemma \ref{lem:Kendall} implies that:
\begin{equation*}
    W_{d_{\tau},1}\left(P,P'  \right) \ge \sum_{i<j}\vert p'_{i,j}-p_{i,j} \vert = \sum_{k=1}^K \sum_{i<j, (i,j)\in \C_k^2} \vert p'_{i,j}-p_{i,j} \vert + \sum_{1\leq k<l\leq K}\sum_{(i,j)\in \C_k\times \C_l}p_{j,i},
\end{equation*}
where the last equality results from the fact that $p'_{i,j}=1$ when $(i,j)\in\C_k\times \C_l$ with $k<l$.
When $P'=P_{\C}$,  the intra-bucket terms are all equal to zero. Hence, it results from \eqref{eq:min_dist} that :
%as $p'_{i,j}=p_{i,j}$ when $(i,j)\in \C_k^2$, $\; \forall \; k\in\{1,\dots,K\}$,
%which implies:
$$
W_{d_{\tau},1}\left(P,P_{\C}  \right) = \sum_{1\leq k<l\leq K}\sum_{(i,j)\in \C_k\times \C_l}p_{j,i}=\Lambda_P(\C).
$$

\subsection*{Proof of Theorem \ref{thm:EDM}}
%Observe first that the excess of distortion can be bounded as follows:
%$$
%\Lambda_P(\widehat{C}_{K,\lambda})-\inf_{\C\in \mathbf{C}_K}\Lambda_P(\C)\leq 2\max_{\C\in \mathbf{C}_{K,\lambda}}\sum_{1\leq k<l\leq K}\sum_{(i,j)\in \C_k\times \C_l}\left\vert \widehat{p}_{j,i}-p_{j,i}\right\vert + \left\{ \inf_{\C\in \mathbf{C}_{K,\lambda}}\Lambda_P(\C)-\inf_{\C\in \mathbf{C}_{K}}\Lambda_P(\C)\right\}.
%$$
%Apply next Hoeffding's inequality for all $i\neq j$, so as to get, for all $t>0$,
%\begin{equation*}
%\mathbb{P}\Big\{ \vert p_{i,j}-\widehat{p}_{i,j} \vert > t \Big\} \leq 2e^{-2Nt^{2}},
%\end{equation*}
%which implies that, for all $\C\in \mathbf{C}_{K,\lambda}$,
%$$
%\mathbb{P}\Big\{ |\widehat{\Lambda}_N(\C)-\Lambda_P(\C)| > t \Big\} \le 2\kappa(\lambda) e^{-2N\left(t/\kappa(\lambda)\right)^2}.
%$$
%The desired result then follows from the bound above combined with the union bound and the fact that $\# \mathbf{C}_{K,\lambda}=\binom{n}{\lambda}=n!/(\# \C_1 ! \times \cdots \times \# \C_K !)$.

%\begin{remark}
%Observe that, for all $\delta\in (0,1)$, we also have with probability at least $1-\delta$:
%\begin{equation*}
%\max_{\C\in \mathbf{C}_{K,\lambda}}|\widehat{\Lambda}_N(\C)-\Lambda_P(\C)| \leq  \sum_{i<j}\vert p_{i,j}-\widehat{p}_{i,j} \vert \leq \binom{n}{2}\sqrt{\frac{\log \left(2\binom{n}{2}/\delta\right)}{2N}}.
%\end{equation*}
%\end{remark}

Observe first that the excess of distortion can be bounded as follows:
$$
\Lambda_P(\widehat{C}_{K,\lambda})-\inf_{\C\in \mathbf{C}_K}\Lambda_P(\C)\leq 2\max_{\C\in \mathbf{C}_{K,\lambda}}\left|\widehat{\Lambda}_N(\C)-\Lambda_P(\C)\right| + \left\{ \inf_{\C\in \mathbf{C}_{K,\lambda}}\Lambda_P(\C)-\inf_{\C\in \mathbf{C}_{K}}\Lambda_P(\C)\right\}.
$$
By a classical symmetrization device (see e.g. \cite{vanweak}), we have:
\begin{equation}\label{eq:symmetrization}
\mathbb{E}\left[\max_{\C\in \mathbf{C}_{K,\lambda}} \left| \widehat{\Lambda}_N(\C) - \Lambda_P(\C) \right| \right] \le 2\mathbb{E}\left[\mathcal{R}_N(\lambda)\right].
\end{equation}
Hence, using McDiarmid's inequality, for all $\delta\in (0,1)$ it holds with probability at least $1-\delta$:
\begin{equation*}
\max_{\C\in \mathbf{C}_{K,\lambda}}\left|\widehat{\Lambda}_N(\C)-\Lambda_P(\C)\right| \le 2\mathbb{E}\left[\mathcal{R}_N(\lambda)\right] + \kappa(\lambda)\sqrt{\frac{\log(\frac{1}{\delta})}{2N}}.
\end{equation*}

\subsection*{Proof of Theorem \ref{thm:select}}

%Define the Rademacher average
%$$
%\mathcal{R}_N(\lambda)=\mathbb{E}_{\epsilon_1,\dots,\epsilon_N}\left\{\max_{\C\in\mathbf{C}_{K,\lambda}}\frac{1}{N}\left\vert \sum_{m=1}^N\epsilon_m\left( \sum_{i\prec_{\C} j}\mathbb{I}\{\Sigma_m(j)<\Sigma_m(i)\}\right) \right\vert\right\},
%$$
%where $\epsilon_1,\; \ldots,\; \epsilon_N$ are i.i.d. Rademacher r.v.'s (\textit{i.e.} symmetric sign random variables), independent from the $\Sigma_m$'s.
%For all $m$, we define the data-driven penalty:
%$$
%{\sc pen}(\lambda_m,N)=2R_N(\lambda_m).
%$$
Following the proof of Theorem 8.1 in \cite{boucheron2005theory}, we have for all $m\in\{1,\dots,M\}$,
\begin{equation*}
  \begin{split}
    \mathbb{E}\left[\Lambda_P(\widehat{\C}_{K_{\widehat{m}},\lambda_{\widehat{m}}})\right]
    &\le \mathbb{E}\left[\Lambda_P(\widehat{\C}_{K_m,\lambda_m})\right]+\mathbb{E}\left[{\sc pen}(\lambda_m,N)\right]\\
    &\qquad + \sum_{m'=1}^M \mathbb{E}\left[\left( \max_{\C\in \mathbf{C}_{K_{m'},\lambda_{m'}}} \Lambda_P(\C)-\widehat{\Lambda}_N(\C)-{\sc pen}(\lambda_{m'},N) \right)_+\right],
  \end{split}
\end{equation*}
where $x_+=\max(x,0)$ denotes the positive part of $x$.
In addition, for any $\delta > 0$, we have:
\begin{multline*}
    \Prob{\max_{\C\in \mathbf{C}_{K_m,\lambda_m}} \Lambda_P(\C)-\widehat{\Lambda}_N(\C) \ge {\sc pen}(\lambda_m,N) + \delta}\\
    \le \Prob{\max_{\C\in \mathbf{C}_{K_m,\lambda_m}} \Lambda_P(\C)-\widehat{\Lambda}_N(\C) \ge \mathbb{E}\left[\max_{\C\in \mathbf{C}_{K_m,\lambda_m}} \Lambda_P(\C)-\widehat{\Lambda}_N(\C)\right] + \frac{\delta}{5}}\\
    + \Prob{\mathcal{R}_N(\lambda_m) \le \mathbb{E}\left[\mathcal{R}_N(\lambda_m)\right]-\frac{2}{5}\delta}
    \le 2\exp\left(-\frac{2N\delta^2}{25\kappa(\lambda_m)^2}\right),
\end{multline*}
using~\eqref{eq:symmetrization} for the first inequality,
and both McDiarmid's inequality and Lemma 8.2 in \cite{boucheron2005theory} for the second inequality.
Observing that $\kappa(\lambda)\le \binom{n}{2}$ and integrating by parts conclude the proof.

\subsection*{Proof of Theorem \ref{thm:bucket_median}}
Consider a bucket order $\C=(\C_1,\; \ldots,\; \C_K)$ of shape $\lambda$, different from \eqref{eq:opt_bucket}. Hence, there exists at least a pair $\{i,j\}$ such that $j\prec_{\C}i$ and $\sigma_P^*(j)<\sigma_P^*(i)$ (or equivalently $p_{i,j}<1/2$). Consider such a pair $\{i,j\}$. Hence, there exist $1\leq k<l\leq K$ s.t. $(i,j)\in\C_k\times C_l$.
Define the bucket order $\C'$ which is the same as $\C$ except that the buckets of $i$ and $j$ are
swapped: $\C'_k=\{j\}\cup\C_k\setminus\{i\}$, $\C'_l=\{i\}\cup\C_l\setminus\{j\}$ and $\C'_m=\C_m$ if $m\in\{1,\dots,K\}\setminus\{k,l\}$. Observe that
\begin{multline*}
\Lambda_{P}(\C')-\Lambda_{P}(\C)=
p_{i,j}-p_{j,i} + \sum_{a\in\C_k\setminus\{i\}} p_{i,a}-p_{j,a} + \sum_{a\in\C_l\setminus\{j\}} p_{a,j}-p_{a,i}\\
+ \sum_{m=k+1}^{l-1} \sum_{a\in\C_m} p_{a,j}-p_{a,i} + p_{i,a}-p_{j,a} \leq 2( p_{i,j}-1/2)< 0.
\end{multline*}
Considering now all the pairs $\{i,j\}$ such that $j\prec_{\C}i$ and $p_{i,j}<1/2$, it follows by induction that
\begin{equation}\label{eq:lowerb}
\Lambda_{P}(\C)-\Lambda_P(\C^{*(K,\lambda)})\geq 2\sum_{j\prec_{\C}i}( 1/2-p_{i,j}) \cdot \mathbb{I}\{ p_{i,j}<1/2 \}.
\end{equation}

%For given number of buckets $K\in\n$ and bucket shape $\lambda$, we want to prove that $\argmin_{\C'\in\mathbf{C}_{K,\lambda}} \Lambda_P(\C')=\{\C^{*(K,\lambda)}\}$ or, equivalently, the following implication:
%$$
%\C\in\argmin_{\C'\in\mathbf{C}_{K,\lambda}} \Lambda_{P}(\C') \Rightarrow \C=\C^{*(K,\lambda)}.
%$$
%We proceed by contraposition: let $\C\in\mathbf{C}_{K,\lambda}\setminus\{\C^{*(K,\lambda)}\}$,
%then $\C$ does not agree with consensus $\sigma^*_P$ i.e. there exist $1\le k<l\le K$ and $(i,j)\in\C_k\times C_l$ such that
%$p_{i,j}<\frac{1}{2}$ (see Property \ref{prop:comp}).
%Hence, the bucket order $\C'\in\mathbf{C}_{K,\lambda}$ obtained from $\C$ by switching the pair $(i,j)$
%(i.e. $\C'_k=\{j\}\cup\C_k\setminus\{i\}$, $\C'_l=\{i\}\cup\C_l\setminus\{j\}$ and $\C'_m=\C_m$ if $m\in\{1,\dots,K\}\setminus\{k,l\}$) verifies:
%\begin{equation*}
 % \begin{split}
%&\Lambda_{P}(\C')-\Lambda_{P}(\C)= \\
%&p_{i,j}-p_{j,i} + \sum_{a\in\C_k\setminus\{i\}} p_{i,a}-p_{j,a} + \sum_{a\in\C_l\setminus\{j\}} p_{a,j}-p_{a,i}
%+ \sum_{m=k+1}^{l-1} \sum_{a\in\C_m} p_{a,j}-p_{a,i} + p_{i,a}-p_{j,a} < 0,
 % \end{split}
%\end{equation*}
%where the inequality holds under the strongly stochastically transitivity assumption.

\subsection*{Proof of Theorem \ref{thm:fast}}

The fast rate analysis essentially relies on the following lemma providing a control of the variance of the empirical excess of distortion
$$
\widehat{\Lambda}_N(\C)-\widehat{\Lambda}_N(\C^{*(K,\lambda)})=\frac{1}{N}\sum_{s=1}^N\sum_{i\neq j}\mathbb{I}\{\Sigma_s(j)<\Sigma_s(i)\}\cdot \left(  \mathbb{I}\left\{ i\prec_{\C}j\}-  \mathbb{I}\{i<_{\C^{*(K,\lambda)}}j \right\} \right).
$$
Set $D(\C)=\sum_{i\neq j}\mathbb{I}\{\Sigma(j)<\Sigma(i)\}\cdot \left(  \mathbb{I}\left\{ i\prec_{\C}j\}-  \mathbb{I}\{i<_{\C^{*(K,\lambda)}} j\right\} \right)$. Observe that $\mathbb{E}[D(\C)]=\Lambda_P(\C)-\Lambda_P(\C^{*(K,\lambda)})$.
\begin{lemma}\label{lem:var_control} Let $\lambda$ be a given bucket order shape. We have:
$$
var\left(D(\C)\right)\leq 2^{\binom{n}{2}}(n^2/h)\cdot \mathbb{E}[D(\C)].
$$
\end{lemma}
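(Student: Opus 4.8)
The plan is to control the variance by the second moment, to bound that crudely using the pointwise boundedness of $D(\C)$, and then to convert the result into a bound proportional to the \emph{signed} mean $\mathbb{E}[D(\C)]$ by invoking the quantitative margin that Theorem~\ref{thm:bucket_median} provides under the low-noise condition. If $\C=\C^{*(K,\lambda)}$ then $D(\C)\equiv 0$ and there is nothing to prove, so I assume $\C\neq\C^{*(K,\lambda)}$. Writing $d_{\C}(\Sigma)=\sum_{i\prec_{\C}j}\mathbb{I}\{\Sigma(j)<\Sigma(i)\}$ for the number of inter-bucket pairs discordantly ordered by $\Sigma$, one has $D(\C)=d_{\C}(\Sigma)-d_{\C^{*(K,\lambda)}}(\Sigma)$, a difference of two nonnegative integers each at most $\binom{n}{2}$; hence $|D(\C)|\le \binom{n}{2}$ almost surely. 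I would then simply write
$$
\mathrm{var}(D(\C))\le \mathbb{E}\!\left[D(\C)^2\right]\le \Big(\max|D(\C)|\Big)\,\mathbb{E}\!\left[|D(\C)|\right]\le \binom{n}{2}^2 .
$$

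Next I would bound the mean from below. Since $\mathbb{E}[D(\C)]=\Lambda_P(\C)-\Lambda_P(\C^{*(K,\lambda)})$, inequality~\eqref{eq:excess_distort} of Theorem~\ref{thm:bucket_median} applies. As $\C\neq\C^{*(K,\lambda)}$, the proof of that theorem exhibits at least one pair that $\C$ orders against the Kemeny median $\sigma^*_P$, so the right-hand side of~\eqref{eq:excess_distort} contains at least one nonzero term $2(1/2-p_{i,j})$ with $p_{i,j}<1/2$; the margin condition \textbf{NA}$(h)$ of~\eqref{eq:hyp_margin0} bounds each such term below by $2h$. Therefore $\mathbb{E}[D(\C)]\ge 2h$.

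Combining the two estimates yields $\mathrm{var}(D(\C))\le \binom{n}{2}^2\le \binom{n}{2}^2\,\mathbb{E}[D(\C)]/(2h)$, and since $\binom{n}{2}^2=\binom{n}{2}\cdot\binom{n}{2}\le n^2\cdot 2^{\binom{n}{2}+1}$ (using $\binom{n}{2}\le n^2$ for one factor and $\binom{n}{2}\le 2^{\binom{n}{2}+1}$ for the other), the announced bound $\mathrm{var}(D(\C))\le 2^{\binom{n}{2}}(n^2/h)\,\mathbb{E}[D(\C)]$ follows.

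The one genuine difficulty — and the step I would flag as the crux — is the passage from a bound on $\mathbb{E}[D(\C)^2]$ to one proportional to the signed mean $\mathbb{E}[D(\C)]$: for a generic real random variable, cancellation between the positive and negative parts of $D(\C)$ could drive $\mathbb{E}[D(\C)]$ to zero while $\mathbb{E}[D(\C)^2]$ stays bounded away from it, so no such inequality can hold in general. What rescues the argument is precisely that strict/strong stochastic transitivity together with \textbf{NA}$(h)$ makes $\C^{*(K,\lambda)}$ the \emph{unique} distortion minimizer over $\mathbf{C}_{K,\lambda}$ \emph{with a uniform margin}, forcing $\mathbb{E}[D(\C)]\ge 2h$ for every competitor $\C$; the (very generous) factor $2^{\binom{n}{2}}$ then merely absorbs the crude pointwise estimate $|D(\C)|\le\binom{n}{2}$, and a tighter analysis would in fact replace it by a polynomial factor in $n$.
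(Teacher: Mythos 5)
Your proof is correct, but it follows a genuinely different route from the paper's. The paper mirrors the proof of Theorem~\ref{thm:bucket_median}: it transforms $\C$ into $\C^{*(K,\lambda)}$ by successively swapping misordered pairs, bounds the variance of each swap increment by $4n^2$, and propagates this through the induction (at the cost of the factor $2^{\binom{n}{2}-1}$) to get $\mathrm{var}(D(\C))\leq 2^{\binom{n}{2}+1}n^2\,\#\{(i,j):\, i\prec_{\C}j,\ p_{j,i}>1/2\}$, a bound that \emph{vanishes as $\C$ approaches the optimum}; it then converts the count of misordered pairs into $\mathbb{E}[D(\C)]$ via \eqref{eq:excess_distort} and the margin condition \eqref{eq:hyp_margin0}. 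You instead take the crude uniform bound $\mathrm{var}(D(\C))\le\binom{n}{2}^2$ and restore the proportionality to $\mathbb{E}[D(\C)]$ through the uniform gap $\mathbb{E}[D(\C)]\ge 2h$ for every suboptimal $\C$ — a gap you correctly extract from \eqref{eq:excess_distort} together with the fact (established at the start of the paper's proof of Theorem~\ref{thm:bucket_median}) that any $\C\in\mathbf{C}_{K,\lambda}$ with $\C\neq\C^{*(K,\lambda)}$ must order at least one pair against $\sigma^*_P$. Both arguments yield the stated constant; your self-assessment of the crux is accurate, since without the uniform gap no variance-to-signed-mean inequality could hold. Note that your route in fact proves the strictly stronger bound $\mathrm{var}(D(\C))\le\binom{n}{2}^2\,\mathbb{E}[D(\C)]/(2h)$, with a polynomial rather than exponential factor in $n$, and you inflate it only to match the lemma as stated; what it gives up relative to the paper is the structural feature that the variance bound itself shrinks with the number of misordered pairs — irrelevant for the finite-class Bernstein-plus-union-bound use made in Theorem~\ref{thm:fast}, but potentially useful if one wanted localization over a richer class or a margin condition weaker than \eqref{eq:hyp_margin0}, where the mean is no longer uniformly bounded away from zero.
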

\begin{proof}
As in the proof of Theorem \ref{thm:bucket_median}, consider a bucket order $\C=(\C_1,\; \ldots,\; \C_K)$ of shape $\lambda$, different from \eqref{eq:opt_bucket}, a pair $\{i,j\}$ such that there exist $1\leq k<l\leq K$ s.t. $(i,j)\in\C_k\times C_l$ and $\sigma_P^*(j)<\sigma_P^*(i)$ and the bucket order $\C'$ which is the same as $\C$ except that the buckets of $i$ and $j$ are
swapped. We have:
\begin{multline*}
D(\C')-D(\C)=\mathbb{I}\{\Sigma(i)<\Sigma(j)  \} - \mathbb{I}\{\Sigma(j)<\Sigma(i)  \} + \sum_{a\in\C_k\setminus\{i\}} \mathbb{I}\{\Sigma(i)<\Sigma(a)  \} -\mathbb{I}\{\Sigma(j)<\Sigma(a)  \}\\  + \sum_{a\in\C_l\setminus\{j\}} \mathbb{I}\{\Sigma(a)<\Sigma(j)  \} -\mathbb{I}\{\Sigma(a)<\Sigma(i)  \} \\
+ \sum_{m=k+1}^{l-1} \sum_{a\in\C_m} \mathbb{I}\{\Sigma(a)<\Sigma(j)  \} -\mathbb{I}\{\Sigma(a)<\Sigma(i)  \}  + \mathbb{I}\{\Sigma(i)<\Sigma(a)  \} -\mathbb{I}\{\Sigma(j)<\Sigma(a)  \}.
\end{multline*}

Hence, we have: $var(D(\C')-D(\C))\leq 4n^2$. By induction, we then obtain that:
\begin{multline*}
var\left(D(\C)\right)\leq 2^{\binom{n}{2}-1}(4n^2)\#\left\{(i,j):\; i\prec_{\C}j \text{ and } p_{j,i}>1/2  \right\}\\
\leq 2^{\binom{n}{2}-1}(4n^2/h)\sum_{j\prec_{\C}i}( 1/2-p_{i,j}) \cdot \mathbb{I}\{ p_{i,j}<1/2 \}\leq 2^{\binom{n}{2}}(n^2/h)\mathbb{E}[D(\C)],
\end{multline*}
by combining \eqref{eq:excess_distort} with condition \eqref{eq:hyp_margin0}.
\end{proof}

Applying Bernstein's inequality to the i.i.d. average $(1/N)\sum_{s=1}^ND_s(\C)$, where
\begin{equation*}
D_s(\C)=\sum_{i\neq j}\mathbb{I}\{\Sigma_s(j)<\Sigma_s(i)\}\cdot \left(  \mathbb{I}\left\{ i\prec_{\C}j\}-  \mathbb{I}\{i<_{\C^{*(K,\lambda)}} j\right\} \right),
\end{equation*}
for $1\leq s\leq N$ and the union bound over the bucket orders $\C$ in $\mathbf{C}_{K,\lambda}$ (recall that $\#\mathbf{C}_{K,\lambda}= \binom{n}{\lambda}$), we obtain that, for all $\delta\in (0,1)$, we have with probability larger than $1-\delta$: $\forall \C\in \mathbf{C}_{K,\lambda}$,
\begin{multline*}
\mathbb{E}[D(\C)]= \Lambda_P(\C)-\Lambda_P(\C^{*(K,\lambda)}) \leq \widehat{\Lambda}_N(\C)-\widehat{\Lambda}_N(\C^{*(K,\lambda)})+\sqrt{\frac{2 var(D(\C))\log\left(\binom{n}{\lambda}/\delta\right)}{N}} \\+\frac{4\kappa(\lambda)\log(\binom{n}{\lambda}/\delta)}{3N}.
\end{multline*}
Since $\widehat{\Lambda}_N(\widehat{C}_{K,\lambda})-\widehat{\Lambda}_N(\C^{*(K,\lambda)}) \leq 0$ by assumption and using the variance control provided by Lemma \ref{lem:var_control} above, we obtain that, with probability at least $1-\delta$, we have:
\begin{multline*}
 \Lambda_P(\widehat{C}_{K,\lambda})-\Lambda_P(\C^{*(K,\lambda)})\leq\sqrt{\frac{2^{\binom{n}{2}+1}n^2\left( \Lambda_P(\widehat{C}_{K,\lambda})-\Lambda_P(\C^{*(K,\lambda)})\right)/h \times \log(\binom{n}{\lambda}/\delta)}{N}}\\+\frac{4\kappa(\lambda)\log(\binom{n}{\lambda}/\delta)}{3N}.
\end{multline*}
Finally, solving this inequality in $ \Lambda_P(\widehat{C}_{K,\lambda})-\Lambda_P(\C^{*(K,\lambda)})$ yields the desired result.

\section*{Appendix B - Hierarchical Recovery of a Bucket Distribution}

Motivated by Theorem \ref{thm:bucket_median}, we propose a hierarchical 'bottom-up' procedure to recover, from ranking data, a bucket order representation (agreeing with Kemeny consensus) of smallest dimension for a fixed level of distortion, that does not requires to specify in advance the bucket size $K$ and thus avoids computing the optimum $\eqref{eq:opt_bucket}$ for all possible shape/size.

Suppose for simplicity that $P$ is strictly/strongly stochastically transitive. One starts with the bucket order of size $n$ defined by its Kemeny median $\sigma^*_P$:
$$
\C(0)=(\{\sigma_P^{*-1}(1)\},\; \dots,\; \{\sigma_P^{*-1}(n)\}).
$$
The initial representation has minimum dimension, \textit{i.e.} $d_{\C(0)}=0$, and maximal distortion among all bucket order representations agreeing with $\sigma^*_P$, \textit{i.e.} $\Lambda_P(\C(0))=L^*_P$, see Corollary \ref{cor:median_lambda}.
The binary agglomeration strategy we propose, namely the {\sc BuMeRank} (for '\textbf{Bu}cket \textbf{Me}rge') algorithm, consists in recursively merging two adjacent buckets $\C_k(j)$ and $\C_{k+1}(j)$ of the current bucket order $\C(j)=(\C_1(j),\; \ldots,\; C_K(j))$ into a single bucket, yielding the 'coarser' bucket order
\begin{equation}
	\label{eq:merged_bucket}
	\C(j+1)=(\C_1(j), \dots, \C_{k-1}(j), \C_k(j)\cup\C_{k+1}(j), \C_{k+2}(j), \dots, \C_{K}(j)).
\end{equation}
The pair $(\C_k(j), \C_{k+1}(j))$ chosen corresponds to that maximizing the quantity
\begin{equation}
	\label{eq:delta}
	\Delta_P^{(k)}(\C(j)) = \sum_{i\in\C_k(j), j\in\C_{k+1}(j)} p_{j, i}.
\end{equation}
The agglomerative stage $\C(j) \to \C(j+1)$ increases the dimension of the representation,
\begin{equation}
	\label{eq:merged_dim}
	d_{\C(j+1)} = (d_{\C(j)}+1)\times\binom{\#\C_k(j)+\#\C_{k+1}(j)}{\#\C_k(j)}-1,
\end{equation}
while reducing the distortion by $\Lambda_P(\C(j))-\Lambda_P(\C(j+1))=\Delta_P^{(k)}(\C(j))$.

%In the proposed partitioning procedure we start by computing the Kemeny median $\sigma^*_P$ of some distribution $P$ on $\mathfrak{S}_n$
%and we initialize the bucket order $\C$ as the simplest one agreeing with the Kemeny median, namely $\C_0=(\sigma_P^{*-1}(1), \dots, \sigma_P^{*-1}(n))$ (see Theorem \ref{thm:bucket_median}).
%Notice that the dimension of $\P_{\C_0}$ is $d_{\C_0}=0$.
%Then we iteratively merge adjacent buckets of the bucket order $\C$ to decrease the distortion $\Lambda_P(\C)$ while keeping the agreement with the median.
%Given a bucket order of $\n$ with $K\ge 2$ buckets: $\C=(\C_1, \dots, \C_K)$, we define for all couples of adjacent buckets $(\C_k, \C_{k+1})$ indexed by $k\in\{1, \dots, K-1\}$:
%\begin{equation}
%	\label{eq:delta}
%	\Delta_P^{(k)}(\C) = \sum_{i\in\C_k, j\in\C_{k+1}} p_{j, i},
%\end{equation}
%which corresponds to the quantity by which the distortion $\Lambda_P(\C)$ is reduced when merging buckets $\C_k$ and $\C_{k+1}$.
%In other words, the new bucket order resulting from merging $\C_k$ and $\C_{k+1}$ is:
%\begin{equation}
%	\label{eq:merged_bucket}
%	\C'=(\C_1, \dots, \C_{k-1}, \C_k\cup\C_{k+1}, \C_{k+2}, \dots, \C_{K}),
%\end{equation}
%and its distortion verifies: $\Lambda_P(\C')=\Lambda_P(\C)-\Delta_k$.
%On the other hand, the dimension of $\P_{\C'}$, denoted by $d_{\C'}$, is larger than $d_{\C}$:
%\begin{equation}
%	\label{eq:merged_dim}
%	d_{\C'} = (d_{\C}+1)\times\binom{\#\C_k+\#\C_{k+1}}{\#\C_k}-1.
%\end{equation}
\medskip

\fbox{
	\begin{minipage}[t]{13.5cm}
		\medskip

		{\small
			\begin{center}
				{\sc BuMeRank Algorithm}
			\end{center}

			\begin{enumerate}
				\item {\bf Input.} Training data $\{\Sigma_i\}_{i=1}^N$, maximum dimension $d_\text{max}\ge 0$, distortion tolerance $\epsilon\ge 0$.
				\item {\bf Initialization.} Compute empirical Kemeny median $\sigma^*_{\widehat{P}_N}$ and $\C(0) = (\{\sigma_{\widehat{P}_N}^{*-1}(1)\}, \dots, \{\sigma_{\widehat{P}_N}^{*-1}(n)\})$. Set $K\leftarrow n$.
				\item {\bf Iterations.} While $K\ge 3$ and $\widehat{\Lambda}_N(\C(n-K))>\epsilon$,
				\begin{enumerate}
					\item Compute $k\in\argmax_{1\le l\le K-1}\Delta_{\widehat{P}_N}^{(l)}(\C(n-K))$ and $\C(n-K+1)$.
					\item If $d_{\C(n-K+1)}>d_\text{max}$: go to 4. Else: set $K\leftarrow K-1$.
				\end{enumerate}
				\item {\bf Output.} Bucket order $\C(n-K)$.
			\end{enumerate}
		}
		\medskip

	\end{minipage}
}
\medskip

For notational convenience, the \textsc{BuMeRank} algorithm is defined taking full rankings $\Sigma_i$'s as input,
but it remains valid in the pairwise comparisons framework (see Remark \ref{rk:alt_setup}).
This algorithm is specifically designed for finding the bucket order $\C$ of minimal dimension $d_\C$ (i.e. of maximal size $K$)
such that a bucket distribution in $\P_{\C}$ approximates well the original distribution $P$ (i.e. with small distortion $\Lambda_P(\C)$).
The next result formally supports this idea in the limit case of $P$ being a bucket distribution.

\begin{theorem}
	\label{thm:algo}
	Let $P$ be a strongly/strictly stochastically transitive bucket distribution and denote $K^*=\max\{K\in\{2,\dots,n\}, \exists \text{ bucket order } \C \text{ of size } K \text{ s.t. } P\in\P_{\C}\}$.\\
	\noindent $(i)$ There exists a unique $K^*$-shape $\lambda^*$ such that $\Lambda_P(\C^{*(K^*,\lambda^*)})=0$.\\
	\noindent $(ii)$ For any bucket order $\C$ such that $P\in\P_\C$: $\C\neq\C^{*(K^*,\lambda^*)} \Rightarrow d_\C>d_{\C^{*(K^*,\lambda^*)}}$.\\
	\noindent $(iii)$ The {\sc BuMeRank} algorithm, runned with $d_{\text{max}}=n!-1$, $\epsilon=0$ and theoretical quantities ($\sigma^*_P$, $\Delta_P^{(k)}$'s and $\Lambda_P$) instead of estimates,
	outputs $\C^{*(K^*,\lambda^*)}$.
\end{theorem}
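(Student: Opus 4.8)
The plan is to anchor everything on the elementary observation that, by Proposition~\ref{prop:kendall_prop}, $\Lambda_P(\C)=\sum_{i\prec_\C j}p_{j,i}$ is a sum of nonnegative terms, so $\Lambda_P(\C)=0$ if and only if $p_{j,i}=0$ for every $i\prec_\C j$, i.e. if and only if $P\in\mathbf{P}_{\C}$. First I would show that any bucket order $\C$ with $P\in\mathbf{P}_{\C}$ is necessarily a segmentation of the Kemeny median $\sigma^*_P$: if $i\prec_\C j$ then $p_{i,j}=1>1/2$, and strict stochastic transitivity (via the characterisation of $\sigma^*_P$ recalled after Definition~\ref{def:stoch_trans}) forces $\sigma^*_P(i)<\sigma^*_P(j)$; hence every bucket of $\C$ is a consecutive $\sigma^*_P$-interval and $\C=\C^{*(K,\lambda)}$ for its own shape $\lambda$. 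Calling a position $c\in\{1,\dots,n-1\}$ an \emph{admissible cut} when $p_{j,i}=0$ for all $i,j$ with $\sigma^*_P(i)\le c<\sigma^*_P(j)$, the same identity shows that $\Lambda_P(\C^{*(K,\lambda)})=0$ exactly when all internal cuts of $\C^{*(K,\lambda)}$ are admissible. Writing $A$ for the set of admissible cuts, the bucket orders with $P\in\mathbf{P}_{\C}$ are thus in bijection with subsets of $A$, their size being the number of cuts used plus one; the maximal size is $K^*=\#A+1$, attained only by the subset $A$ itself, which yields the unique shape $\lambda^*$ of $(i)$.

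For $(ii)$ I would reuse this bijection. Any $\C$ with $P\in\mathbf{P}_{\C}$ and $\C\neq\C^{*(K^*,\lambda^*)}$ uses a proper subset of $A$, hence is a coarsening of $\C^{*(K^*,\lambda^*)}$ in which at least one bucket is the union of two or more consecutive buckets of $\C^{*(K^*,\lambda^*)}$. Since $(a+b)!=\binom{a+b}{a}\,a!\,b!>a!\,b!$ for $a,b\ge 1$, every such merge strictly increases the product $\prod_k \#\C_k!$, whence $d_\C=\prod_k\#\C_k!-1>d_{\C^{*(K^*,\lambda^*)}}$.

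The heart of the argument is $(iii)$. Writing $B_1,\dots,B_{K^*}$ for the buckets of $\C^{*(K^*,\lambda^*)}$ (the maximal blocks), I would track the run of \textsc{BuMeRank} and establish two invariants: every iterate $\C(j)$ refines $\C^{*(K^*,\lambda^*)}$, and the algorithm halts exactly at $\C(n-K^*)=\C^{*(K^*,\lambda^*)}$. The point is that merging an adjacent cross-block pair costs $\Delta_P^{(k)}=0$ (all reverse pairwise probabilities vanish across blocks), whereas merging an adjacent within-block pair costs $\Delta_P^{(k)}>0$; since the greedy step maximises $\Delta_P^{(k)}$, it can never remove a cross-block cut while a within-block cut remains, so refinement is preserved and the iterates strictly lower the number of within-block cuts until none is left. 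Combined with the fact that $\Lambda_P(\C(j))>0$ precisely while $\C(j)$ still carries a within-block cut, the stopping test $\Lambda_P(\C(n-K))>0$ (together with $d_{\max}=n!-1$, which is never binding because $d_\C\le n!-1$) then fires exactly when $\C(j)=\C^{*(K^*,\lambda^*)}$; a short index bookkeeping handles both $K^*\ge 3$ and $K^*=2$ (where the $K\ge 3$ guard forces termination), as well as the degenerate Dirac case where the loop never starts.

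The main obstacle is the strict positivity $\Delta_P^{(k)}>0$ for within-block merges, on which the whole of $(iii)$ rests: it reduces to showing that two $\sigma^*_P$-consecutive items $a,b$ lying in the same block satisfy $p_{b,a}>0$. I would prove this by contraposition. Strong stochastic transitivity gives the monotonicity $p_{w,u}\le\min(p_{v,u},p_{w,v})$ whenever $\sigma^*_P(u)<\sigma^*_P(v)<\sigma^*_P(w)$, so if $p_{b,a}=0$ this propagates, first to $p_{y,a}=0$ for every $y$ beyond $b$ and then to $p_{y,x}=0$ for all $x,y$ straddling the cut between $a$ and $b$, making that cut admissible and therefore cross-block by maximality of $K^*$---contradicting that $a$ and $b$ share a block. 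This lemma is exactly where the strong, and not merely strict, transitivity hypothesis is used.
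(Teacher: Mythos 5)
Your proposal is correct, and on the central part $(iii)$ it takes a genuinely different route from the paper. For $(i)$, your bijection between bucket orders $\C$ with $P\in\P_{\C}$ and subsets of the set $A$ of admissible cuts of $\sigma^*_P$ is a clean repackaging of the paper's argument, which instead proves uniqueness by contradiction: from two distinct maximal bucket orders it builds a common refinement $\widetilde{\C}$ of size $K^*+1$ with $\Lambda_P(\widetilde{\C})=0$, contradicting the maximality of $K^*$; the combinatorial content is the same, but your formulation exhibits $K^*=\#A+1$ explicitly and makes uniqueness immediate. Part $(ii)$ is essentially the paper's proof (any competitor is a coarsening of $\C^{*(K^*,\lambda^*)}$, and merging strictly increases $\prod_k \#\C_k!$, cf.\ Eq.~\eqref{eq:merged_dim}). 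For $(iii)$, the paper argues by induction on $n-K^*$: it introduces an auxiliary distribution $\widetilde{P}$ that deterministically separates the pair of buckets merged at the last step, applies the induction hypothesis to $\widetilde{P}$, and transfers back via $\Lambda_P(\C(n-m))=\Lambda_{\widetilde{P}}(\C(n-m))+\Delta_P^{(k)}(\C(n-m))$. You instead establish a direct loop invariant: every iterate refines $\C^{*(K^*,\lambda^*)}$ because cross-block merges have $\Delta_P^{(k)}=0$ (reverse pairwise probabilities vanish across blocks since $P\in\P_{\C^{*(K^*,\lambda^*)}}$) while within-block merges have $\Delta_P^{(k)}>0$, so the greedy argmax never deletes a cross-block cut while a within-block cut remains, and the stopping test fires exactly at $\C^{*(K^*,\lambda^*)}$. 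The crux is your positivity lemma --- $p_{b,a}>0$ for $\sigma^*_P$-consecutive items $a,b$ in a common maximal block --- proved by propagating zeros through the strong-transitivity monotonicity $p_{w,u}\le\min(p_{v,u},p_{w,v})$ to make the cut between $a$ and $b$ admissible, contradicting the maximality of $K^*$; I checked this propagation and it is correct. This buys real transparency: it isolates exactly where strong (not merely strict) stochastic transitivity enters, a point the paper leaves implicit, since its assertions that the algorithm ``cannot stop before computing $\C(n-m+1)$'' and that the run on $\widetilde{P}$ reproduces the run on $P$ up to that step both tacitly rest on the same positivity of within-block $\Delta$'s. Conversely, the paper's induction is shorter and delegates the bookkeeping you carry out explicitly (the count of within-block cuts, the $K\ge 3$ guard when $K^*=2$, and the Dirac case where the loop never starts), all of which you handle correctly.
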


\begin{proof}
  Straightforward if $K^*=n$: assume $K^*<n$ in the following.\\
	(i). Existence is ensured by definition of $K^*$ combined with Theorem \ref{thm:bucket_median}.
  Assume there exist two distinct $K^*$-shapes $\lambda$ and $\lambda'$ such that $\Lambda_P(\C^{*(K^*,\lambda)})=\Lambda_P(\C^{*(K^*,\lambda')})=0$.
  Necessarily, there exists $k\in\{1,\dots,K-1\}$ such that, for example, $\C^{*(K^*,\lambda)}_k \cap \C^{*(K^*,\lambda')}_{k+1} \neq \emptyset$ and $\C^{*(K^*,\lambda')}_{k+1} \nsubseteq \C^{*(K^*,\lambda)}_k$.
  Then, define a new bucket order $\widetilde{\C}$ of size $K^*+1$ as follows:
  \begin{multline*}
    \widetilde{\C}=\Biggl(\C^{*(K^*,\lambda')}_1,\dots,\C^{*(K^*,\lambda')}_k,\C^{*(K^*,\lambda)}_k\cap\C^{*(K^*,\lambda')}_{k+1},\\
    \C^{*(K^*,\lambda')}_{k+1}\setminus\left(\C^{*(K^*,\lambda)}_k\cap\C^{*(K^*,\lambda')}_{k+1}\right),\C^{*(K^*,\lambda')}_{k+2},\dots,\C^{*(K^*,\lambda')}_{K^*}\Biggr).
  \end{multline*}
  Conclude observing that $\Lambda_P(\widetilde{\C})=0$ i.e. $P\in\P_{\widetilde{\C}}$, which contradicts the definition of $K^*$.\\
  (ii). By Theorem \ref{thm:bucket_median}, any bucket order $\C$ such that $P\in\P_{\C}$ agrees with the Kemeny median.
  Then, observe that such bucket order $\C$ of size $K<K^*$ is obtained by iteratively merging adjacent buckets of $\C^{*(K^*,\lambda^*)}$:
  otherwise, following the proof of (i), we could define a new bucket order $\widetilde{\C}$ of size $K^*+1$ such that $P\in\P_{\widetilde{\C}}$.
  When $K=K^*-1$, Eq. (\ref{eq:merged_dim}) proves that $d_{\C}>d_{\C^{*(K^*,\lambda^*)}}$. The general result follows by induction.\\
	(iii). By induction on $n-K^*\in\{0,\dots,n-2\}$. Initialization is straightforward for $K^*=n$.
	Let $m\in\{3,\dots,n\}$ and assume that the proposition is true for any strongly/strictly stochastically transitive bucket distribution with $K^*=m$.
  Let $P$ be a strongly/strictly stochastically transitive bucket distribution with $K^*=m-1$.
  By definition of $K^*$, the algorithm runned with distribution $P$ cannot stop before computing $\C(n-m+1)$, which results from merging the adjacent buckets $\C_k(n-m)$ and $\C_{k+1}(n-m)$ (with $k\in\{1,\dots,m-1\}$).
	Then consider a distribution $\widetilde{P}$ with pairwise marginals $\widetilde{p}_{i,j}=1$ if $(i,j)\in\C_k(n-m)\times\C_{k+1}(n-m)$, $\widetilde{p}_{i,j}=0$ if $(i,j)\in\C_{k+1}(n-m)\times\C_k(n-m)$ and $\widetilde{p}_{i,j}=p_{i,j}$ otherwise.
  Hence, $\widetilde{P}$ is a strongly/strictly stochastically transitive bucket distribution and $\C(n-m)$ is, by construction of $\widetilde{P}$, returned by the algorithm when runned with distribution $\widetilde{P}$.
  Hence by induction hypothesis: $\widetilde{P}\in\P_{\C(n-m)}$.
  Conclude observing that $\Lambda_P(\C(n-m))=\Lambda_{\widetilde{P}}(\C(n-m))+\sum_{i\in\C_k(n-m), j\in\C_{k+1}(n-m)} p_{j, i} = \Delta_P^{(k)}(\C(n-m))$,
  which implies that $\Lambda_P(\C(n-m+1))=\Lambda_P(\C(n-m))-\Delta_P^{(k)}(\C(n-m))=0$.
\end{proof}

\section*{Appendix C - Alternative Cost Function: The Spearman $\rho$ Distance}

The expression of the distortion $\Lambda_P(\C)$ obtained in Proposition \ref{prop:kendall_prop}
critically depends on the choice of the Wasserstein parameters, namely $d=d_\tau$ the Kendall's $\tau$ distance and $q=1$.
Whereas, for general $d$ and $q$, obtaining a closed-analytical form for the distortion is a challenging problem, the following result shows that choosing the Spearman $\rho$ distance $d=d_2$ as cost function
and $q=2$ leads to an alternative distortion measure: $\Lambda'_{P}(\C)=\min_{P'\in \mathbf{P}_{\C}}W_{d_2,2}(P,P')$,
that can be explicitly expressed in terms of the triplet-wise probabilities $p_{i,j,k}=\mathbb{P}_{\Sigma\sim P}\{\Sigma(i)<\Sigma(j)<\Sigma(k)\}$. In addition, the coupling $(\Sigma,\Sigma_{\C})$ can also be shown to be optimal in this case: $\Lambda'_{P}(\C) = \mathbb{E}\left[d_2^2\left(\Sigma,\Sigma_{\C} \right)\right]$. Hence, based on the explicit formula below, the distortion can be straightforwardly estimated, just like the
$p_{i,j,k}$'s, so that an analysis similar to that in section \ref{sec:learning} in the Kendall's $\tau$ case, can be naturally carried out in order to provide statistical guarantees for the generalization capacity of empirical distortion minimization procedures.
\begin{proposition}
  \label{prop:spearman}
  Let $n\ge 3$ and $P$ be any distribution on $\Sn$. For any bucket order $\C=(\C_1,\; \ldots,\; \C_K)$, we have:
  \begin{equation*}
    \label{eq:Wspearman}
    \begin{split}
      \Lambda'_{P}(\C)
      &= \frac{2}{n-2} \sum_{1\le k<l<m\le K} \sum_{(a,b,c)\in\C_k\times\C_l\times\C_m} (n+1)p_{c,b,a}+n(p_{b,c,a}+p_{c,a,b})+p_{b,a,c}+p_{a,c,b}\\
      &+ \frac{2}{n-2} \sum_{1\le k<l\le K} \Biggl\{ \sum_{(a,b,c)\in\C_k\times\C_l\times\C_l} n(p_{b,c,a}+p_{c,b,a})+p_{b,a,c}+p_{c,a,b}\\
      &\qquad\qquad\qquad\qquad+\sum_{(a,b,c)\in\C_k\times\C_k\times\C_l} n(p_{c,a,b}+p_{c,b,a})+p_{a,c,b}+p_{b,c,a}\Biggr\}.
    \end{split}
  \end{equation*}
\end{proposition}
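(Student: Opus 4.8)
The plan is to prove the two ingredients hidden in the statement: first, that the coupling $(\Sigma,\Sigma_\C)$ is optimal, so that $\Lambda'_{P}(\C)=\mathbb{E}[d_2^2(\Sigma,\Sigma_\C)]$; second, that this expectation equals the announced triplet-wise expression.

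For optimality I would reformulate the squared Spearman cost as an inner product. Since any $\sigma,\sigma'\in\Sn$ satisfy $\sum_i\sigma(i)^2=\sum_i\sigma'(i)^2=c_n$ with $c_n=\sum_{r=1}^n r^2$, we have $d_2^2(\sigma,\sigma')=2c_n-2\langle\sigma,\sigma'\rangle$ where $\langle\sigma,\sigma'\rangle=\sum_{i=1}^n\sigma(i)\sigma'(i)$. Hence for any coupling and any $P'\in\mathbf{P}_\C$, $\mathbb{E}[d_2^2(\Sigma,\Sigma')]=2c_n-2\mathbb{E}[\langle\Sigma,\Sigma'\rangle]$, so minimizing the Wasserstein cost over $\mathbf{P}_\C$ is equivalent to maximizing $\mathbb{E}[\langle\Sigma,\Sigma'\rangle]$. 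I would then argue pointwise: for a fixed value $\Sigma=\sigma$, any bucket-respecting $\Sigma'$ must assign to bucket $\C_k$ exactly the block of rank-values $\{1+\sum_{l<k}\#\C_l,\dots,\sum_{l\le k}\#\C_l\}$, so $\langle\sigma,\Sigma'\rangle$ splits into independent per-bucket contributions. The rearrangement inequality, applied inside each bucket, shows that each contribution is maximized by matching ranks to the order of $\sigma$, which is precisely $\sigma_\C$. Since $\sigma\mapsto\sigma_\C$ has image law $P_\C\in\mathbf{P}_\C$, the coupling $(\Sigma,\Sigma_\C)$ attains the pointwise maximum, hence the global one, yielding $\Lambda'_{P}(\C)=\mathbb{E}[d_2^2(\Sigma,\Sigma_\C)]$.

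For the explicit formula I would start from the per-item displacement. Writing $\Sigma_\C(i)=1+\sum_{j\prec_\C i}1+\sum_{j\sim_\C i,\,j\ne i}\mathbb{I}\{\Sigma(j)<\Sigma(i)\}$ and $\Sigma(i)=1+\sum_{j\ne i}\mathbb{I}\{\Sigma(j)<\Sigma(i)\}$, the intra-bucket contributions cancel and one obtains $\Sigma(i)-\Sigma_\C(i)=A_i-B_i$, where $A_i=\sum_{j\succ_\C i}\mathbb{I}\{\Sigma(j)<\Sigma(i)\}$ counts later-bucket items that $\Sigma$ ranks before $i$ and $B_i=\sum_{j\prec_\C i}\mathbb{I}\{\Sigma(i)<\Sigma(j)\}$ counts earlier-bucket items that $\Sigma$ ranks after $i$. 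Squaring $A_i-B_i$, summing over $i$ and taking expectations expands $\mathbb{E}[d_2^2(\Sigma,\Sigma_\C)]$ into sums over pairs $(j,j')$ lying in buckets distinct from that of $i$: the genuine triples $\{i,j,j'\}$ produce triplet-wise probabilities (e.g.\ the $-2A_iB_i$ term contributes $p_{j,i,j'}$ with $j'\prec_\C i\prec_\C j$), while the diagonal terms $j=j'$ produce pairwise probabilities $p_{j,i}$ across two buckets.

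The last and most delicate step, which I expect to be the main obstacle, is the combinatorial bookkeeping reassembling these contributions into the stated three groups. I would classify each triple $\{i,j,j'\}$ by the relative bucket order of its members (three distinct buckets $k<l<m$; two members in the higher bucket, the $\C_k\times\C_l\times\C_l$ pattern; two members in the lower bucket, the $\C_k\times\C_k\times\C_l$ pattern), tracking the sign inherited from the $A_i^2$, $B_i^2$ and $-2A_iB_i$ parts of the square; each configuration selects one ordering of the triplet, giving probabilities such as $p_{c,b,a}$, $p_{b,c,a}$, $p_{c,a,b}$, etc. To render the result homogeneous in triplet-wise probabilities I would put everything over the common denominator $n-2$: the genuine triples, already $O(1)$, are rewritten via the factor $\tfrac{n-2}{n-2}$ (whence the $n$-dependent coefficients $n+1$ and $n$), and the diagonal pairwise terms are converted using $p_{i,j}=\tfrac{1}{n-2}\sum_{k\ne i,j}(p_{k,i,j}+p_{i,k,j}+p_{i,j,k})$, valid since $n\ge 3$. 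The bulk of the work, and the place where errors are easiest to make, is verifying that after this substitution the multiplicities of each ordered triplet add up exactly to $(n+1)$, $n$ and $1$ as displayed; I would organize the verification by exploiting the symmetry between the ``earlier'' and ``later'' bucket roles (i.e.\ between $A_i$ and $B_i$) to halve the number of cases to check.
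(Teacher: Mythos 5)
Your proposal is correct, and it reaches the result by a genuinely different route than the paper. The paper proceeds via Lemma \ref{lem:squared_spearman}: part $(i)$ establishes a lower bound on $W_{d_2,2}(P,P')$ for an \emph{arbitrary} $P'$ by expanding $\mathbb{E}[d_2^2(\Sigma,\Sigma')]$ into triplet-wise terms and maximizing, triple by triple, a functional $H(a,b,c)$ over the conditional probabilities $\pi_{a,b,c|i,j,k}$ of the coupling; part $(ii)$ then runs a four-case analysis on how a triple meets the buckets and checks that every maximizing condition is realized by $\Sigma'=\Sigma_{\C}$. You replace all of this coupling bookkeeping with two cleaner steps. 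Your optimality argument is sound: $\sum_i\sigma(i)^2$ is the same constant $c_n=\sum_{r=1}^n r^2$ for every permutation, so $d_2^2(\sigma,\sigma')=2c_n-2\langle\sigma,\sigma'\rangle$; any $P'\in\mathbf{P}_{\C}$ is supported on permutations consistent with $\C$ (this is exactly the defining condition $p'_{j,i}=0$ for $i\prec_{\C}j$), such permutations assign to each bucket the same contiguous block of ranks, and conditionally on $\Sigma=\sigma$ the rearrangement inequality applied bucketwise identifies $\sigma_{\C}$ as the pointwise minimizer, i.e.\ $\Sigma_{\C}$ is the metric projection of $\Sigma$ onto the bucket-consistent permutations. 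This is substantially simpler and more transparent than the paper's argument; what it forgoes is only the stand-alone lower bound of part $(i)$ for arbitrary $P'$, the Spearman analogue of Lemma \ref{lem:Kendall}, which the paper gets as a by-product. Your expansion step also closes as planned: $\Sigma(i)-\Sigma_{\C}(i)=A_i-B_i$ is correct (the intra-bucket indicators cancel, and the offset $\sum_{l<k}\#\C_l$ converts the earlier-bucket sum into $B_i$), and expanding $\sum_i(A_i-B_i)^2$ yields the diagonal pairwise part $2\sum_{i\prec_{\C}j}p_{j,i}$ together with genuine-triple contributions, per unordered triple, of $2(p_{b,c,a}+p_{c,a,b}+p_{c,b,a})$ for three distinct buckets, $2(p_{b,c,a}+p_{c,b,a})$ for the $\C_k\times\C_l\times\C_l$ pattern and $2(p_{c,a,b}+p_{c,b,a})$ for the $\C_k\times\C_k\times\C_l$ pattern; substituting $p_{j,i}=\frac{1}{n-2}\sum_{c\neq i,j}(p_{c,j,i}+p_{j,c,i}+p_{j,i,c})$ and collecting, the multiplicities do aggregate to the displayed coefficients $n+1$, $n$ and $1$. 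One point your verification will surface: the totals come out to exactly \emph{one} copy of the displayed summand per unordered triple, and since the two-bucket summands are symmetric in their two same-bucket indices, the Cartesian products $\C_k\times\C_l\times\C_l$ and $\C_k\times\C_k\times\C_l$ in the statement must be read as counting each unordered same-bucket pair once --- a literal reading doubles those terms. This is a notational slip in the statement rather than in your derivation: the paper's own case-3 computation yields the contribution $\frac{2}{n-2}\left\{n(p_{b,c,a}+p_{c,b,a})+p_{b,a,c}+p_{c,a,b}\right\}$ once per triple $a<b<c$, and a Dirac example with $n=3$ and $\C=(\{1\},\{2,3\})$ confirms the single-count convention. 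With that convention fixed, your plan constitutes a complete and correct proof.
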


The proof is a straightforward consequence of the result stated below.

\begin{lemma}\label{lem:squared_spearman} Let $n\ge 3$ and $P$ be a probability distribution on $\mathfrak{S}_n$.
  \begin{itemize}
  \item[(i)] For any probability distribution $P'$ on $\mathfrak{S}_n$:
  $$
  W_{d_2,2}\left(P,P'  \right)
  \ge \frac{2}{n-2}\sum_{a<b<c}\left\{ \sum_{(i,j,k)\in\sigma(a,b,c)} \max(p_{i,j,k}, p'_{i,j,k}) - 1 \right\},
  $$
  where $\sigma(a,b,c)$ is the set of permutations of triplet $(a,b,c)$ and, for any $(i,j,k)\in\sigma(a,b,c)$, $p'_{i,j,k}=\mathbb{P}_{\Sigma\sim P'}\{\Sigma(i)<\Sigma(j)<\Sigma(k)\}$.
  \item[(ii)] If $P' \in \mathbf{P}_{\C}$ with $\C$ a bucket order of $\n$ with $K$ buckets:
  \begin{equation}
    \label{eq:Wspearman}
    \begin{split}
      &W_{d_2,2}\left(P,P'  \right) \ge\\
      &\frac{2}{n-2} \sum_{1\le k<l<m\le K} \sum_{(a,b,c)\in\C_k\times\C_l\times\C_m} (n+1)p_{c,b,a}+n(p_{b,c,a}+p_{c,a,b})+p_{b,a,c}+p_{a,c,b}\\
      &+ \frac{2}{n-2} \sum_{1\le k<l\le K} \Biggl\{ \sum_{(a,b,c)\in\C_k\times\C_l\times\C_l} n(p_{b,c,a}+p_{c,b,a})+p_{b,a,c}+p_{c,a,b}\\
      &\qquad\qquad\qquad\qquad\qquad\qquad+\sum_{(a,b,c)\in\C_k\times\C_k\times\C_l} n(p_{c,a,b}+p_{c,b,a})+p_{a,c,b}+p_{b,c,a}\Biggr\},
    \end{split}
  \end{equation}
  equality holding true when $P'=P_{\C}$, \textit{i.e.} when $P'$ is the distribution of $\Sigma_{\C}$.
\end{itemize}
\end{lemma}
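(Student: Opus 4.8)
The plan is to prove both assertions from a single combinatorial decomposition of the squared Spearman distance into triplet-wise contributions, and then to solve, for each triplet, a small optimal-transport problem on the six relative orders of that triplet. First I would rewrite each rank as $\sigma(i)=1+\sum_{j\neq i}\mathbb{I}\{\sigma(j)<\sigma(i)\}$, so that $\sigma(i)-\sigma'(i)=\sum_{j\neq i}\delta_{ij}$ with $\delta_{ij}=\mathbb{I}\{\sigma(j)<\sigma(i)\}-\mathbb{I}\{\sigma'(j)<\sigma'(i)\}\in\{-1,0,1\}$. Squaring and summing over $i$ splits $d_2^2(\sigma,\sigma')$ into a diagonal part $\sum_{i\neq j}\delta_{ij}^2=2\,d_\tau(\sigma,\sigma')$ and an off-diagonal part $\sum_i\sum_{j\neq k,\,j,k\neq i}\delta_{ij}\delta_{ik}$ indexed by genuine triplets. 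The key bookkeeping step is to spread the diagonal (pairwise) term evenly over the $n-2$ triplets containing each pair, which is exactly what produces the global factor $2/(n-2)$. Regrouping by unordered triplet yields $d_2^2(\sigma,\sigma')=\sum_{a<b<c}g_{abc}(\sigma,\sigma')$, where $g_{abc}$ depends only on the two relative orders $\Pi_{\{a,b,c\}}(\sigma)$ and $\Pi_{\{a,b,c\}}(\sigma')$. Specializing the identity to $n=3$ (where it must reduce to $d_2^2$ on $S_3$) shows that $g_{abc}$ depends only on the local Kendall distance $K\in\{0,1,2,3\}$ between the two orders, namely $g_{abc}=\tfrac{2}{n-2}\,c(K)$ with $c(0)=0$, $c(1)=1$, $c(2)=n$, $c(3)=n+1$; these integers are exactly the coefficients $1,n,n+1$ appearing in the statement.

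Taking expectations under an arbitrary coupling $(\Sigma,\Sigma')$ gives $\mathbb{E}[d_2^2(\Sigma,\Sigma')]=\sum_{a<b<c}\mathbb{E}[g_{abc}]$, and since each summand depends only on the induced coupling of the triplet marginals $\mu_{abc},\mu'_{abc}$, I obtain the reduction $W_{d_2,2}(P,P')\ge\sum_{a<b<c}\inf_\gamma\mathbb{E}_\gamma[g_{abc}]$, an infimum over couplings of $\mu_{abc}$ and $\mu'_{abc}$. For part (i) I use only $c(K)\ge 1$ for $K\ge 1$, so that $g_{abc}\ge\tfrac{2}{n-2}\mathbb{I}\{\Pi_{\{a,b,c\}}(\Sigma)\neq\Pi_{\{a,b,c\}}(\Sigma')\}$; the coupling characterization of total variation then gives $\inf_\gamma\mathbb{E}[g_{abc}]\ge\tfrac{2}{n-2}\,\mathrm{TV}(\mu_{abc},\mu'_{abc})$, and the elementary identity $\mathrm{TV}(\mu,\mu')=\sum_o\max(\mu_o,\mu'_o)-1$ rewrites this as the claimed bound.

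For part (ii) the refinement comes from keeping the exact weights $c(K)$ rather than bounding them by $1$. When $P'\in\mathbf{P}_{\C}$, the target marginal $\mu'_{abc}$ is supported only on the relative orders of $\{a,b,c\}$ compatible with the bucket partial order. I would bound each triplet term below by its cheapest-transport value $\sum_o\mu_{abc}(o)\min_t\mathrm{cost}(o\to t)$, where $t$ ranges over the compatible orders and $\mathrm{cost}(o\to t)=\tfrac{2}{n-2}c(d_\tau(o,t))$; this is a valid lower bound for every coupling, irrespective of the target weights. Splitting according to how $a,b,c$ distribute among buckets — all in distinct buckets (the target order is then forced and the coupling deterministic) or two in one bucket and one in another (two compatible target orders) — and recording which target is cheaper for each source order reproduces exactly the three sums in the statement. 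To conclude equality when $P'=P_{\C}$, I would verify that the canonical coupling $(\Sigma,\Sigma_{\C})$ sends each realized order $\Pi_{\{a,b,c\}}(\Sigma)$ to its cheapest compatible target and that the resulting target marginal is precisely $\mu_{abc}$ of $P_{\C}$; hence the per-source minima are attained simultaneously and $\Lambda'_P(\C)=\mathbb{E}[d_2^2(\Sigma,\Sigma_{\C})]$ equals the formula.

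The main obstacle is establishing the decomposition with its exact integer coefficients $\{0,1,n,n+1\}$ and the $2/(n-2)$ normalization, and then checking that the case-by-case evaluation of the cheapest-transport value matches, term by term, the three displayed sums. The conceptual crux inside part (ii) is recognizing that this cheapest-transport lower bound is simultaneously achievable: the greedy assignment sending each source order to its nearest compatible order is exactly the coupling induced by $(\Sigma,\Sigma_{\C})$ and carries the correct marginal, which is what turns the inequality into the asserted equality.
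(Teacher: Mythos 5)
Your proof is correct, but it takes a genuinely different and arguably cleaner route than the paper's. The paper computes $\mathbb{E}[d_2^2(\Sigma,\Sigma')]$ by expanding $\mathbb{E}[\Sigma(i)^2]$ and $\mathbb{E}[\Sigma(i)\Sigma'(i)]$ through $\sigma(i)=1+\sum_{j\neq i}\mathbb{I}\{\sigma(j)<\sigma(i)\}$, regroups everything into a functional $H(a,b,c)$ of conditional transition probabilities $\pi_{a,b,c|i,j,k}$, and maximizes $H$ subject to the coupling constraints, both for the general bound in (i) and through the same bucket-configuration case analysis for (ii). You instead establish the deterministic identity $d_2^2(\sigma,\sigma')=\frac{2}{n-2}\sum_{a<b<c}c(K_{abc})$ with $c(0)=0$, $c(1)=1$, $c(2)=n$, $c(3)=n+1$, where $K_{abc}$ is the Kendall distance between the induced $3$-orders; this checks out: the cross term $T_{abc}=\delta_{ab}\delta_{ac}+\delta_{ba}\delta_{bc}+\delta_{ca}\delta_{cb}$ equals $0,0,1,1$ for $K=0,1,2,3$ (on both relabeling orbits when $K\in\{1,2\}$), giving $c(K)=K+(n-2)T$, and the sanity checks ($d_2^2=2$ for an adjacent transposition, $n(n^2-1)/3$ for the full reversal) confirm the normalization. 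Part (i) then follows from $c(K)\geq \mathbb{I}\{K\geq 1\}$, the coupling characterization of total variation, and the identity $\mathrm{TV}(\mu,\mu')=\sum_{o}\max(\mu_o,\mu'_o)-1$; part (ii) from your per-source cheapest-compatible-target bound — I verified that the six per-source minima in each of the three bucket configurations reproduce exactly the displayed coefficients (e.g., for $(a,b,c)\in\C_k\times\C_l\times\C_l$, source $bca$ has distance $2$ to $abc$ and $3$ to $acb$, hence cost $n$, matching $np_{b,c,a}$), and that the canonical coupling $(\Sigma,\Sigma_{\C})$ attains every per-source minimum simultaneously, which gives the asserted equality for $P'=P_{\C}$. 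What your route buys: the constants $1,n,n+1$ and the factor $2/(n-2)$ become transparent, the Wasserstein infimum factorizes into independent $6\times 6$ transport problems on $\mathfrak{S}_3$, and the lower bound in (i) becomes a one-line total-variation argument, where the paper's maximization of $H(a,b,c)$ is computationally opaque. The one step you assert rather than prove — that $g_{abc}$ depends on the pair of induced $3$-orders only through their Kendall distance — is not automatic and does require the finite verification over the orbits (the two $K=1$ types and the two $K=2$ types separately), since $T_{abc}$ is a priori a function of the orbit, not of $K$; it does hold, so once that check is written out your argument is complete.
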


\begin{proof} We start with proving the first assertion.

\noindent{\bf (i).}
Consider a coupling $(\Sigma,\Sigma')$ of two probability distributions $P$ and $P'$ on $\mathfrak{S}_n$.
Define the triplet-wise probabilities $p_{i,j,k}=\mathbb{P}_{\Sigma\sim P}\{\Sigma(i)<\Sigma(j)<\Sigma(k)\}$ and $p'_{i,j,k}=~\mathbb{P}_{\Sigma'\sim P'}\{\Sigma'(i)<\Sigma'(j)<\Sigma'(k)\}$.
For clarity's sake, we will assume that $\tilde{p}_{i,j,k} = \min(p_{i,j,k},p'_{i,j,k}) >0$ for all triplets $(i,j,k)$, the extension to the general case being straightforward.
We also denote $\bar{p}_{i,j,k} = \max(p_{i,j,k},p'_{i,j,k})$.
Given two pairs of three distinct elements of $\n$, $(i, j, k)$ and $(a, b, c)$, we define the following quantities:
$$
\pi_{a,b,c|i,j,k} = \mathbb{P}\left\{ \Sigma'(a)<\Sigma'(b)<\Sigma'(c)\mid \Sigma(i)<\Sigma(j)<\Sigma(k) \right\},
$$
$$
\pi'_{a,b,c|i,j,k} = \mathbb{P}\left\{ \Sigma(a)<\Sigma(b)<\Sigma(c)\mid \Sigma'(i)<\Sigma'(j)<\Sigma'(k) \right\},
$$
$$
\tilde{\pi}_{a,b,c|i,j,k} = \pi_{a,b,c|i,j,k}\mathbb{I}\{p_{i,j,k}\le p'_{i,j,k}\} + \pi'_{a,b,c|i,j,k}\mathbb{I}\{p_{i,j,k} > p'_{i,j,k}\},
$$
$$
\bar{\pi}_{a,b,c|i,j,k} = \pi_{a,b,c|i,j,k}\mathbb{I}\{p_{i,j,k} > p'_{i,j,k}\} + \pi'_{a,b,c|i,j,k}\mathbb{I}\{p_{i,j,k} \le p'_{i,j,k}\}.
$$
The motivation for defining the $\tilde{\pi}_{a,b,c|i,j,k}$'s is that the coupling condition $\tilde{\pi}_{i,j,k|i,j,k}=1$, which implies $\bar{\pi}_{i,j,k|i,j,k} = \frac{\tilde{p}_{i,j,k}}{\bar{p}_{i,j,k}}$,
is always feasible. By contrast, it necessarily holds that $\pi_{i,j,k|i,j,k}<1$ (resp. $\pi'_{i,j,k|i,j,k}<1$) when $p'_{i,j,k}<p_{i,j,k}$ (resp. $p_{i,j,k}<p'_{i,j,k}$).
Throughout the proof, the triplets $(a, b, c)$ always correspond to permutations of $(i, j, k)$.
Now write:
$$
\mathbb{E}\left[ d_2\left( \Sigma,\Sigma' \right)^2\right] = \sum_{i=1}^n \E[\Sigma(i)^2] + \E[\Sigma'(i)^2] - 2 \E[\Sigma(i) \Sigma'(i)]\\,
$$
where
$$
\E[\Sigma(i)^2] = \E[(1+\sum_{j\neq i}\mathbb{I}\{\Sigma(j)<\Sigma(i)\})^2] = 1 + \sum_{j\neq i} (n+1) p_{j,i} - \sum_{k\neq i,j} p_{j,i,k}
$$
and
\begin{multline*}
    \E[\Sigma(i) \Sigma'(i)] = 1 + \sum_{j\neq i} p_{j,i}+p'_{j,i}+\mathbb{P}\{\Sigma(j)<\Sigma(i),\Sigma'(j)<\Sigma'(i)\} \\
    + \sum_{k\neq i,j} \mathbb{P}\{\Sigma(j)<\Sigma(i),\Sigma'(k)<\Sigma'(i)\}.
\end{multline*}
Hence,
\begin{equation}
  \label{eq:spearman_triplets}
  \begin{split}
    \mathbb{E}\left[ d_2\left( \Sigma,\Sigma' \right)^2\right] =
    \sum_{a<b<c} \sum_{(i,j,k)\in \sigma(a,b,c)} & \frac{1}{n-2}\left\{(n-1) (p_{j,i}+p'_{j,i})-2\mathbb{P}\{\Sigma(j)<\Sigma(i),\Sigma'(j)<\Sigma'(i)\}\right\}\\
    &-p_{j,i,k}-p'_{j,i,k}-2\mathbb{P}\{\Sigma(j)<\Sigma(i),\Sigma'(k)<\Sigma'(i)\},
  \end{split}
\end{equation}
where $\sigma(a,b,c)$ is the set of the $6$ permutations of triplet $(a,b,c)$.
Some terms involved in Eq. (\ref{eq:spearman_triplets}) can be simplified when summing over $\sigma(a,b,c)$, namely:
$$
\sum_{(i,j,k)\in \sigma(a,b,c)} \frac{n-1}{n-2}(p_{j,i}+p'_{j,i})-p_{j,i,k}-p'_{j,i,k} = \frac{4n-2}{n-2}.
$$
We now simply have:
\begin{equation}
  \begin{split}
    \mathbb{E}\left[ d_2\left( \Sigma,\Sigma' \right)^2\right] =
    \sum_{a<b<c} \frac{4n-2}{n-2} - 2\sum_{(i,j,k)\in \sigma(a,b,c)} & \frac{1}{n-2}\mathbb{P}\{\Sigma(j)<\Sigma(i),\Sigma'(j)<\Sigma'(i)\}\\
    &+\mathbb{P}\{\Sigma(j)<\Sigma(i),\Sigma'(k)<\Sigma'(i)\}.
  \end{split}
\end{equation}
Observe that for all triplets $(a,b,c)$ and $(i,j,k)$:
\begin{multline*}
\mathbb{P}(\Sigma'(a)<\Sigma'(b)<\Sigma'(c), \Sigma(i)<\Sigma(j)<\Sigma(k))
    \\+ \mathbb{P}(\Sigma'(i)<\Sigma'(j)<\Sigma'(k), \Sigma(a)<\Sigma(b)<\Sigma(c))
    = \pi_{a,b,c|i,j,k}p_{i,j,k} + \pi'_{a,b,c|i,j,k}p'_{i,j,k}.
\end{multline*}
Then, by the law of total probability, we have for all distinct $i,\; j,\; k$,
\begin{multline*}
    \mathbb{P}\{\Sigma(j)<\Sigma(i),\Sigma'(j)<\Sigma'(i)\}
    = \frac{1}{2}\{\pi_{j,k,i|j,k,i}p_{j,k,i}+\pi'_{j,k,i|j,k,i}p'_{j,k,i}\}\\
    + \frac{1}{2}\{\pi_{k,j,i|k,j,i}p_{k,j,i}+\pi'_{k,j,i|k,j,i}p'_{k,j,i}\}
    + \frac{1}{2}\{\pi_{j,i,k|j,i,k}p_{j,i,k}+\pi'_{j,i,k|j,i,k}p'_{j,i,k}\}\\
    + \frac{1}{2}\{\pi_{j,i,k|j,k,i}p_{j,k,i}+\pi'_{j,i,k|j,k,i}p'_{j,k,i}+\pi_{j,k,i|j,i,k}p_{j,i,k}+\pi'_{j,k,i|j,i,k}p'_{j,i,k}\}\\
    + \frac{1}{2}\{\pi_{k,j,i|j,k,i}p_{j,k,i}+\pi'_{k,j,i|j,k,i}p'_{j,k,i}+\pi_{j,k,i|k,j,i}p_{k,j,i}+\pi'_{j,k,i|k,j,i}p'_{k,j,i}\}\\
    + \frac{1}{2}\{\pi_{j,i,k|k,j,i}p_{k,j,i}+\pi'_{j,i,k|k,j,i}p'_{k,j,i} + \pi_{k,j,i|j,i,k}p_{j,i,k}+\pi'_{k,j,i|j,i,k}p'_{j,i,k}\},
\end{multline*}
and
\begin{multline*}
    \mathbb{P}\{\Sigma(j)<\Sigma(i),\Sigma'(k)<\Sigma'(i)\}
    = \frac{1}{2}\{\pi_{j,k,i|j,k,i}p_{j,k,i} + \pi'_{j,k,i|j,k,i}p'_{j,k,i}\}\\
    + \frac{1}{2}\{\pi_{k,j,i|k,j,i} p_{k,j,i} + \pi'_{k,j,i|k,j,i} p'_{k,j,i}\}\\
    + \frac{1}{2}\{\pi_{k,j,i|j,k,i}p_{j,k,i} + \pi'_{k,j,i|j,k,i}p'_{j,k,i} + \pi_{j,k,i|k,j,i}p_{k,j,i} + \pi'_{j,k,i|k,j,i}p'_{k,j,i}\}\\
    + \mathbb{P}\{\Sigma'(j)<\Sigma'(k)<\Sigma'(i), \Sigma(j)<\Sigma(i)<\Sigma(k)\}\\
    + \mathbb{P}\{\Sigma'(k)<\Sigma'(i)<\Sigma'(j), \Sigma(j)<\Sigma(k)<\Sigma(i)\}\\
    + \mathbb{P}\{\Sigma'(k)<\Sigma'(j)<\Sigma'(i), \Sigma(j)<\Sigma(i)<\Sigma(k)\}\\
    + \mathbb{P}\{\Sigma'(k)<\Sigma'(i)<\Sigma'(j), \Sigma(k)<\Sigma(j)<\Sigma(i)\}\\
    + \mathbb{P}\{\Sigma'(k)<\Sigma'(i)<\Sigma'(j), \Sigma(j)<\Sigma(i)<\Sigma(k)\},
\end{multline*}
which implies:
\begin{multline*}
  \mathbb{P}\{\Sigma(j)<\Sigma(i),\Sigma'(k)<\Sigma'(i)\} + \mathbb{P}\{\Sigma(k)<\Sigma(i),\Sigma'(j)<\Sigma'(i)\}\\
  = \pi_{j,k,i|j,k,i}p_{j,k,i} + \pi'_{j,k,i|j,k,i}p'_{j,k,i}
  + \pi_{k,j,i|k,j,i} p_{k,j,i} + \pi'_{k,j,i|k,j,i} p'_{k,j,i}\\
  + \pi_{k,j,i|j,k,i}p_{j,k,i} + \pi'_{k,j,i|j,k,i}p'_{j,k,i} + \pi_{j,k,i|k,j,i}p_{k,j,i} + \pi'_{j,k,i|k,j,i}p'_{k,j,i}\\
  + \frac{1}{2}\left\{\pi_{j,k,i|j,i,k}p_{j,i,k} + \pi'_{j,k,i|j,i,k}p'_{j,i,k} + \pi_{j,i,k|j,k,i}p_{j,k,i} + \pi'_{j,i,k|j,k,i}p'_{j,k,i}\right\}\\
  + \frac{1}{2}\left\{\pi_{k,i,j|j,k,i}p_{j,k,i} + \pi'_{k,i,j|j,k,i}p'_{j,k,i} + \pi_{j,k,i|k,i,j}p_{k,i,j} + \pi'_{j,k,i|k,i,j}p'_{k,i,j}\right\}\\
  + \frac{1}{2}\left\{\pi_{k,j,i|j,i,k}p_{j,i,k} + \pi'_{k,j,i|j,i,k}p'_{j,i,k} + \pi_{j,i,k|k,j,i}p_{k,j,i} + \pi'_{j,i,k|k,j,i}p'_{k,j,i}\right\}\\
  + \frac{1}{2}\left\{\pi_{k,i,j|k,j,i}p_{k,j,i} + \pi'_{k,i,j|k,j,i}p'_{k,j,i} + \pi_{k,j,i|k,i,j}p_{k,i,j} + \pi'_{k,j,i|k,i,j}p'_{k,i,j}\right\}\\
  + \frac{1}{2}\left\{\pi_{k,i,j|j,i,k}p_{j,i,k} + \pi'_{k,i,j|j,i,k}p'_{j,i,k} + \pi_{j,i,k|k,i,j}p_{k,i,j} + \pi'_{j,i,k|k,i,j}p'_{k,i,j}\right\},
\end{multline*}
which is invariant under permutation of the indices $j$ and $k$.
Hence,
\begin{equation}
  \label{eq:Habc}
  \begin{split}
    &H(a,b,c) = \sum_{(i,j,k)\in \sigma(a,b,c)} \frac{1}{n-2}\mathbb{P}\{\Sigma(j)<\Sigma(i),\Sigma'(j)<\Sigma'(i)\}
    +\mathbb{P}\{\Sigma(j)<\Sigma(i),\Sigma'(k)<\Sigma'(i)\}\\
    &= \sum_{(i,j,k)\in \sigma(a,b,c)} \Biggl\{\frac{2n-1}{2(n-2)}\tilde{\pi}_{j,k,i|j,k,i} + \frac{n-1}{n-2}(\tilde{\pi}_{k,j,i|j,k,i}+ \tilde{\pi}_{j,i,k|j,k,i})\\
    &\qquad\qquad\qquad\qquad + \frac{n-1}{2(n-2)}( \tilde{\pi}_{k,i,j|j,k,i} + \tilde{\pi}_{i,j,k|j,k,i} ) + \frac{1}{2}\tilde{\pi}_{i,k,j|j,k,i} \Biggr\}\tilde{p}_{j,k,i}\\
    &+ \Biggl\{\frac{2n-1}{2(n-2)}\bar{\pi}_{j,k,i|j,k,i} + \frac{n-1}{n-2}(\bar{\pi}_{k,j,i|j,k,i} + \bar{\pi}_{j,i,k|j,k,i}) + \frac{n-1}{2(n-2)}( \bar{\pi}_{k,i,j|j,k,i} + \bar{\pi}_{i,j,k|j,k,i} ) + \frac{1}{2}\bar{\pi}_{i,k,j|j,k,i} \Biggr\}\bar{p}_{j,k,i},
  \end{split}
\end{equation}
which is maximum when $\tilde{\pi}_{j,k,i|j,k,i}=1$ (which implies $\bar{\pi}_{j,k,i|j,k,i}=\frac{\tilde{p}_{j,k,i}}{\bar{p}_{j,k,i}}$) and $\bar{\pi}_{k,j,i|j,k,i}+\bar{\pi}_{j,i,k|j,k,i} = 1 - \frac{\tilde{p}_{j,k,i}}{\bar{p}_{j,k,i}}$
for all $(i,j,k)\in\sigma(a,b,c)$ and then verifies:
\begin{equation}
\label{eq:Habc_max}
\begin{split}
  H(a,b,c) &\le \sum_{(i,j,k)\in \sigma(a,b,c)} \frac{n}{n-2}\tilde{p}_{i,j,k} + \frac{n-1}{n-2}\bar{p}_{i,j,k}
  = \frac{1}{n-2}\sum_{(i,j,k)\in \sigma(a,b,c)} n(p_{i,j,k}+p'_{i,j,k}) - \bar{p}_{i,j,k}\\
  &= \frac{1}{n-2}\left\{ 2n - \sum_{(i,j,k)\in \sigma(a,b,c)} \bar{p}_{i,j,k}\right\},
\end{split}
\end{equation}
which concludes the first part of the proof.

\noindent{\bf (ii).}
Now consider the particular case $P'\in \mathbf{P}_{\C}$, with $\C$ a bucket order of $\n$ with $K$ buckets.
We propose to prove that $\min_{P'\in\mathbf{P}_{\C}} W_{d_2,2}(P,P')= W_{d_2,2}(P,P_{\C}) =\E[d_2^2(\Sigma,\Sigma_{\C})]$ and to obtain an explicit expression.
Given three distinct indices $(a, b, c)\in\n^3$, we consider the following four possible scenarios.\\

\noindent {\bf Case $1$: $(a,b,c)\in \C_q^3$ are in the same bucket.} The maximizing conditions for $H(a,b,c)$ in Eq. (\ref{eq:Habc}) are $\tilde{\pi}_{j,k,i|j,k,i}=1$ and $\bar{\pi}_{k,j,i|j,k,i}+\bar{\pi}_{j,i,k|j,k,i} = 1 - \frac{\tilde{p}_{j,k,i}}{\bar{p}_{j,k,i}}$ for all $(i,j,k)\in\sigma(a,b,c)$.
All are verified when $\Sigma'=\Sigma_{\C}$ as $\Sigma(i)<\Sigma(j)<\Sigma(k)$ iff $\Sigma_{\C}(i)<\Sigma_{\C}(j)<\Sigma_{\C}(k)$.
Hence:
$$
H(a,b,c) \le \frac{2n-1}{n-2},
$$
with equality when $\Sigma'=\Sigma_{\C}$.\\

\noindent {\bf Case $2$: $(a,b,c)\in \C_q\times\C_r\times\C_s$ are in three different buckets (e.g. $q<r<s$).}
%This situation is fully characterized by the bucket structure and is hence independent of the coupling $(\Sigma,\Sigma')$.
For all $(j,k,i)\in\sigma(a,b,c)\setminus\{(a,b,c)\}$, $p'_{j,k,i}=\tilde{p}_{j,k,i}=0$.
Hence, $H(a,b,c)$ writes without the terms related to the five impossible events $\Sigma'(j)<\Sigma'(k)<\Sigma'(i)$.
Moreover, $\bar{p}_{j,k,i}=p_{j,k,i}$ and $\bar{\pi}_{a,b,c|j,k,i}=1$ so the sum of the corresponding contributions in $H(a,b,c)$ is:
\begin{equation}
\label{eq:3diff_buckets_1}
  \frac{n-1}{n-2}(p_{b,a,c}+p_{a,c,b}) + \frac{n-1}{2(n-2)}(p_{b,c,a}+p_{c,a,b}) + \frac{1}{2}p_{c,b,a}.
\end{equation}
We have $p_{a,b,c}\le p'_{a,b,c}=1$ so $\tilde{\pi}_{a,b,c|a,b,c}=1$
and for all $(i,j,k)\in\sigma(a,b,c)$, $\bar{\pi}_{i,j,k|a,b,c}=p_{i,j,k}$. The sum of the corresponding contributions in $H(a,b,c)$ is:
\begin{equation}
  \label{eq:3diff_buckets_2}
  \frac{2n-1}{n-2}p_{a,b,c} + \frac{n-1}{n-2}(p_{b,a,c}+p_{a,c,b}) + \frac{n-1}{2(n-2)}(p_{b,c,a}+p_{c,a,b}) + \frac{1}{2}p_{c,b,a}.
\end{equation}
Finally, by summing expressions (\ref{eq:3diff_buckets_1}) and (\ref{eq:3diff_buckets_2}),
$$
H(a,b,c) = \frac{2n-1}{n-2}p_{a,b,c} + \frac{2(n-1)}{n-2}(p_{b,a,c}+p_{a,c,b}) + \frac{n-1}{n-2}(p_{b,c,a}+p_{c,a,b}) + p_{c,b,a}.
$$

\noindent {\bf Case $3$: $(a,b,c)\in \C_q\times\C_r\times\C_r$ are in two different buckets such that one item (here $a$) is ranked first among the triplet (i.e. $q<r$).}
For all $(j,k,i)\in\sigma(a,b,c)\setminus\{(a,b,c),(a,c,b)\}$, $p'_{j,k,i}=\tilde{p}_{j,k,i}=0$.
Hence, $H(a,b,c)$ writes without the terms related to the four impossible events $\Sigma'(j)<\Sigma'(k)<\Sigma'(i)$.
For all $(j,k,i)\in\sigma(a,b,c)$, $\pi_{a,b,c|j,k,i}+\pi_{a,c,b|j,k,i}=1$ and the sum of the corresponding contributions in $H(a,b,c)$ is:
\begin{equation}
  \label{eq:2diff_buckets_0}
  \begin{split}
    &\left(\frac{2n-1}{2(n-2)}\pi_{a,b,c|a,b,c} + \frac{n-1}{n-2}\pi_{a,c,b|a,b,c}\right)p_{a,b,c}
    +\left(\frac{2n-1}{2(n-2)}\pi_{a,c,b|a,c,b} + \frac{n-1}{n-2}\pi_{a,b,c|a,c,b}\right)p_{a,c,b}\\
    &+\left(\frac{n-1}{2(n-2)}\pi_{a,b,c|b,c,a} + \frac{1}{2}\pi_{a,c,b|b,c,a}\right)p_{b,c,a}
    +\left(\frac{n-1}{n-2}\pi_{a,b,c|b,a,c} + \frac{n-1}{2(n-2)}\pi_{a,c,b|b,a,c}\right)p_{b,a,c}\\
    &+\left(\frac{n-1}{2(n-2)}\pi_{a,c,b|c,b,a} + \frac{1}{2}\pi_{a,b,c|c,b,a}\right)p_{c,b,a}
    +\left(\frac{n-1}{n-2}\pi_{a,c,b|c,a,b} + \frac{n-1}{2(n-2)}\pi_{a,b,c|c,a,b}\right)p_{c,a,b}.
  \end{split}
\end{equation}
Observe that the expression above is maximum when $\pi_{a,b,c|a,b,c}=\pi_{a,c,b|a,c,b}=\pi_{a,b,c|b,c,a}=\pi_{a,b,c|b,a,c}=\pi_{a,c,b|c,b,a}=\pi_{a,c,b|c,a,b}=1$,
which is verified if $\Sigma'=\Sigma_{\C}$. In this case, (\ref{eq:2diff_buckets_0}) writes:
\begin{equation}
\label{eq:2diff_buckets_1}
    \frac{2n-1}{2(n-2)}(p_{a,b,c}+p_{a,c,b})+\frac{n-1}{n-2}(p_{b,a,c}+p_{c,a,b}) + \frac{n-1}{2(n-2)}(p_{b,c,a}+p_{c,b,a}).
\end{equation}
Now consider $(j,k,i)\in\{(a,b,c),(a,c,b)\}$: $p'_{a,b,c} + p'_{a,c,b} = 1$ and the corresponding contributions in $H(a,b,c)$ sum as follows:
\begin{equation*}
  \begin{split}
    &\Biggl\{\frac{2n-1}{2(n-2)}\pi'_{a,b,c|a,b,c} + \frac{n-1}{n-2}(\pi'_{b,a,c|a,b,c}+\pi'_{a,c,b|a,b,c})\\
    &+ \frac{n-1}{2(n-2)}(\pi'_{b,c,a|a,b,c}+\pi'_{c,a,b|a,b,c}) + \frac{1}{2}\pi'_{c,b,a|a,b,c}\Biggr\}p'_{a,b,c}\\
    & +\Biggl\{\frac{2n-1}{2(n-2)}\pi'_{a,c,b|a,c,b} + \frac{n-1}{n-2}(\pi'_{c,a,b|a,c,b}+\pi'_{a,b,c|a,c,b})\\
    &+ \frac{n-1}{2(n-2)}(\pi'_{c,b,a|a,c,b}+\pi'_{b,a,c|a,c,b}) + \frac{1}{2}\pi'_{b,c,a|a,c,b}\Biggr\}p'_{a,c,b},
  \end{split}
\end{equation*}
which is maximum when $\pi'_{a,c,b|a,b,c}=\pi'_{c,a,b|a,b,c}=\pi'_{c,b,a|a,b,c}=0$
and $\pi'_{a,b,c|a,c,b}=\pi'_{b,a,c|a,c,b}=\pi'_{b,c,a|a,c,b}=0$: both conditions are verified for $\Sigma'=\Sigma_{\C}$.
Then, the expression above is upper bounded by:
\begin{equation}
  \label{eq:2diff_buckets_2}
  \frac{2n-1}{2(n-2)}(p_{a,b,c}+p_{a,c,b})+\frac{n-1}{n-2}(p_{b,a,c}+p_{c,a,b}) + \frac{n-1}{2(n-2)}(p_{b,c,a}+p_{c,b,a}),
\end{equation}
with equality when $\Sigma'=\Sigma_{\C}$.
Finally, by summing (\ref{eq:2diff_buckets_1}) and (\ref{eq:2diff_buckets_2}),
$$
H(a,b,c) \le \frac{2n-1}{n-2}(p_{a,b,c}+p_{a,c,b}) + \frac{2(n-1)}{n-2}(p_{b,a,c}+p_{c,a,b}) + \frac{n-1}{n-2}(p_{b,c,a}+p_{c,b,a}),
$$
with equality when $\Sigma'=\Sigma_{\C}$.\\

\noindent {\bf Case $4$: $(a,b,c)\in \C_q\times\C_q\times\C_r$ are in two different buckets such that one item (here $c$) is ranked last among the triplet (i.e. $q<r$).}
By symmetry with the previous situation, we obtain:
$$
H(a,b,c) \le \frac{2n-1}{n-2}(p_{a,b,c}+p_{b,a,c}) + \frac{2(n-1)}{n-2}(p_{a,c,b}+p_{b,c,a}) + \frac{n-1}{n-2}(p_{c,a,b}+p_{c,b,a}),
$$
with equality when $\Sigma'=\Sigma_{\C}$.

\end{proof}

\section*{Appendix D - Experiments on toy datasets}

\begin{figure}[h!]
	\centering
	\includegraphics[width=.325\textwidth]{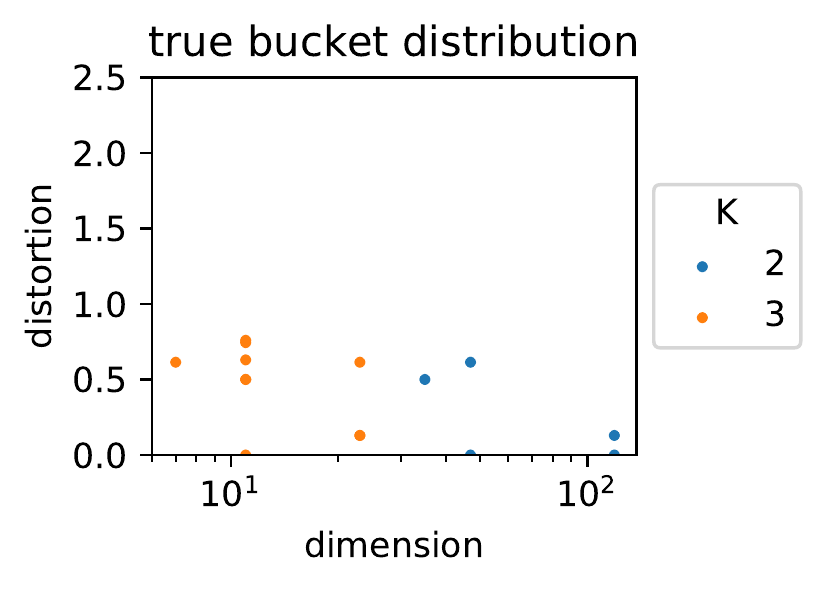}
	\includegraphics[width=.325\textwidth]{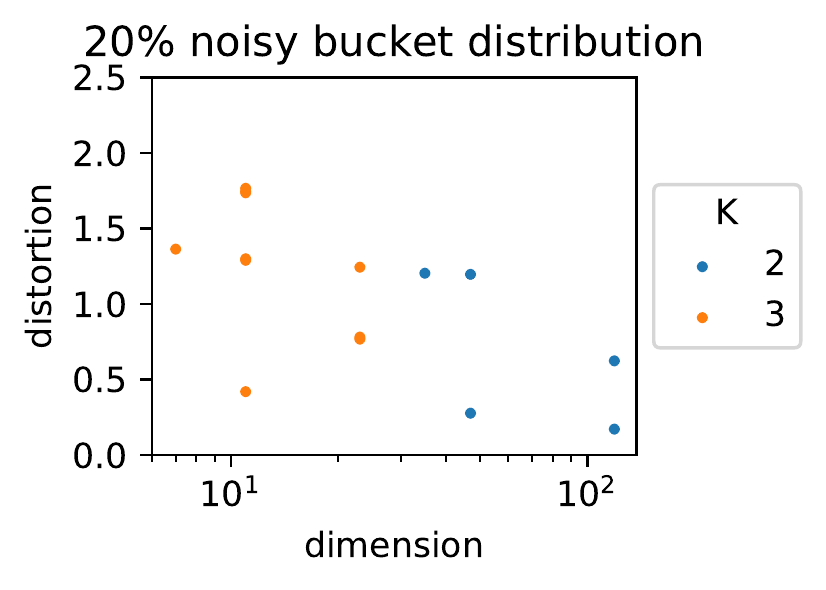}\hfill
	\includegraphics[width=.335\textwidth]{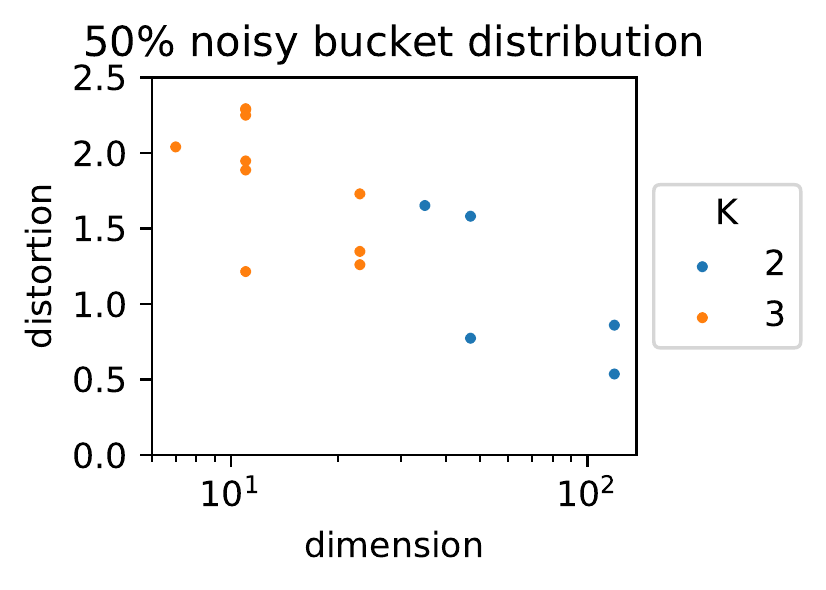}\hfill
	\caption{Dimension-Distortion plot for different bucket sizes on simulated datasets.}
	\label{fig:simulated_data}
\end{figure}

\noindent We now provide an illustration of the notions we introduced in this paper, in particular of a bucket distribution and of our distortion criteria. For $n=6$ items, we fixed a bucket order $\C=(\C_1, \C_2, \C_3)$ of shape $\lambda=(2,3,1)$ and considered a bucket distribution $P\in\P_{\C}$. Specifically, $P$ is the uniform distribution over all the permutations extending the bucket order $\C$ and has thus its pairwise marginals such that $p_{j,i}=0$ as soon as $(i,j)\in \C_k \times \C_l$ with $k<l$. In Figure \ref{fig:simulated_data}, the first plot on the left is a scatterplot of all buckets of size $K\in\{2,3\}$ where for any bucket $\C'$ of size $K$, the horizontal axis is the distortion $\Lambda_{P}(\C')$ (see \eqref{eq:mt_criterion}) and the vertical axis is the dimension of $\P_{\C'}$ in log scale. On the left plot, one can see that one bucket of size $K=3$ attains a null distortion, i.e. when $\C'=\C$, and two buckets of size $K=2$ as well, i.e. when $\C'=(\C_1 \cup \C_2, \C_3)$ and when $\C'=(\C_1, \C_2 \cup \C_3)$. Then, a dataset of 2000 samples from $P$ was drawn, and for a certain part of the samples, a pair of items was randomly swapped within the sample. The middle and right plot thus represent the empirical distortions $\widehat{\Lambda}_N(\C')$ for any $\C'$ computed on these datasets, where respectively 20\% and 50\% of the samples were contaminated. One can notice that the distortion is increasing with the noise, still, the best bucket of size $3$ remains $\C'=\C$. However, the buckets $\C'$ attaining the minimum distortion in the noisy case are of size $2$, because the distortion involves a smaller number of terms $\kappa(\lambda_{\C'})$ for a smaller size.

\begin{figure}[h!]
	\centering
	\begin{tabular}{cc}
		\includegraphics[width=.325\textwidth]{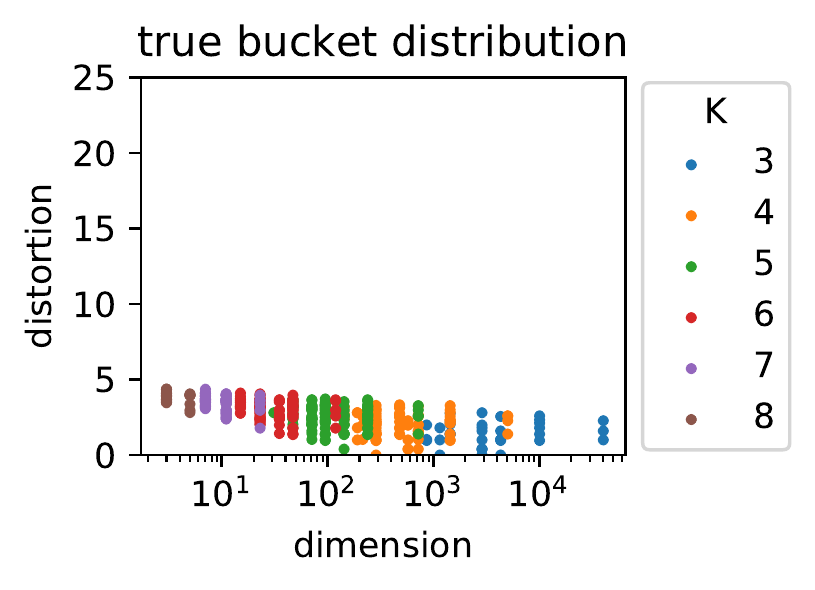} &
		\includegraphics[width=.325\textwidth]{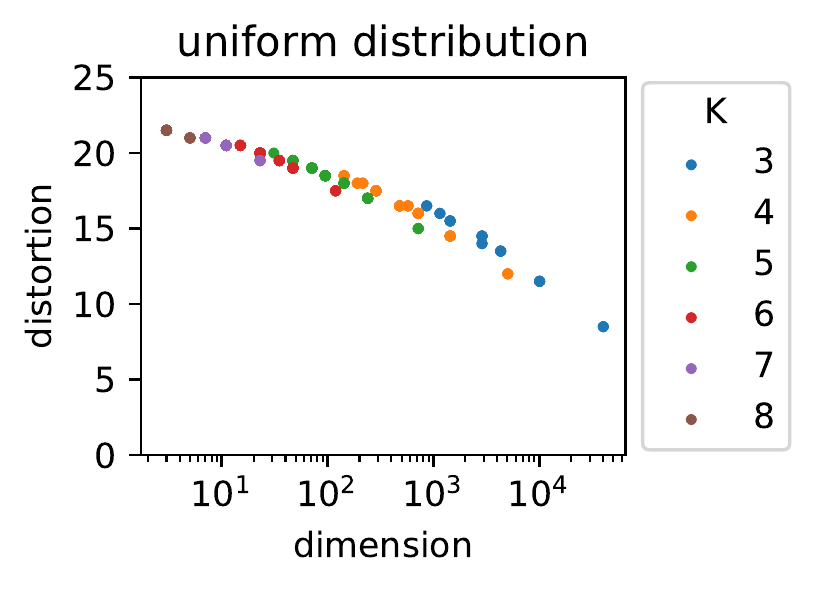}\hfill\\
		\includegraphics[width=.325\textwidth]{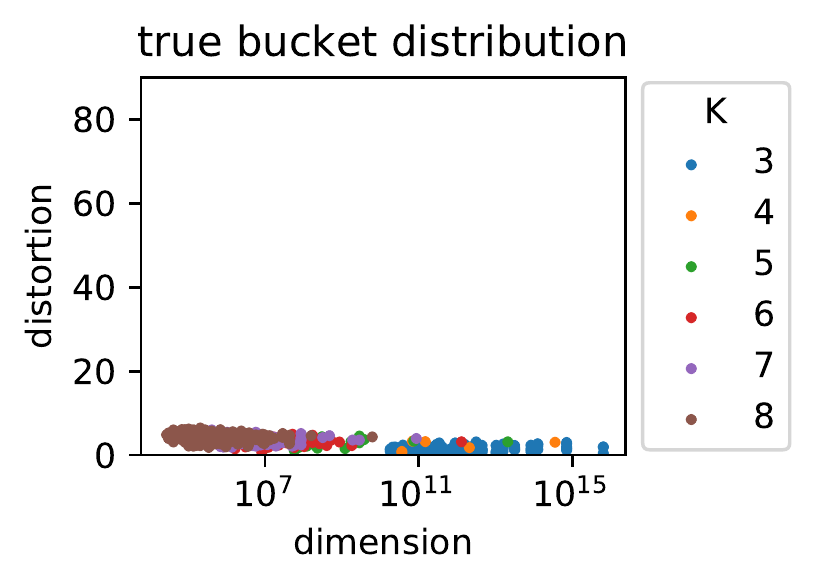}&
		\includegraphics[width=.325\textwidth]{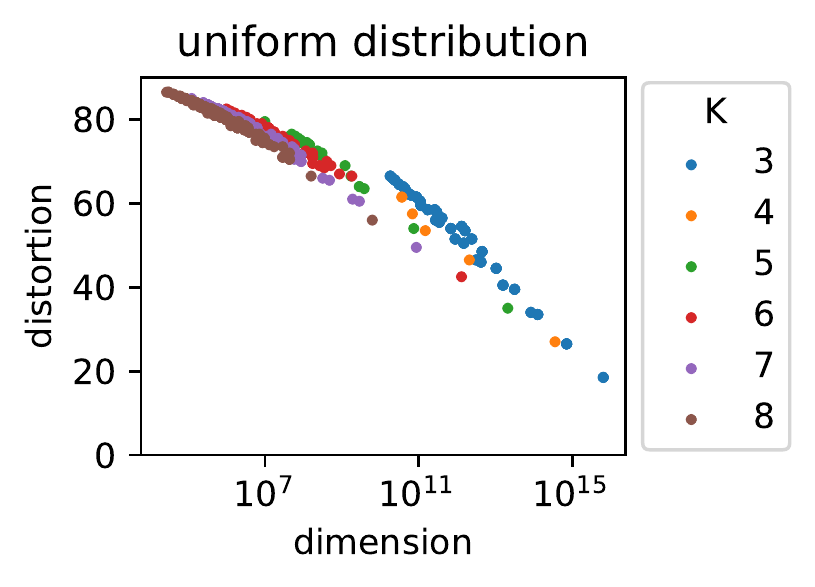}\hfill
	\end{tabular}
	\caption{Dimension-Distortion plot for a true bucket distribution versus a uniform distribution ($n=10$ on top and $n=20$ below).}
		\label{fig:bucket_vs_uniform}
\end{figure}

\noindent We now perform a second experiment. We want to compare the distortion versus dimension graph for a true bucket distribution (i.e., for a collection of pairwise marginals that respect a bucket order) and for a uniform distribution (i.e., a collection of pairwise marginals where $p_{j,i}=0.5$ for all $i,j$). This corresponds to the plots on Figure~\ref{fig:bucket_vs_uniform}. One can notice that the points are much more spread for a true bucket distribution, since some buckets will attain a very low distortion (those who agree with the true one) while some have a high distortion. In contrast, for a uniform distribution, all the buckets will perform relatively in the same way, and the scatter plot is much more compact.

\end{document}